\documentclass[]{fairmeta}
\usepackage{fontawesome5}
\usepackage{xspace}
\usepackage{amsfonts}
\usepackage{amsmath}
\usepackage{amsthm}
\usepackage{algorithm}
\usepackage{algpseudocode}
\usepackage{enumitem}
\usepackage{nicefrac}
\usepackage{colortbl}
\usepackage{wrapfig}
\usepackage{listings}
\usepackage{pgfplots}
\usetikzlibrary{calc}
\usetikzlibrary{arrows.meta,positioning,fit,shapes.geometric,backgrounds}
\usepgfplotslibrary{groupplots}
\pgfplotsset{compat=1.18}
\usepackage{placeins}
\usepackage{float}
\usepackage{pifont}
\usepackage{marvosym}

\newtheorem{proposition}{Proposition}
\newtheorem{remark}{Remark}

\definecolor{codebg}{HTML}{F7F7F8}
\definecolor{codegreen}{HTML}{1B5E20}
\definecolor{codepurple}{HTML}{4A148C}
\definecolor{codegray}{HTML}{757575}
\definecolor{mydarkblue}{HTML}{1A4F8B}

\definecolor{googleblue}{HTML}{4285F4}
\definecolor{googlered}{HTML}{EA4335}
\definecolor{googleyellow}{HTML}{FBBC04}
\definecolor{googlegreen}{HTML}{34A853}
\definecolor{googlebluedark}{HTML}{1A73E8}
\definecolor{googlebluedeep}{HTML}{174EA6}
\definecolor{metabg}{HTML}{F8FAFD}  
\definecolor{metafg}{HTML}{202124}  
\definecolor{metablue}{HTML}{1A73E8} 

\hypersetup{
  colorlinks=true,
  linkcolor=mydarkblue,
  citecolor=mydarkblue,
  urlcolor=mydarkblue,
  filecolor=mydarkblue
}

\lstdefinestyle{pythonstyle}{
  language=Python,
  backgroundcolor=\color{codebg},
  basicstyle=\ttfamily\scriptsize,
  keywordstyle=\color{codepurple}\bfseries,
  commentstyle=\color{codegray}\itshape,
  stringstyle=\color{codegreen},
  numbers=left,
  numberstyle=\tiny\color{codegray},
  numbersep=6pt,
  frame=single,
  rulecolor=\color{codegray!30},
  breaklines=true,
  showstringspaces=false,
  tabsize=4,
  xleftmargin=1.5em,
  framexleftmargin=1em,
}

\newtcolorbox{problembox}{
  enhanced,
  colback=mydarkblue!4,
  colframe=mydarkblue!22,
  boxrule=0.4pt,
  arc=1.2mm,
  left=1.2mm,
  right=1.2mm,
  top=0.9mm,
  bottom=0.9mm,
  borderline west={1.4pt}{0pt}{mydarkblue!55}
}

\newtcolorbox{contribbox}{
  enhanced,
  breakable,
  colback=mydarkblue!2,
  colframe=mydarkblue!24,
  boxrule=0.4pt,
  arc=1.1mm,
  left=1.5mm,
  right=1.5mm,
  top=1.0mm,
  bottom=0.9mm,
  borderline west={1.6pt}{0pt}{mydarkblue!60}
}

\newsavebox{\tablefitbox}
\newcommand{\fitwidth}[1]{%
  \sbox{\tablefitbox}{#1}%
  \ifdim\wd\tablefitbox>\linewidth
    \resizebox{\linewidth}{!}{\usebox{\tablefitbox}}%
  \else
    \makebox[\linewidth][c]{\usebox{\tablefitbox}}%
  \fi
}

\newcommand{\method}{\textsc{RoboAlign-R1}}

\title{\method{}: Distilled Multimodal Reward Alignment for Robot Video World Models}

\author[1,6, *]{Hao Wu}
\author[2,*]{Yuqi Li}
\author[6,*]{Yuan Gao}
\author[3]{Fan Xu}
\author[3]{Fan Zhang}
\author[4]{Kun Wang}
\author[1]{Penghao Zhao}
\author[1]{Qiufeng Wang}
\author[5]{Yizhou Zhao}
\author[1]{Weiyan Wang}
\author[2]{Yingli Tian}
\author[1]{Xian Wu}
\author[6]{Xiaomeng Huang}
\affiliation[1]{Tencent}
\affiliation[2]{Department of Computer Science, City University of New York}
\affiliation[3]{Department of Computer Science and Engineering, The Chinese University of Hong Kong}
\affiliation[4]{School of Computer Science and Engineering, Nanyang Technological University}
\affiliation[5]{Department of Electrical and Computer Engineering, Carnegie Mellon University}
\affiliation[6]{Tsinghua University}

\contribution[*]{Equal contribution.}

\correspondence{Xiaomeng Huang \email{hxm@tsinghua.edu.cn}}
\abstract{Existing robot video world models are typically trained with low-level objectives such as reconstruction and perceptual similarity, which are poorly aligned with the capabilities that matter most for robot decision making, including instruction following, manipulation success, and physical plausibility. They also suffer from error accumulation in long-horizon autoregressive prediction. We present \method{}, a framework that combines reward-aligned post-training with stabilized long-horizon inference for robot video world models. We construct \textsc{RobotWorldBench}, a benchmark of 10{,}000 annotated video--instruction pairs collected from four robot data sources, and train a multimodal teacher judge, \textsc{RoboAlign-Judge}, to provide fine-grained six-dimensional evaluation of generated videos. We then distill the teacher into a lightweight student reward model for efficient reinforcement-learning-based post-training. To reduce long-horizon rollout drift, we further introduce \textit{Sliding Window Re-encoding} (SWR), a training-free inference strategy that periodically refreshes the generation context. Under our in-domain evaluation protocol, \method{} improves the aggregate six-dimension score by $10.1\%$ over the strongest baseline, including gains of $7.5\%$ on Manipulation Accuracy and $4.6\%$ on Instruction Following; these ranking improvements are further supported by an external VLM-based cross-check and a blinded human study. Meanwhile, SWR improves long-horizon prediction quality with only about $1\%$ additional latency, yielding a $2.8\%$ gain in SSIM and a $9.8\%$ reduction in LPIPS. Together, these results show that reward-aligned post-training and stabilized long-horizon decoding improve task consistency, physical realism, and long-horizon prediction quality in robot video world models. \href{https://roboalign-r1.netlify.app/#bibtex}{Codes are available.}}

\date{\today}
\metadata[{\faGithub\ Code}]{\url{https://github.com/Alexander-wu/RoboAlign_R1}}
\metadata[{\faGlobe\ Project Page}]{\href{https://roboalign-r1.netlify.app/}{\texttt{alexander-wu.github.io/RoboAlign\_R1}}}

\begin{document}

\maketitle

\section{Introduction}
Robot video world models are becoming an important component of embodied intelligence because they predict how scenes may evolve under candidate actions before executing them in the real world \cite{ha2018world,ni2025maskgwm,brooks2024video}. As learnable internal simulators, they support planning, policy evaluation, and long-horizon reasoning \cite{tan2026towards,hansen2023td,li2024evaluating}. Unlike generic video generation, robot world models must capture action-conditioned dynamics, contacts, and scene evolution so that predicted futures remain consistent with manipulation outcomes and are useful for control \cite{li2025worldmodelbench,wu2025rlvr}. This requirement makes it challenging to achieve visual fidelity, task consistency, and physical plausibility simultaneously~\cite{barcellona2024dream,pang2025learning}.

However, most existing robot video world models~\cite{wu2025rlvr,huang2025vid2world,mao2025robot,ding2025understanding} are trained with maximum-likelihood, reconstruction, or perceptual objectives. These losses optimize statistical similarity rather than decision-making utility, leading to a fundamental mismatch: a prediction can be visually accurate yet task-incorrect or physically implausible. This gap becomes more pronounced in long-horizon generation, where small errors accumulate and task-level correctness dominates frame-level similarity.

Recent efforts to address this issue via reinforcement learning still rely largely on low-level rewards such as MSE, LPIPS, or SSIM. While efficient, these metrics fail to capture high-level properties critical for robot reasoning, including instruction following, manipulation success, and physically consistent interactions \cite{wu2025rlvr,xiong2026phycritic}. As a result, current training objectives remain poorly aligned with the capabilities that matter most for downstream control.

\begin{figure}[t]
  \centering
  \includegraphics[width=1\linewidth]{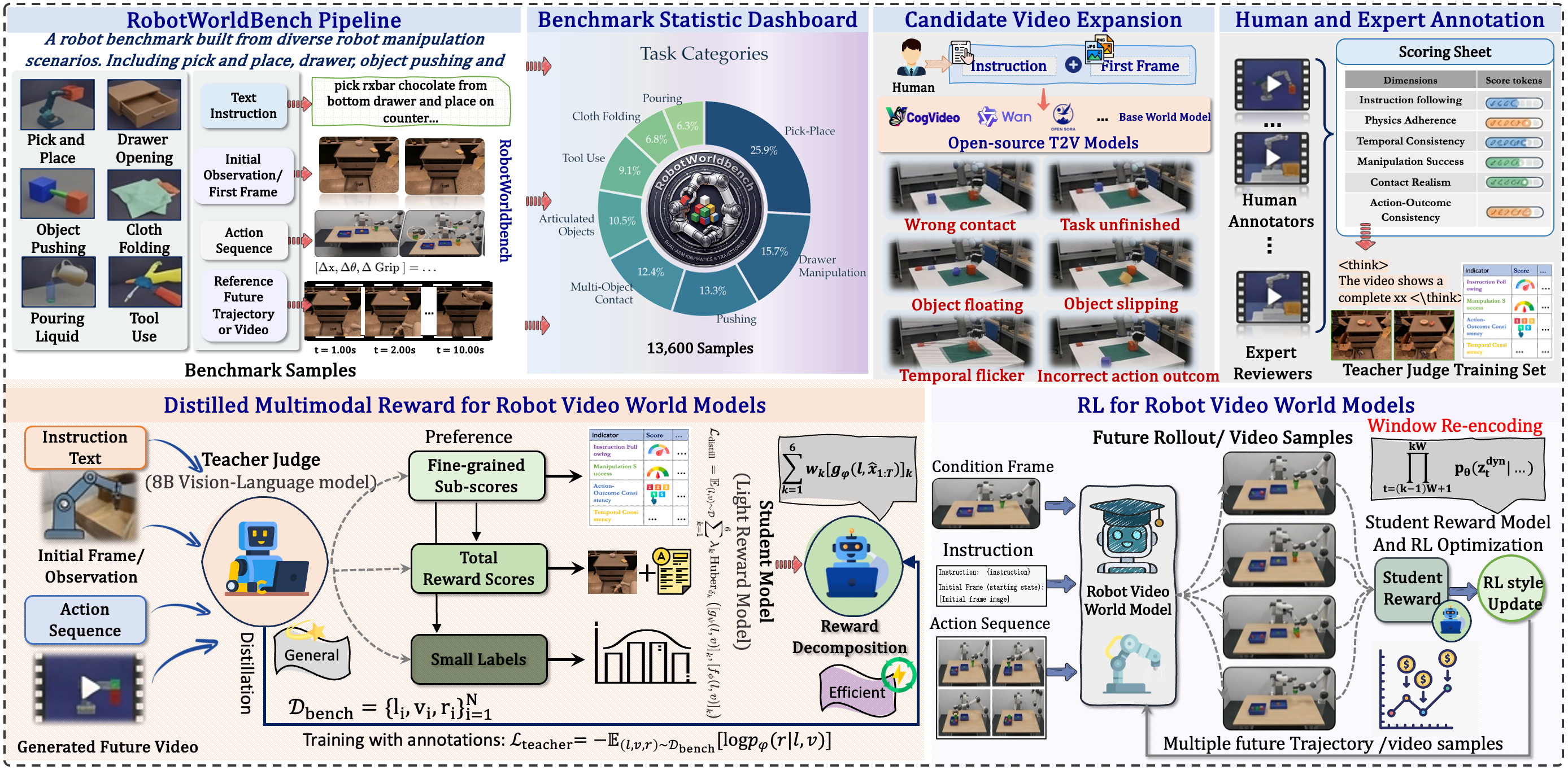}
  \caption{\textbf{Overview of \method{}.} \textsc{RobotWorldBench} provides robot-centric benchmark statistics and fine-grained annotations for training a multimodal teacher judge. The teacher is distilled into a lightweight student reward model for efficient reinforcement-learning-based post-training of robot video world models. In parallel, a sliding-window re-encoding strategy stabilizes long-horizon autoregressive rollouts by periodically refreshing the visual context during inference.}
  \label{fig:overview}
\end{figure}

These limitations highlight two key challenges. 
1) \textbf{\textit{reward misalignment}}: low-level metrics are scalable but weak, while stronger multimodal evaluators are computationally prohibitive for online optimization. 2) \textbf{\textit{long-horizon drift}}: autoregressive generation accumulates errors over time, degrading physical realism and task consistency \cite{xiao2023efficient}. To address these challenges, we propose \method{}, a unified framework for improving both training-time alignment and inference-time stability in robot video world models (Figure~\ref{fig:overview}). Our approach introduces a distilled multimodal reward that aligns model optimization with task-level objectives, and a sliding window re-encoding (SWR) strategy that mitigates long-horizon error accumulation during inference.

Specifically, we construct \textsc{RobotWorldBench}, a large-scale benchmark of $10K$ annotated video–instruction pairs curated from four robot datasets. Using this data, we train a multimodal teacher model (\textsc{RoboAlign-Judge}) that provides fine-grained, multi-dimensional evaluation of generated videos. We then distill this teacher into a lightweight student reward model, enabling efficient reinforcement-learning-based post-training. Complementing this, SWR periodically refreshes the autoregressive context without retraining, improving long-horizon prediction quality with minimal overhead. Our contributions are threefold:

\textbf{\ding{182} Reward-aligned training framework.} We introduce \method{}, which replaces weak low-level online rewards with a distilled multimodal reward aligned with task success. This improves aggregate in-domain evaluation performance  by \textbf{$+10.1$\%} over the strongest baseline while reducing online reward cost by over $10{\times}$,  with gains validated by both VLM-based evaluation and blinded human studies. 

\textbf{\ding{183} Benchmark and scalable reward distillation.} We build \textsc{RobotWorldBench}, a dataset of $10K$ annotated video–instruction pairs from four robot datasets, and a teacher–student distillation pipeline that compresses an $8B$ multimodal judge into a $98M$ reward model running at ${\sim}$$50$ videos/sec. 

\textbf{\ding{184} Stable long-horizon inference.} We propose sliding window re-encoding, a training-free decoding strategy that mitigates autoregressive drift by refreshing context during rollout. SWR improves long-horizon quality with minimal overhead, yielding \textbf{$+2.8$\%} SSIM, \textbf{$+0.62$}\,dB PSNR, and \textbf{$-9.8$\%} LPIPS at about \textbf{$1$\%} additional latency, with a 12.2\% reduction in ROI-LPIPS.

\section{Related Work}
The relevant technical background is as follows; the full stack context can be found in Appendix \ref{app:related_work}.
\paragraph{\textit{Robot Video World Models and RL Post-Training}.}
World models act as internal simulators for planning and long-horizon decision making by predicting future observations under candidate actions~\cite{ha2018world,hansen2023td}. Recent advances in video generation and embodied learning have led to growing interest in robot video world models for modeling manipulation dynamics, interaction outcomes, and future observations~\cite{brooks2024video,bruce2024genie,yang2023learning,zhou2024robodreamer,huang2025vid2world,mao2025robot,zhang2025step,ding2025understanding}. Most existing methods, however, still focus on generation or representation quality and are trained with maximum-likelihood, reconstruction, or perceptual objectives~\cite{zhang2025step,ding2025understanding,mathieu2015deep,zhang2018unreasonable,wang2004image}. Recent work has explored RL-based post-training for world models~\cite{wu2025rlvr,prabhudesai2024video,sfp2026}, but current rewards remain largely low-level or verifiable, such as reconstruction and perceptual similarity, which are weak proxies for instruction following, physical plausibility, and long-horizon outcome correctness~\cite{wu2025rlvr,wu2024beamvq,wang2025beamvq,sfp2026}. Similar issues appear in physical spatiotemporal forecasting, where reconstruction-based training often produces oversmoothed predictions and misses rare but decision-relevant events~\cite{ravuri2021skilful,finn2016unsupervised,wang2025beamvq}. In addition, autoregressive token-based world models suffer from long-horizon error accumulation at inference time~\cite{yan2021videogpt,xiao2023efficient,wu2024ivideogpt,wang2022predrnn}. Our work studies reward-aligned post-training together with a simple strategy for stabilizing long-horizon rollouts.

\paragraph{\textit{Multimodal Judges and Rewards}.}
Recent work has explored multimodal evaluators and reward models for video understanding and generation~\cite{huang2024vbench,he2024videoscore,xiong2025llava}. Compared with low-level visual metrics, these judges provide supervision that better reflects instruction following, physical plausibility, and higher-level video consistency~\cite{li2025worldmodelbench,xiong2026phycritic,huang2024vbench,he2024videoscore}. However, their cost and latency make them difficult to use directly as online rewards in reinforcement learning~\cite{li2025worldmodelbench,he2024videoscore,xiong2026phycritic}. \method{} follows this direction, but targets robot video world models and distills multimodal judgment into a lightweight reward model for online reinforcement learning.

\section{Method}
\method{} addresses two key challenges in robot video world models: \emph{reward misalignment} during training and \emph{error accumulation} during inference. \method{} unifies reward-aligned post-training and stabilized long-horizon decoding within a single framework.

\subsection{Token-Based Robot Video World Model}

Our backbone (Figure \ref{fig:model}) follows the \emph{tokenize--predict--decode} paradigm. A dual-branch visual tokenizer $\mathcal{T}_{\mathrm{vis}}=(E_c,E_d)$ maps a short clip to discrete tokens: $E_c$ encodes a conditioning frame into $N_c$ \emph{context tokens} $z^{\mathrm{ctx}}$, while $E_d$, cross-attended to $E_c$'s feature pyramid, encodes each observation frame into $N_d$ \emph{dynamics tokens} $z^{\mathrm{dyn}}_t$ carrying only residual change. Robot actions $a_t\!\in\!\mathbb{R}^{d_a}$ are discretized into action-token blocks $\hat{a}_t$ and interleaved with visual tokens, forming a unified sequence $s\!=\![z^{\mathrm{ctx}},\, z^{\mathrm{dyn}}_1,\, \hat{a}_1,\, z^{\mathrm{dyn}}_2,\, \hat{a}_2,\, \dots]$ (sizes $N_c{=}1280$, $N_d{=}80$, vocabulary $K{=}4375$; see Appendix~\ref{app:vwm_training}). A causal Transformer then models this sequence via standard next-token prediction:
\begin{equation}\label{eq:ar}
\mathcal{L}_{\mathrm{AR}}
\;=\;
-\!\sum_{t=1}^{T}
\log\, p_\theta\!\bigl(
z^{\mathrm{dyn}}_t \mid z^{\mathrm{ctx}},\, z^{\mathrm{dyn}}_{<t},\, \hat{a}_{\le t}
\bigr).
\end{equation}
At inference time the predicted tokens $\hat{z}^{\mathrm{dyn}}_{1:T}$ are decoded back to pixels via $\hat{x}_{1:T}\!=\!\mathcal{D}_{\mathrm{vis}}(z^{\mathrm{ctx}},\hat{z}^{\mathrm{dyn}}_{1:T})$. The resulting rollout is used for both reward-aligned post-training (\S\ref{sec:reward}) and sliding-window re-encoding (\S\ref{sec:swr}).
\begin{figure}[t]
  \centering
  \includegraphics[width=1\linewidth]{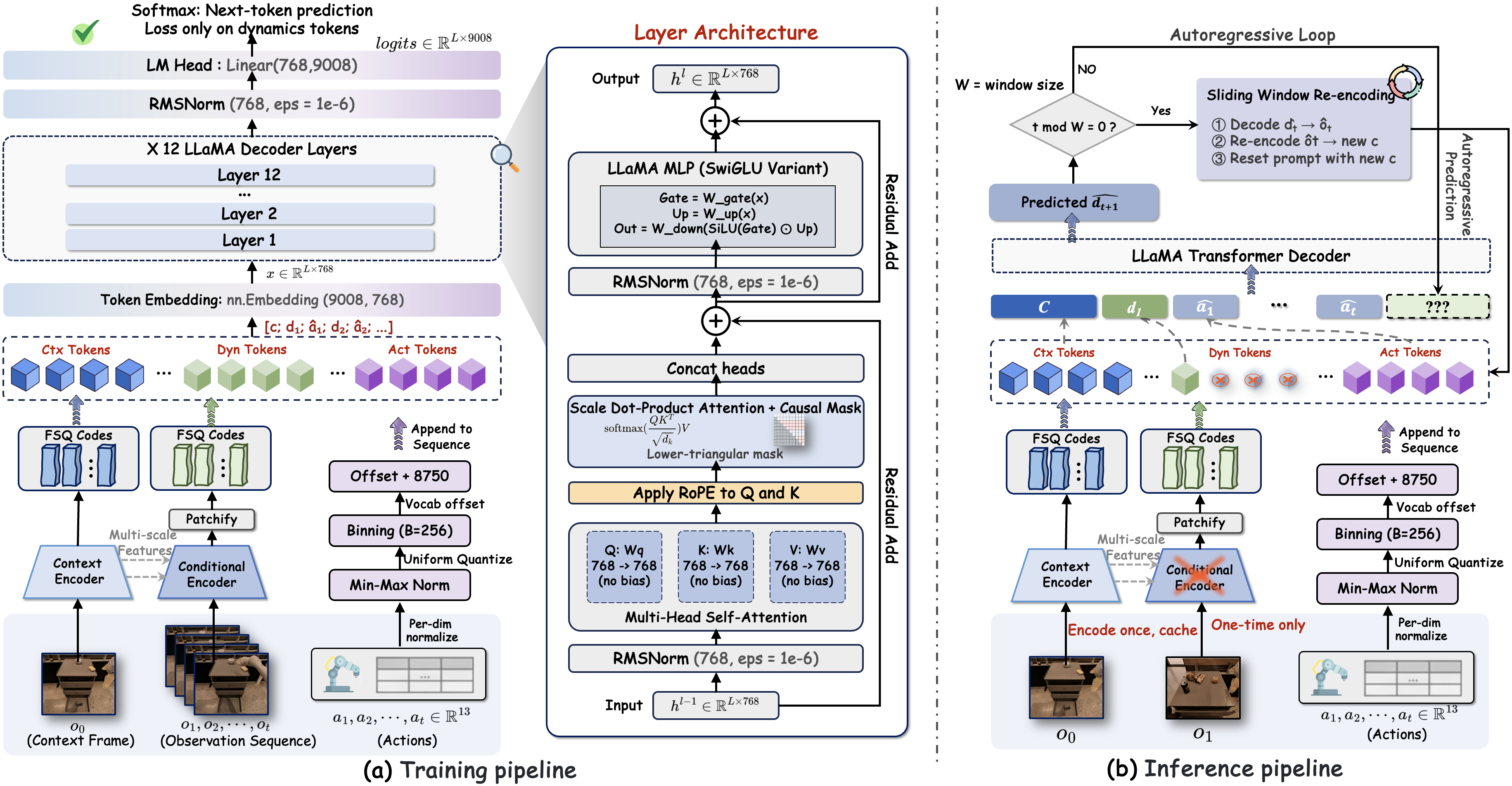}
  \caption{\textbf{Token-based robot video world model.}
  \textbf{(a)}~Training: a dual-branch FSQ tokenizer produces context tokens $c$ and dynamics tokens $d_t$; discretized action tokens are interleaved and modeled by a 12-layer LLaMA Transformer with loss on dynamics tokens only.
  \textbf{(b)}~Inference: context tokens are encoded once and cached; the model autoregressively predicts $\hat{d}_{t+1}$ and triggers sliding-window re-encoding every $W$ steps.}
  \label{fig:model}
\end{figure}
In our implementation, a dual-branch FSQ tokenizer~\cite{mentzer2024finite,wu2024ivideogpt} encodes the conditioning frame into context tokens and each subsequent observation into dynamics tokens, while continuous actions are normalized, uniformly binned, and assigned a disjoint vocabulary range. A $12$-layer LLaMA decoder models the interleaved sequence under a causal mask with loss applied only to dynamics tokens. During inference, context tokens are encoded once and cached, and sliding-window re-encoding is triggered every $W$ steps. More technical details can be found in Appendix~\ref{app:vwm_training}.

\subsection{Reward-Aligned Post-Training}\label{sec:reward}
The pre-training objective (Eq.~\ref{eq:ar}) captures surface-level statistical regularity but is agnostic to instruction following, physical plausibility, and action--outcome consistency.
We close this gap via RL post-training with a structured multimodal reward, obtained in three stages: benchmark construction, teacher training, and reward distillation.
\paragraph{RobotWorldBench.}
Let $l$ denote an instruction and $v^{+}$ a ground-truth manipulation video from established datasets. RobotWorldBench begins from a candidate pool drawn from four robot datasets and combines two annotation sources: rule-based degradations of ground-truth episodes and generated videos from open-source image-to-video or world-model baselines. From this pool we curate 10{,}000 annotated video--instruction pairs for judge training and reward distillation. Each retained pair $(l,\,v)$ is annotated with a raw score vector $\mathbf{r}\!=\!(r_1,\dots,r_6)$ along six dimensions \emph{instruction following}, \emph{manipulation success}, \emph{action--outcome consistency}, \emph{temporal consistency}, \emph{contact realism}, and \emph{physics adherence} using the original rubric ranges $[3,2,1,1,1,2]$, yielding
\begin{equation}\label{eq:bench}
\mathcal{D}_{\mathrm{bench}}
= \bigl\{\,(l_i,\; v_i,\; \mathbf{r}_i)\bigr\}_{i=1}^{N}.
\end{equation}

\paragraph{Multimodal teacher judge.}
We fine-tune Qwen$3$\--VL\--$8$B\--Thinking~\cite{bai2025qwen25vl} as the teacher judge $f_\phi$.
Given $(l,\,v)$, the teacher produces structured raw scores $\hat{\mathbf{r}}\!=\!f_\phi(l,v)$ in the same six-dimensional rubric space as $\mathbf{r}$ via supervised fine-tuning on $\mathcal{D}_{\mathrm{bench}}$:
\begin{equation}\label{eq:teacher}
\mathcal{L}_{\mathrm{teacher}}
\;=\;
-\,\mathbb{E}_{(l,v,\mathbf{r})\,\sim\,\mathcal{D}_{\mathrm{bench}}}
\!\bigl[\log\, p_\phi(\mathbf{r}\mid l,\,v)\bigr].
\end{equation}

\paragraph{Student reward distillation.}
The teacher's autoregressive decoding cost precludes its use as an online RL reward.
We distill its judgments into a lightweight student $g_\psi$---a compact visual--text encoder followed by a sigmoid-activated head emitting $g_\psi(l,v)\!\in\![0,1]^6$. The distillation set $\mathcal{D}_{\mathrm{distill}}$ mixes annotated benchmark videos with teacher-scored rollouts from baseline/current world models. Teacher raw scores $f_\phi(l,v)$ are normalized dimension-wise via $\tilde{s}_k \!=\! [f_\phi(l,v)]_k / r_k^{\max}$ with $r_k^{\max}\!\in\!\{3,2,1,1,1,2\}$ so both sides of the regression share the same $[0,1]$ scale:
\begin{equation}\label{eq:distill}
\mathcal{L}_{\mathrm{distill}}
\;=\;
\mathbb{E}_{(l,v)\,\sim\,\mathcal{D}_{\mathrm{distill}}}
\sum_{k=1}^{6} \lambda_k\,\mathrm{Huber}_{\delta_h}\!\bigl(
[g_\psi(l,\,v)]_k,\; \tilde{s}_k(l,v)
\bigr),
\end{equation}
with Huber threshold $\delta_h\!=\!0.5$ (distinct from the quantization $\delta_q$ of Appendix~\ref{app:proof_swr}) and per-dimension balancing weights $\{\lambda_k\}$ (Appendix~\ref{app:student_reward}); these differ from the reward-aggregation weights $\{w_k\}$ in Eq.~\ref{eq:reward}. To counter reward hacking from distributional shift, we use \emph{online iterative distillation}: every $K$ policy updates, fresh rollouts are re-scored by the teacher to refresh the student.

\paragraph{RL post-training with GRPO.}
The distilled student provides a composite reward over the six dimensions:
\begin{equation}\label{eq:reward}
R(\hat{x}_{1:T})
\;=\;
\sum_{k=1}^{6} w_k \;\bigl[g_\psi(l,\,\hat{x}_{1:T})\bigr]_k,
\end{equation}
where $[g_\psi]_k\!\in\![0,1]$ are the student's normalized scores (same scale as in Eq.~\ref{eq:distill}) and $\{w_k\}$ are importance weights.
We post-train with GRPO~\cite{shao2024deepseekmath,wu2025rlvr}: given a group of $G$ rollouts $\{\hat{x}^{(j)}\}_{j=1}^{G}\!\sim\!p_\theta$, the group-normalized advantage is
\begin{equation}\label{eq:advantage}
A^{(j)}
\;=\;
\frac{R\!\bigl(\hat{x}^{(j)}\bigr) - \operatorname{mean}_{j}\, R}
     {\operatorname{std}_{j}\, R},
\end{equation}
and the clipped policy-gradient objective reads
\begin{equation}\label{eq:grpo}
\mathcal{L}_{\mathrm{GRPO}}
\;=\;
-\,\mathbb{E}\!\Biggl[
\frac{1}{G}\sum_{j=1}^{G}
\min\!\Bigl(\rho^{(j)} A^{(j)},\;
\operatorname{clip}\!\bigl(\rho^{(j)},\, 1{-}\epsilon,\, 1{+}\epsilon\bigr)\, A^{(j)}\Bigr)
\Biggr]
+ \beta\, D_{\mathrm{KL}}\!\bigl(p_\theta \,\big\|\, p_{\theta_0}\bigr),
\end{equation}
where $\rho^{(j)}\!=\!p_\theta(\hat{x}^{(j)})/p_{\theta_{\mathrm{old}}}(\hat{x}^{(j)})$ with $\theta_{\mathrm{old}}$ synchronized to $\theta$ at each GRPO iteration, and $p_{\theta_0}$ is the frozen pre-trained reference used for KL regularization.
The student scores each rollout in one forward pass, reducing reward cost by over $10{\times}$ relative to the teacher while preserving high-level judgment fidelity.

\subsection{Sliding Window Re-encoding}\label{sec:swr}
\begin{figure}[t]
  \centering
  \includegraphics[width=1\linewidth]{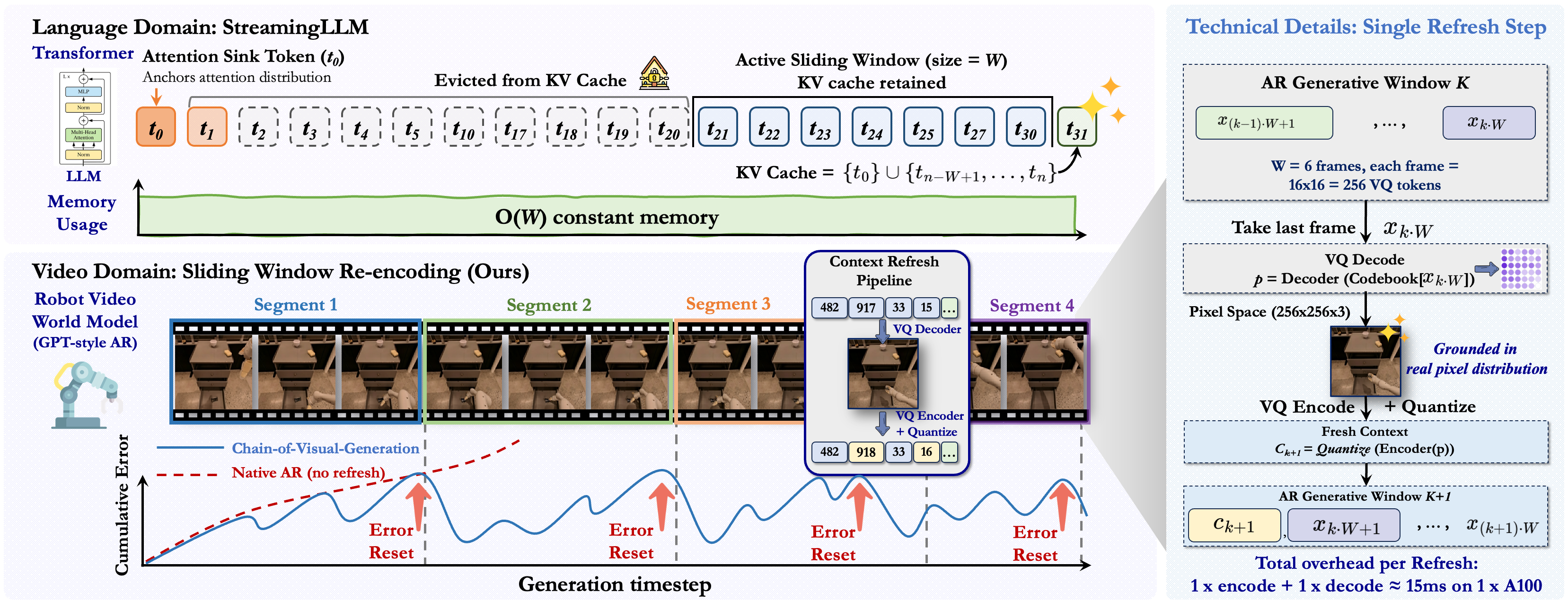}
  \caption{\textbf{Sliding window re-encoding.}
  \textit{Top:} analogy to StreamingLLM~\cite{xiao2023efficient}, which retains an attention sink and a sliding KV-cache window in the language domain.
  \textit{Bottom:} our approach periodically decodes the last predicted frame to pixel space, re-encodes it as fresh context tokens, and resets the autoregressive prompt, empirically limiting long-horizon drift while keeping the active KV-cache bounded by $O(W)$.
  \textit{Right:} technical details of a single refresh step.}
  \label{fig:Inference}
\end{figure}
Under standard autoregressive decoding, each dynamics-token block $\hat{z}^{\mathrm{dyn}}_t$ is conditioned on all previously \emph{predicted} tokens, so per-step errors compound and progressively degrade long-horizon rollouts.
Inspired by the attention-sink mechanism of StreamingLLM~\cite{xiao2023efficient} in the language domain, we introduce a training-free decoding strategy that periodically decodes recent predictions to pixel space and re-encodes them as fresh context, which can limit the carry-over of token-level drift across segments (Figure~\ref{fig:Inference}).

\paragraph{Segmented generation.}
A rollout of $T$ frames is partitioned into $K\!=\!\lceil T/W\rceil$ segments of window size $W$ (Figure~\ref{fig:Inference}, right).
Within segment $k$, the model generates $W$ dynamics-token blocks from context $z^{\mathrm{ctx}}_k$:
\begin{equation}\label{eq:sw_seg}
\hat{z}^{\mathrm{dyn}}_{(k-1)W+1:\,kW}
\;\sim\;
\prod_{t=(k-1)W+1}^{kW}
p_\theta\!\bigl(z^{\mathrm{dyn}}_t \mid z^{\mathrm{ctx}}_k,\;\hat{z}^{\mathrm{dyn}}_{<t},\;\hat{a}_{\le t}\bigr),
\end{equation}
where $z^{\mathrm{ctx}}_1$ is the original context encoding of the first conditioning frame.

\paragraph{Context refresh.}
At each segment boundary, the last predicted frame is decoded to pixel space and re-encoded as fresh context for the next segment (Figure~\ref{fig:Inference}, right):
\begin{equation}
\hat{x}_{kW} = \bigl[\mathcal{D}_{\mathrm{vis}}\!\bigl(z^{\mathrm{ctx}}_k,\;\hat{z}^{\mathrm{dyn}}_{(k-1)W+1:\,kW}\bigr)\bigr]_{t=kW}, \qquad
\bigl(z^{\mathrm{ctx}}_{k+1},\, z^{\mathrm{dyn}}_0\bigr) = \mathcal{T}_{\mathrm{vis}}\!\bigl(\hat{x}_{kW}\bigr), \label{eq:decode}
\end{equation}
where $\hat{x}_{kW}$ serves as both the new conditioning frame for $E_c$ and a length-one dynamic clip for $E_d$; $z^{\mathrm{dyn}}_0$ seeds the next segment's prompt (Appendix~\ref{app:vwm_stage1}). This \emph{decode--re-encode} cycle makes subsequent predictions depend on the current decoded frame rather than the entire raw token history.

\paragraph{Stylized stability analysis.}
We provide a simplified local error analysis that explains the qualitative window-size trade-off, rather than predicting exact gains.
Let $\varepsilon$ upper-bound the per-step token-space prediction error, $\delta_q$ the decode--re-encode quantization error, and $\alpha\!\in\![0,1)$ a local contraction factor on within-segment context-error amplification (precise assumptions in Appendix~\ref{app:proof_swr}).

\begin{proposition}\label{prop:swr}
Under the stylized model above, sliding-window re-encoding with window size $W$ yields the bound
\begin{equation}\label{eq:error_bound}
\mathcal{E}_{\mathrm{SWR}}(T)
\;\le\;
W\varepsilon + \frac{W\varepsilon + \delta_q}{1 - \alpha^{W}},
\qquad \forall\; T,
\end{equation}
which does not grow explicitly with $T$; the $\alpha\!\to\!0$ limit simplifies to $2W\varepsilon + \delta_q$. By contrast, vanilla AR without refresh admits a worst-case bound of order $\varepsilon/(1-\alpha)$, which blows up as $\alpha\!\to\!1$ and degrades to $\mathcal{O}(T\varepsilon)$ in the non-contractive regime.
\end{proposition}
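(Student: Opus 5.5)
I will first fix a stylized error recursion, since the statement leaves the assumptions to the appendix. Let $e_t$ denote the token-space error of the generated state at step $t$ relative to the ground-truth trajectory. Within a segment I assume the error evolves as
\[
e_{t+1} \le \alpha_{\mathrm{loc}}\, e_t + \varepsilon ,
\]
where the carried-over context error enters each segment through a contraction. Concretely, I will model segment $k$ as starting from a context error $c_k$ (the error of $z^{\mathrm{ctx}}_k$). After $W$ in-segment steps, the influence of $c_k$ on the segment's last frame is damped to at most $\alpha^{W} c_k$, while the freshly accumulated prediction error is at most $W\varepsilon$. This uses the crude additive bound obtained by summing $W$ per-step errors each at most $\varepsilon$, which is the worst case when the in-segment amplification is taken as $1$. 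Hence any frame inside segment $k$ has error at most $c_k + W\varepsilon$, and the frame decoded at the boundary has error at most $\alpha^{W} c_k + W\varepsilon$.

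Next I handle the refresh. By Eq.~\eqref{eq:decode}, the new context is obtained by decoding and re-encoding that boundary frame, which adds at most $\delta_q$. This gives the scalar recursion
\[
c_{k+1} \le \alpha^{W} c_k + W\varepsilon + \delta_q, \qquad c_1 = 0 .
\]
Unrolling it as a geometric series gives
\[
c_k \le \frac{W\varepsilon + \delta_q}{1 - \alpha^{W}} \quad \text{uniformly in } k .
\]
Combining this with the within-segment bound $c_k + W\varepsilon$ gives $\mathcal{E}_{\mathrm{SWR}}(T) \le W\varepsilon + (W\varepsilon+\delta_q)/(1-\alpha^W)$. The number of segments $\lceil T/W\rceil$ cancels out, so the bound holds for every $T$. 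Setting $\alpha = 0$ gives $2W\varepsilon + \delta_q$ immediately.

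For the vanilla comparison, I will apply the same one-step recursion $e_{t+1} \le \alpha e_t + \varepsilon$ with no reset. It unrolls to
\[
e_T \le \varepsilon \sum_{i<T} \alpha^{i} \le \frac{\varepsilon}{1-\alpha},
\]
which diverges as $\alpha \to 1$. When $\alpha = 1$ the same sum equals $T\varepsilon$, giving the $\mathcal{O}(T\varepsilon)$ regime. To show this is worst-case rather than merely an upper bound, I will exhibit a scalar linear instance in which every inequality is tight.

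The main obstacle is justifying the modeling step in which the carried context error is damped by $\alpha^{W}$ over a segment while fresh errors are bounded only additively by $W\varepsilon$. These two treatments are not obviously consistent within one recursion. My first attempt will be to state it explicitly as the appendix assumption: a Lipschitz condition of the conditional predictor in its context argument with constant $\alpha$ per step, composed $W$ times, together with a per-step injection bounded by $\varepsilon$ that is not contracted. I will also make clear that the result is a stylized bound in which the loose $W\varepsilon$ terms are deliberate.
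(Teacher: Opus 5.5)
Your proposal is correct and follows the paper's derivation essentially step for step. Inside segment $k$ the error is at most $W\varepsilon+\eta_k$, and at the boundary at most $W\varepsilon+\alpha^W\eta_k$; the refresh recursion $\eta_{k+1}\le\alpha^W\eta_k+W\varepsilon+\delta_q$ started from $\eta_1=0$ gives the uniform bound $(W\varepsilon+\delta_q)/(1-\alpha^W)$, and vanilla AR gets the geometric-series bound plus the linear $T\varepsilon$ bound when $\alpha=1$.

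The consistency issue you flag as the main obstacle is not actually one. Unrolling the single recursion $e\le\varepsilon+\alpha\eta$ from the start of the segment gives $\varepsilon\sum_{i<j}\alpha^i+\alpha^j\eta_k$, and $\sum_{i<j}\alpha^i\le j\le W$ for $\alpha<1$; this is exactly what the paper does, so you need no separate non-contracted-injection assumption.
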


\begin{proof}[Proof sketch]
Within segment $k$ the context is fixed, so errors accumulate for $\le W$ steps, giving $\le W\varepsilon + \alpha^W \eta_k$ where $\eta_k$ is the context error. The boundary cycle (Eq.~\ref{eq:decode}) refreshes context at a one-time cost $\delta_q$, so $\eta_{k+1}\!\le\! W\varepsilon + \alpha^W \eta_k + \delta_q$ converges to $\eta^{*}\!=\!(W\varepsilon+\delta_q)/(1-\alpha^W)$; see Appendix~\ref{app:proof_swr}.
\end{proof}

\begin{remark}[\textbf{Role of $\alpha$ and choice of $W$}]\label{rem:alpha}
Smaller $W$ tightens the within-segment term $W\varepsilon$ but pays more $\delta_q$ per refresh; larger $W$ leaves more room for within-segment drift. Our ablation (Table~\ref{tab:swr_quality_efficiency}) shows $W\!=\!6$ hits the best balance.
\end{remark}


\definecolor{tblheadbg}{HTML}{E8EBF3}
\definecolor{tblstripe}{HTML}{F7F8FB}
\definecolor{rankfirstbg}{HTML}{FCE9E7}
\definecolor{ranksecondbg}{HTML}{FEF1E5}
\definecolor{rankthirdbg}{HTML}{E8F7FB}
\newcommand{\scorestd}[2]{#1{\scriptsize$_{\pm#2}$}}
\newcommand{\bestscore}[2]{\cellcolor{rankfirstbg}\textbf{\scorestd{#1}{#2}}\textsuperscript{\textcolor{red!70!black}{\scriptsize 1}}}
\newcommand{\secondscore}[2]{\cellcolor{ranksecondbg}\textbf{\scorestd{#1}{#2}}\textsuperscript{\textcolor{orange!85!black}{\scriptsize 2}}}
\newcommand{\thirdscore}[2]{\cellcolor{rankthirdbg}\textbf{\scorestd{#1}{#2}}\textsuperscript{\textcolor{cyan!60!black}{\scriptsize 3}}}
\newcommand{\plainscore}[2]{\scorestd{#1}{#2}}
\newcommand{\tbd}{\textcolor{codegray}{\textsc{tbd}}}
\newcommand{\placeholderfigure}[2]{
  \fbox{%
    \begin{minipage}[c][#1][c]{0.96\linewidth}
      \centering
      \textbf{Placeholder figure}\\[0.35em]
      \footnotesize #2
    \end{minipage}}
}

\section{Experiments}


\textbf{Datasets.} We evaluate \method{} on RT-1~\cite{brohan2023rt1} and BridgeData~V2~\cite{walke2023bridgedata} in the main rollout experiments. RobotWorldBench draws candidate tasks and videos from four robot datasets, including RT-1, BridgeData~V2, CALVIN~\cite{mees2022calvin}, and LIBERO~\cite{liu2024libero}, and uses 10{,}000 annotated video--instruction pairs for judge training and reward distillation.

\textbf{Baselines.} We compare \method{} with three groups of approaches: closed-source video generation models, open-source video generation models, and embodied world models. Additional details are deferred to the Table~\ref{tab:judge_i2v_models}.

\textbf{Evaluation Metrics.} We report both semantic/physical alignment metrics and pixel-level reconstruction metrics. The former are scored by \textsc{RoboAlign-Judge} over six dimensions and should be interpreted as an in-domain automated proxy rather than a substitute for large-scale human evaluation. We additionally report an external VLM-based cross-check and a small-scale blinded human study on held-out subsets to test whether the main ranking is preserved beyond the in-domain judge (Appendix~\ref{app:judge_vlm_comparison} and Appendix~\ref{app:human_eval_template}). The pixel-level metrics include standard global measures and motion-mask-based ROI metrics; details are provided in Appendix~\ref{app:eval_metrics}.

\textbf{Implementation Details.} \method{} is implemented in PyTorch, and all training and evaluation are conducted on NVIDIA A100 GPUs. In the RL stage, we adopt GRPO and distill the fine-tuned RoboAlign-Judge into a lightweight reward model.

\subsection[RQ1]{RQ1: Does RoboAlign-R1 Achieve State-of-the-Art World-Model Quality?}

\paragraph{Quantitative results.}
Tables~\ref{tab:robotworldbench_main}--\ref{tab:robotworldbench_lowlevel} show that \method{} achieves the best overall performance under our evaluation protocol. It reaches an aggregate \textsc{RoboAlign-Judge} score of $\mathbf{8.52}_{\pm 0.15}$, exceeding the strongest baseline iVideoGPT ($7.74_{\pm 0.62}$) by $+10.1\%$ with $75.8\%$ lower standard deviation, while improving all six judged dimensions and surpassing all closed-source commercial models (e.g., Kling~2.6: $6.84$) despite far fewer parameters.
On low-level metrics, \method{} is also strongest on both datasets, reducing LPIPS by $4.9\%$ on RT-1 and MSE by $8.7\%$ on BridgeData~V2 relative to the respective runners-up. The main ranking is preserved by the external VLM-based validation (Appendix~\ref{app:judge_vlm_comparison}) and the blinded human study (Appendix~\ref{app:human_eval_template}).

\paragraph{Qualitative analysis.}
Figures~\ref{fig:case}--\ref{fig:case_rt_bridge} show representative comparisons. \method{} produces more coherent manipulation sequences with accurate grasping and stable contact geometry. In contrast, iVideoGPT often produces blurry details, while Wan2.2-TI2V-5B (LoRA) struggles with temporal coherence and occasionally hallucinates object deformation. Across RT-1 and BridgeData~V2, \method{} better preserves texture, shadows, and background stability, consistent with the quantitative gains above.

\begin{table*}[!h]
  \centering
  \tiny
  \renewcommand{\arraystretch}{0.92}
  \setlength{\tabcolsep}{2.4pt}
  \caption{Robot world-model performance on \textsc{RobotWorldBench} (mean $\pm$ std.).}
  \label{tab:robotworldbench_main}
  \resizebox{\textwidth}{!}{
  \begin{tabular}{lccccccc}
    \toprule
    \rowcolor{tblheadbg}
    Method & \multicolumn{3}{c}{Task Alignment} & \multicolumn{3}{c}{Physical Realism} & Total \\
    \cmidrule(lr){2-4} \cmidrule(lr){5-7}
    \rowcolor{tblheadbg}
     & Instr. $\uparrow$ & Manip. $\uparrow$ & Act.-Out. $\uparrow$ & Temp. $\uparrow$ & Contact $\uparrow$ & Phys. $\uparrow$ & $\uparrow$ \\
    \midrule
    \rowcolor{tblstripe}
    \textit{Real videos}             & 3.0 & 2.0 & 1.0 & 1.0 & 1.0 & 2.0 & 10.0 \\
    \midrule
    \multicolumn{8}{l}{\textcolor{red!70!black}{\textbf{\textit{Closed video models}}}} \\
    \rowcolor{tblstripe}
    Kling 2.6             & \thirdscore{2.42}{0.09} & \thirdscore{1.38}{0.10} & \plainscore{0.46}{0.08} & \thirdscore{0.58}{0.07} & \plainscore{0.82}{0.06} & \plainscore{1.18}{0.11} & \thirdscore{6.84}{0.22} \\
    Runway Gen-4.5        & \plainscore{2.34}{0.10} & \plainscore{1.29}{0.09} & \plainscore{0.43}{0.07} & \plainscore{0.55}{0.08} & \plainscore{0.79}{0.06} & \plainscore{1.12}{0.10} & \plainscore{6.52}{0.24} \\
    \rowcolor{tblstripe}
    MiniMax Hailuo 02     & \plainscore{2.18}{0.11} & \plainscore{1.17}{0.10} & \plainscore{0.38}{0.07} & \plainscore{0.47}{0.09} & \plainscore{0.71}{0.08} & \plainscore{1.01}{0.12} & \plainscore{5.92}{0.28} \\
    Luma Dream Machine    & \plainscore{2.03}{0.12} & \plainscore{1.08}{0.11} & \plainscore{0.35}{0.08} & \plainscore{0.44}{0.09} & \plainscore{0.69}{0.08} & \plainscore{0.92}{0.12} & \plainscore{5.51}{0.30} \\
    \midrule
    \multicolumn{8}{l}{\textcolor{codegreen!60!black}{\textbf{\textit{Open video models}}}} \\
    \rowcolor{tblstripe}
    HunyuanVideo-I2V      & \plainscore{1.20}{0.28} & \plainscore{0.40}{0.22} & \plainscore{0.24}{0.08} & \plainscore{0.56}{0.15} & \thirdscore{0.96}{0.08} & \secondscore{1.56}{0.15} & \plainscore{4.92}{0.63} \\
    LTX-Video             & \plainscore{2.28}{0.07} & \plainscore{1.32}{0.07} & \thirdscore{0.52}{0.07} & \plainscore{0.26}{0.05} & \plainscore{0.28}{0.10} & \plainscore{1.00}{0.18} & \plainscore{5.66}{0.36} \\
    \rowcolor{tblstripe}
    Stable Video Diffusion XT & \plainscore{1.84}{0.15} & \plainscore{0.98}{0.13} & \plainscore{0.46}{0.08} & \plainscore{0.06}{0.08} & \plainscore{0.36}{0.20} & \plainscore{0.58}{0.20} & \plainscore{4.28}{0.74} \\
    Mochi-1               & \plainscore{1.83}{0.12} & \plainscore{0.91}{0.10} & \plainscore{0.31}{0.07} & \plainscore{0.39}{0.08} & \plainscore{0.61}{0.07} & \plainscore{0.82}{0.11} & \plainscore{4.87}{0.29} \\
    \rowcolor{tblstripe}
    I2VGen-XL             & \plainscore{1.56}{0.15} & \plainscore{0.66}{0.14} & \plainscore{0.32}{0.04} & \plainscore{0.00}{0.00} & \plainscore{0.44}{0.19} & \plainscore{0.56}{0.16} & \plainscore{3.54}{0.57} \\
    CogVideoX-I2V         & \plainscore{1.94}{0.21} & \plainscore{1.00}{0.14} & \plainscore{0.40}{0.18} & \plainscore{0.42}{0.16} & \plainscore{0.92}{0.07} & \plainscore{1.24}{0.19} & \plainscore{5.92}{0.75} \\
    \rowcolor{tblstripe}
    OpenSora-I2V          & \plainscore{1.58}{0.13} & \plainscore{0.82}{0.11} & \plainscore{0.27}{0.06} & \plainscore{0.31}{0.07} & \plainscore{0.53}{0.08} & \plainscore{0.71}{0.10} & \plainscore{4.22}{0.30} \\
    OpenSora-Plan-I2V     & \plainscore{1.79}{0.12} & \plainscore{0.89}{0.10} & \plainscore{0.30}{0.07} & \plainscore{0.37}{0.08} & \plainscore{0.60}{0.08} & \plainscore{0.81}{0.11} & \plainscore{4.76}{0.29} \\
    \midrule
    \multicolumn{8}{l}{\textcolor{codepurple}{\textbf{\textit{Embodied / interactive world-model baselines}}}} \\
    \rowcolor{tblstripe}
    RLVR-World            & \plainscore{2.29}{0.10} & \plainscore{1.31}{0.09} & \plainscore{0.44}{0.07} & \plainscore{0.43}{0.08} & \plainscore{0.84}{0.06} & \plainscore{1.19}{0.10} & \plainscore{6.54}{0.23} \\
    iVideoGPT             & \secondscore{2.60}{0.11} & \secondscore{1.60}{0.11} & \secondscore{0.70}{0.11} & \secondscore{0.74}{0.14} & \plainscore{0.56}{0.10} & \thirdscore{1.54}{0.15} & \secondscore{7.74}{0.62} \\
    \rowcolor{tblstripe}
    RoboDreamer           & \plainscore{2.02}{0.16} & \plainscore{1.02}{0.16} & \plainscore{0.34}{0.05} & \plainscore{0.32}{0.12} & \plainscore{0.04}{0.05} & \plainscore{0.70}{0.17} & \plainscore{4.44}{0.56} \\
    Vid2World             & \plainscore{2.18}{0.10} & \plainscore{1.08}{0.04} & \plainscore{0.40}{0.09} & \plainscore{0.22}{0.07} & \secondscore{0.98}{0.04} & \plainscore{1.04}{0.14} & \plainscore{5.90}{0.30} \\
    \midrule
    \multicolumn{8}{l}{\textcolor{codepurple}{\textbf{\textit{Additional training baselines}}}} \\
    \rowcolor{tblstripe}
    Wan2.2-TI2V-5B (LoRA) & \plainscore{2.40}{0.09} & \plainscore{1.02}{0.11} & \plainscore{0.41}{0.08} & \plainscore{0.24}{0.10} & \secondscore{0.98}{0.05} & \plainscore{0.96}{0.12} & \plainscore{6.01}{0.24} \\
    \midrule
    \rowcolor{mydarkblue!6}
    \textcolor{mydarkblue}{\textbf{\method{} (ours)}} & \bestscore{2.72}{0.06} & \bestscore{1.72}{0.07} & \bestscore{0.72}{0.05} & \bestscore{0.78}{0.05} & \bestscore{1.00}{0.04} & \bestscore{1.58}{0.07} & \bestscore{8.52}{0.15} \\
    \bottomrule
  \end{tabular}}
\end{table*}

\begin{table*}[!h]
  \centering
  \tiny
  \renewcommand{\arraystretch}{0.93}
  \setlength{\tabcolsep}{2.0pt}
  \caption{Low-level metrics on full variable-length rollouts over RT-1 and BridgeData V2. We report mean values and, when available, standard deviations across repeated evaluations over the full generated rollout.}
  \label{tab:robotworldbench_lowlevel}
  \resizebox{\textwidth}{!}{
  \begin{tabular}{lcccccccc}
    \toprule
    \rowcolor{tblheadbg}
    Method & \multicolumn{4}{c}{RT-1} & \multicolumn{4}{c}{BridgeData V2} \\
    \cmidrule(lr){2-5} \cmidrule(lr){6-9}
    \rowcolor{tblheadbg}
     & MSE $\downarrow$ & PSNR $\uparrow$ & SSIM $\uparrow$
     & LPIPS $\downarrow$
     & MSE $\downarrow$ & PSNR $\uparrow$ & SSIM $\uparrow$
     & LPIPS $\downarrow$ \\
    \midrule
    \rowcolor{tblstripe}
    \textit{Real videos}      & 0.000 & --- & 1.000 & 0.000 & 0.000 & --- & 1.000 & 0.000 \\
    \midrule
    \multicolumn{9}{l}{\textcolor{codepurple}{\textbf{\textit{Embodied / interactive world-model baselines}}}} \\
    \rowcolor{tblstripe}
    RLVR-World                & \secondscore{0.0125}{0.0017} & \secondscore{19.47}{1.30} & \secondscore{0.736}{0.035} & \secondscore{0.182}{0.014} & \secondscore{0.0139}{0.0021} & \secondscore{19.06}{1.38} & \secondscore{0.721}{0.040} & \secondscore{0.178}{0.014} \\
    iVideoGPT                 & \thirdscore{0.0128}{0.0018} & \thirdscore{19.36}{1.32} & \thirdscore{0.732}{0.036} & \thirdscore{0.188}{0.015} & \plainscore{0.0438}{0.0025} & \plainscore{15.98}{1.52} & \plainscore{0.558}{0.042} & \plainscore{0.435}{0.021} \\
    \rowcolor{tblstripe}
    RoboDreamer               & \plainscore{0.0171}{0.0024} & \plainscore{18.12}{1.56} & \plainscore{0.671}{0.046} & \plainscore{0.231}{0.019} & \plainscore{0.0184}{0.0027} & \plainscore{17.86}{1.69} & \plainscore{0.649}{0.051} & \plainscore{0.224}{0.018} \\
    Vid2World                 & \plainscore{0.0138}{0.0019} & \plainscore{19.03}{1.37} & \plainscore{0.721}{0.038} & \plainscore{0.196}{0.016} & \plainscore{0.0145}{0.0022} & \plainscore{18.92}{1.43} & \plainscore{0.711}{0.041} & \plainscore{0.188}{0.015} \\
    \midrule
    \multicolumn{9}{l}{\textcolor{codepurple}{\textbf{\textit{Additional training baselines}}}} \\
    \rowcolor{tblstripe}
    Wan2.2-TI2V-5B (LoRA)     & \plainscore{0.0131}{0.0018} & \plainscore{19.31}{1.29} & \plainscore{0.729}{0.037} & \plainscore{0.184}{0.015} & \thirdscore{0.0141}{0.0021} & \thirdscore{19.02}{1.39} & \thirdscore{0.718}{0.040} & \thirdscore{0.182}{0.014} \\
    \midrule
    \rowcolor{mydarkblue!6}
    \textcolor{mydarkblue}{\textbf{\method{} (ours)}} & \bestscore{0.0121}{0.0016} & \bestscore{19.72}{1.26} & \bestscore{0.745}{0.032} & \bestscore{0.173}{0.013} & \bestscore{0.0127}{0.0019} & \bestscore{20.00}{1.31} & \bestscore{0.731}{0.037} & \bestscore{0.175}{0.013} \\
    \bottomrule
  \end{tabular}}
\end{table*}

\begin{figure*}[!h]
\centering
\includegraphics[width=1\textwidth]{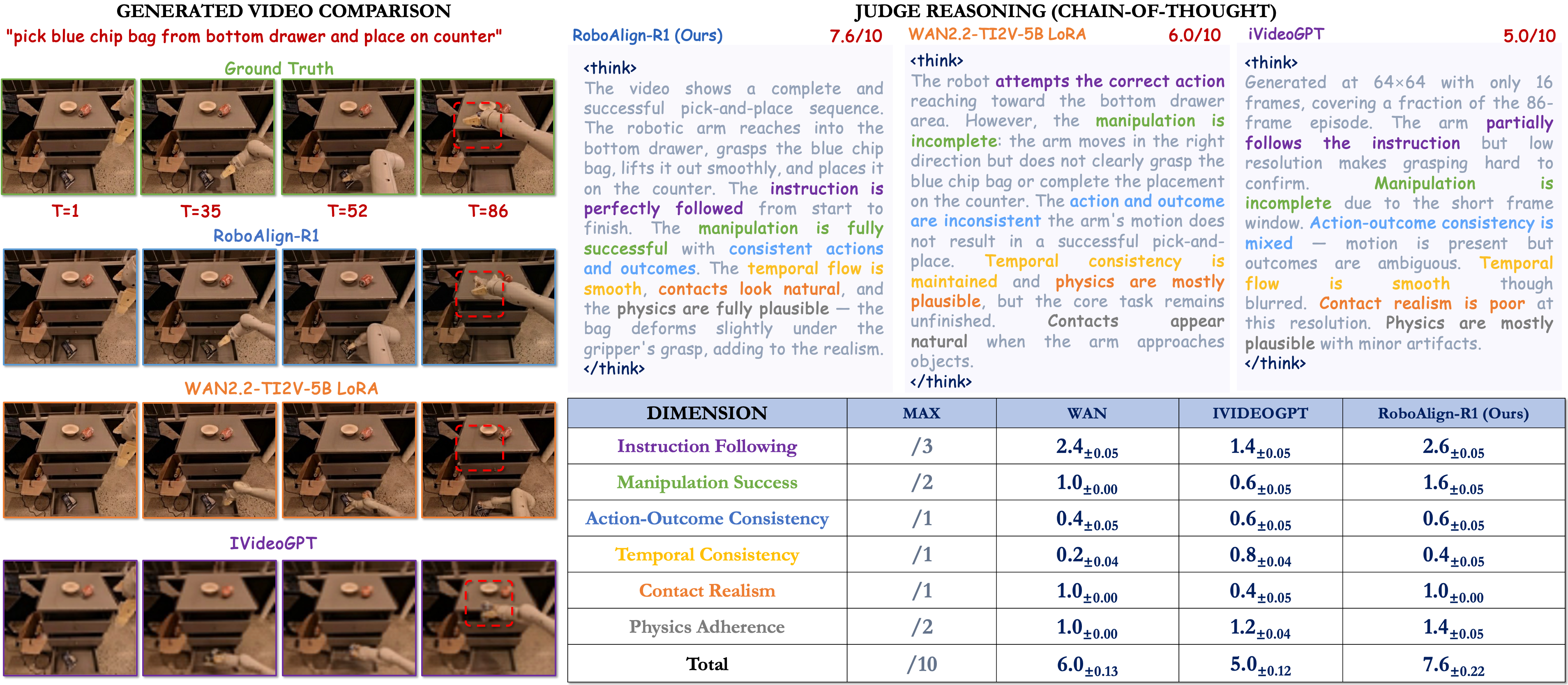}
\caption{\textbf{Qualitative comparison on a representative manipulation case.} \method{} generates physically coherent sequences with accurate grasping.}
\label{fig:case}
\end{figure*}

\begin{figure}[t]
\centering
\includegraphics[width=1\linewidth]{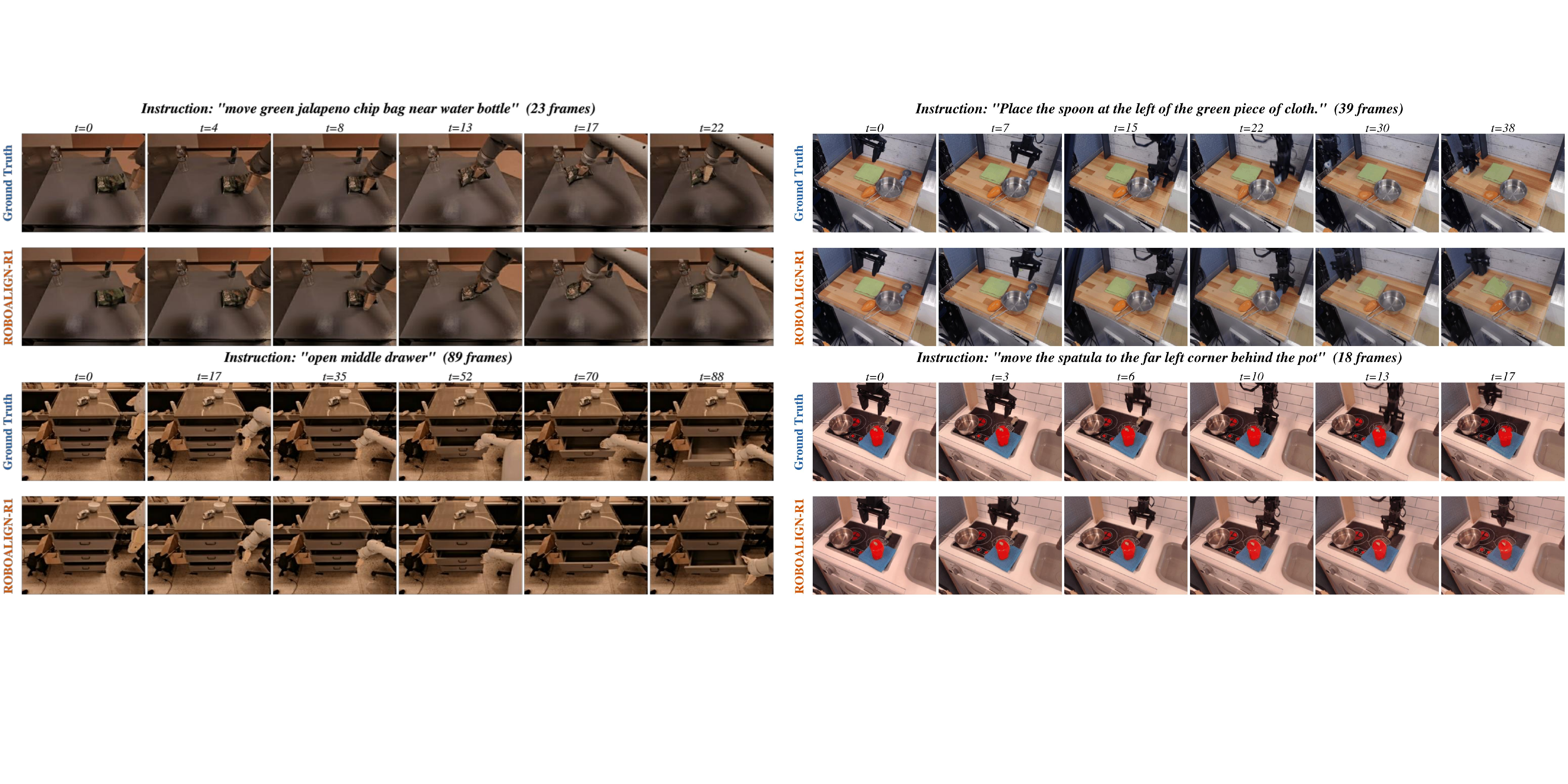}
\caption{\textbf{Qualitative case study of \method{} on RT-1 and BridgeData V2}, showing improved texture, shadow consistency, and background stability.}
\label{fig:case_rt_bridge}
\end{figure}

\begin{figure*}[h!]
  \centering
  \includegraphics[width=\linewidth]{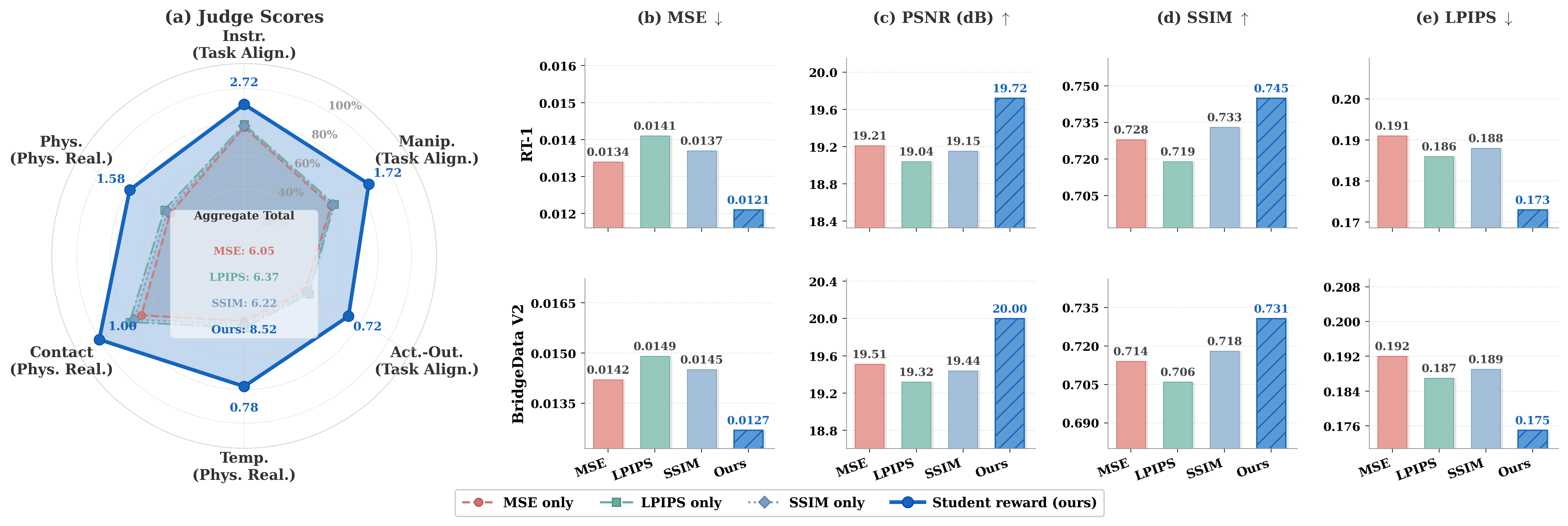}
  \caption{\textbf{{Reward-type ablation for RL post-training} on \textsc{RobotWorldBench}}. Panel (a) shows that the distilled student reward yields the strongest judge-aligned improvement across semantic and physical dimensions. Panels (b)--(e) show that it also delivers the best full-rollout low-level metrics on RT-1 and BridgeData~V2.}
  \label{fig:rq2_reward_ablation_judge}
\end{figure*}

\subsection[RQ2]{RQ2: Do Distilled Multimodal Rewards Outperform Low-Level Alternatives?}
We compare the distilled multimodal student reward with low-level alternatives under matched RL post-training settings. Figure~\ref{fig:rq2_reward_ablation_judge}(a) shows that the student reward performs best on all six semantic and physical dimensions on \textsc{RobotWorldBench}, reaching the highest aggregate score ($8.52$, $+33.8\%$ over the best single-metric reward baseline, LPIPS). Figure~\ref{fig:rq2_reward_ablation_judge}(b)--(e) further shows that it also yields the best full-rollout low-level metrics on both RT-1 and BridgeData~V2. Overall, the distilled reward is better aligned with our in-domain evaluator and remains beneficial to pixel fidelity.

\subsection[RQ3]{RQ3: Does SWR Improve Long-Horizon Quality Without Heavy Overhead?}

We compare SWR with default autoregressive (AR) decoding on RT-1. Table~\ref{tab:swr_quality_efficiency}(a) shows that SWR with $W{=}6$ improves long-horizon generation quality, yielding $+2.8\%$ SSIM, $+0.62$\,dB PSNR, and $-9.8\%$ LPIPS. The gains are larger in the region of interest, where ROI-LPIPS drops by $-12.2\%$. These results are consistent with SWR reducing accumulated drift by periodically refreshing the context (Figure~\ref{fig:rq3}), and $W{=}6$ provides the best quality--efficiency trade-off among the tested window sizes. These gains come with only modest overhead. Table~\ref{tab:swr_quality_efficiency}(b) shows that SWR keeps total inference time ($5.709$\,s vs.\ $5.646$\,s) and throughput ($5.26$\,FPS vs.\ $5.31$\,FPS) close to AR, with about $1.1\%$ extra wall-clock time. SWR also bounds KV-cache growth to $O(W)$, reducing maximum sequence length by $54.8\%$ and peak memory by $4.2\%$. As a result, SWR maintains a near-flat per-frame latency profile, whereas AR shows a $+6.8\%$ latency drift by frame 29.

\begin{table*}[t]
  \centering
  \footnotesize
  \setlength{\tabcolsep}{2pt}
  \renewcommand{\arraystretch}{1.04}
  \caption{\textbf{SWR empirical summary} on \textsc{RT-1}. \textbf{(a)}~Default autoregressive decoding vs.\ SWR with $W{=}6$ (mean $\pm$ std.; $\Delta$ vs.\ Default AR). \textbf{(b)}~Total wall-clock time vs.\ $W$ and per-frame latency on a keyframe subsample.}
  \label{tab:swr_quality_efficiency}
  \noindent
  \begin{minipage}[t]{0.52\textwidth}
    \vspace{0pt}
    \centering
    \textbf{\footnotesize (a) Quality \& efficiency at $W{=}6$}\\[2pt]
    \begingroup
    \footnotesize
    \setlength{\aboverulesep}{0.2ex}
    \setlength{\belowrulesep}{0.2ex}
    \setlength{\tabcolsep}{2.4pt}\renewcommand{\arraystretch}{1.00}%
    \fitwidth{%
    \begin{tabular}{@{}lccc@{}}
      \toprule
      \rowcolor{tblheadbg}
      Metric & Default AR & SWR ($W{=}6$) & $\Delta$ \\
      \midrule
      \multicolumn{4}{@{}l}{\textit{Quality}} \\
      SSIM $\uparrow$ & \plainscore{0.7526}{0.084} & \plainscore{0.7735}{0.072} & $+2.8\%$ \\
      \rowcolor{tblstripe}
      PSNR (dB) $\uparrow$ & \plainscore{20.49}{3.22} & \plainscore{21.11}{3.23} & $+0.62$\,dB \\
      LPIPS $\downarrow$ & \plainscore{0.2078}{0.071} & \plainscore{0.1875}{0.064} & $-9.8\%$ \\
      \rowcolor{tblstripe}
      ROI-PSNR $\uparrow$ & \plainscore{16.86}{3.49} & \plainscore{17.48}{3.53} & $+0.62$\,dB \\
      ROI-LPIPS $\downarrow$ & \plainscore{0.1027}{0.042} & \plainscore{0.0902}{0.039} & $-12.2\%$ \\
      \midrule
      \multicolumn{4}{@{}l}{\textit{Efficiency}} \\
      \rowcolor{tblstripe}
      Infer.\ (s) & \plainscore{5.646}{0.01} & \plainscore{5.709}{0.07} & $+1.1\%$ \\
      Throughput (FPS) & $5.31$ & $5.26$ & $-0.9\%$ \\
      \rowcolor{tblstripe}
      Peak mem.\ (GB) & $33.4$ & $32.0$ & $-4.2\%$ \\
      Max seq.\ len. & $4{,}070$ & $1{,}838$ & $-54.8\%$ \\
      \rowcolor{tblstripe}
      KV-cache growth & $O(T)$ & $O(W)$ & bounded \\
      Re-enc.\ overhead & --- & $73.8$\,ms & $1.3\%$ \\
      \bottomrule
    \end{tabular}}
    \endgroup
    \setlength{\tabcolsep}{2pt}\renewcommand{\arraystretch}{1.04}
  \end{minipage}%
  \hfill
  \begin{minipage}[t]{0.46\textwidth}
    \vspace{0pt}
    \centering
    \textbf{\footnotesize (b) Time \& latency vs.\ $W$}\\[2pt]
    \begin{tikzpicture}
      \pgfplotsset{every axis/.append style={font=\tiny}}
      \begin{groupplot}[
        group style={
          group size=1 by 2,
          vertical sep=3pt,
          horizontal sep=0pt,
        },
        width=0.95\linewidth,
        height=2.35cm,
        scale only axis,
      ]
      \nextgroupplot[
        ybar,
        bar width=5.2pt,
        ymin=5.58,
        ymax=5.92,
        ylabel={s},
        symbolic x coords={4,6,8,10,15,AR},
        xticklabels={$4$,$6$,$8$,$10$,$15$,AR},
        xtick=data,
        enlarge x limits={0.14},
        ymajorgrids=true,
        major grid style={dashed,gray!45},
        tick align=inside,
      ]
        \addplot+[
          fill=mydarkblue!55,
          draw=mydarkblue!85,
          line width=0.9pt,
        ] coordinates {
          (4,5.863) (6,5.709) (8,5.858) (10,5.740) (15,5.732) (AR,5.646)
        };
        \draw[dashed, mydarkblue!65, line width=1.15pt]
          (axis cs:4,5.646) -- (axis cs:AR,5.646);
      \nextgroupplot[
        ylabel={ms},
        xmin=-1.5,
        xmax=30.5,
        ymin=168,
        ymax=222,
        xtick={0,5,10,15,20,25,29},
        legend style={
          font=\tiny,
          legend columns=3,
          /tikz/every even column/.append style={column sep=0.3em},
          at={(0.98,0.98)},
          anchor=north east,
        },
        legend cell align=left,
        ymajorgrids=true,
        major grid style={dashed,gray!40},
      ]
        \addplot+[black, line width=1.15pt, mark=*, mark size=1.35pt] coordinates {
          (0,177.8) (5,174.9) (10,181.1) (15,184.2) (20,185.7) (25,187.1) (29,189.9)
        };
        \addplot+[mydarkblue, line width=1.1pt, mark=square*, mark size=1.2pt] coordinates {
          (0,180.9) (5,173.8) (10,174.5) (15,176.4) (20,187.1) (25,173.7) (29,176.1)
        };
        \addplot+[gray!70, dashed, line width=1.05pt, mark=triangle*, mark size=1.2pt] coordinates {
          (0,198.6) (5,172.6) (10,173.3) (15,173.8) (20,174.7) (25,173.5) (29,172.5)
        };
        \legend{AR, $W{=}6$, $W{=}4$}
      \end{groupplot}
      \node[
        font=\bfseries\tiny,
        text=mydarkblue,
        fill=white,
        fill opacity=0.85,
        text opacity=1,
        inner sep=1.1pt,
      ] at ($ (group c1r1.north west) ! 0.5 ! (group c1r1.south east) $) {Total Time};
      \node[
        font=\bfseries\tiny,
        text=codepurple,
        fill=white,
        fill opacity=0.85,
        text opacity=1,
        inner sep=1.1pt,
      ] at ($ (group c1r2.north west) ! 0.5 ! (group c1r2.south east) $) {Per-Frame Latency};
    \end{tikzpicture}\\[-0.1em]
    {\centering\color{red!60!black}\bfseries\tiny Setup: 12L, $H{=}768$, 12 heads, vLLM, \mbox{300 episodes} $\times$ 30 frames.\par}
  \end{minipage}
\end{table*}

\begin{figure*}[t]
  \centering
  \includegraphics[width=\linewidth]{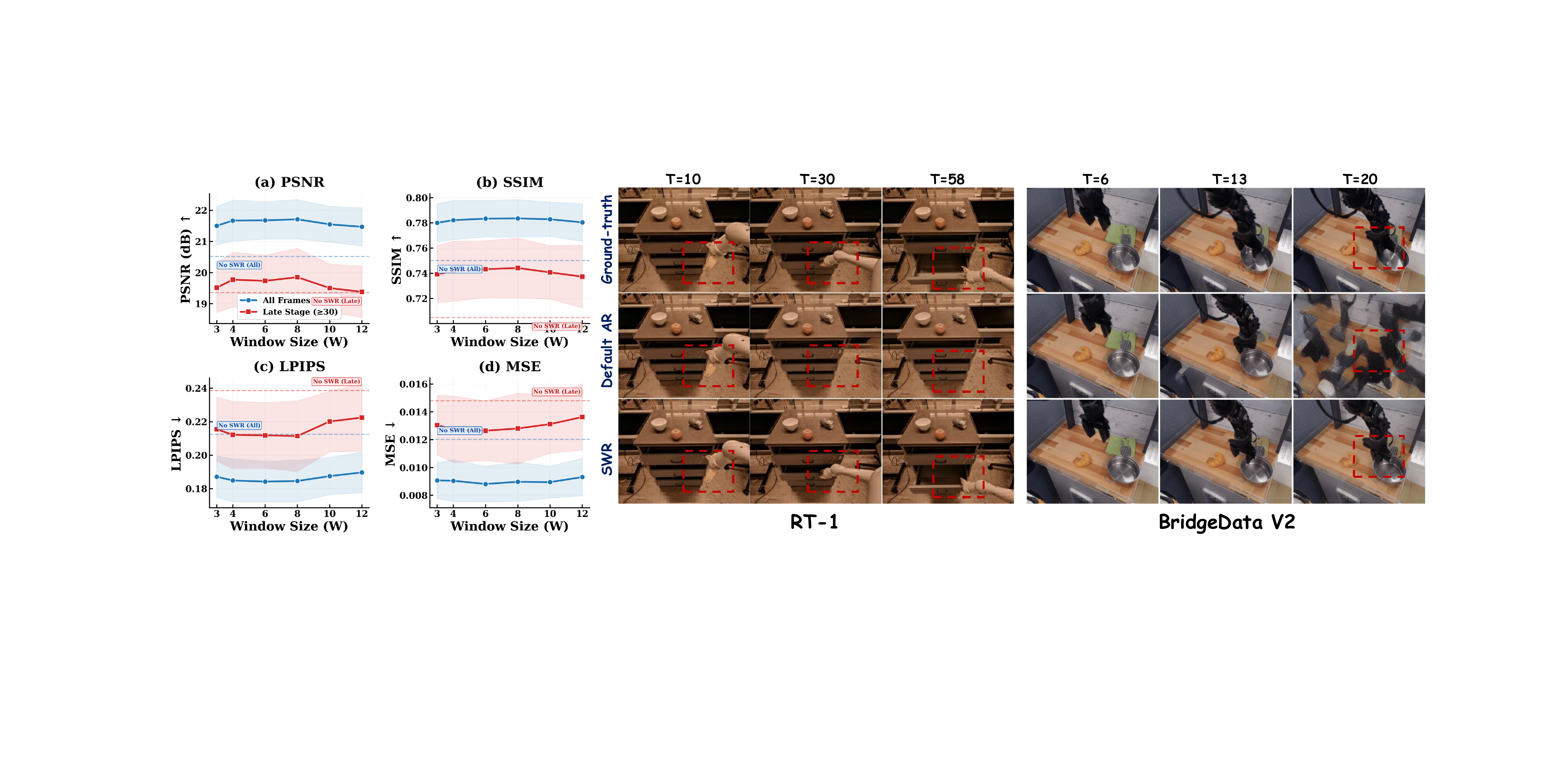}
  \caption{\textbf{Qualitative effect of SWR.} SWR mitigates error accumulation and maintains visual fidelity in long-horizon generation by periodically re-encoding recent context.}
  \label{fig:rq3}
\end{figure*}

\subsection[RQ4]{RQ4: Does Online Iterative Distillation Help Maintain Reward Alignment?}

\begin{wraptable}{r}{0.43\linewidth}
  \centering
  \tiny
  \setlength{\tabcolsep}{2.4pt}
  \renewcommand{\arraystretch}{0.98}
  \caption{\textbf{Iterative distillation.}}
  \label{tab:compact_ablation}
  \fitwidth{%
  \begin{tabular}{@{}lccc@{}}
    \toprule
    \rowcolor{tblheadbg}
    Setting & Total $\uparrow$ & Manip. $\uparrow$ & LPIPS $\downarrow$ \\
    \midrule
    \rowcolor{tblstripe}
    One-shot distill & \plainscore{8.09}{0.23} & \plainscore{1.58}{0.09} & \plainscore{0.176}{0.014} \\
    \rowcolor{mydarkblue!6}
    Online iterative & \bestscore{8.52}{0.15} & \bestscore{1.72}{0.07} & \bestscore{0.173}{0.013} \\
    \bottomrule
  \end{tabular}
  }
\end{wraptable}
Table~\ref{tab:compact_ablation} shows the effect of online iterative distillation under the same student architecture and evaluation protocol. Online iterative distillation improves the aggregate score from $8.09$ to $8.52$ and \textit{Manipulation Success} from $1.58$ to $1.72$, while slightly improving RT-1 LPIPS from $0.176$ to $0.173$. This supports the motivation in Section~\ref{sec:reward}: refreshing the student during RL helps maintain reward alignment under distribution shift.

\section{Conclusion}
We presented \method{}, a framework that improves robot video world models through reward-aligned post-training and stabilized long-horizon decoding. By constructing a robot-centric benchmark to train a multimodal teacher judge and distilling it into a lightweight student reward model, \method{} enables efficient RL post-training that lifts the aggregate in-domain evaluation score by $+10.1\%$ over the strongest baseline under our current protocol, with the overall ranking further supported by external VLM-based validation and a small-scale blinded human study; the proposed sliding window re-encoding further yields $+2.8\%$ SSIM and $-9.8\%$ LPIPS gains at about $1\%$ additional latency. Our current evaluation is still limited to single-arm tabletop and countertop manipulation, and downstream policy improvement remains to be validated. Extending to diverse embodiments, ensembling multiple judges, and measuring control-level gains are promising directions for future work. Code, datasets, models, and video samples are available at the \href{https://roboalign-r1.netlify.app/#bibtex}{project website.}

\bibliographystyle{assets/plainnat}
\bibliography{reference}

@article{ha2018world,
  title={World models},
  author={Ha, David and Schmidhuber, J{\"u}rgen},
  journal={arXiv preprint arXiv:1803.10122},
  volume={2},
  number={3},
  pages={440},
  year={2018}
}

@article{mathieu2015deep,
  title={Deep multi-scale video prediction beyond mean square error},
  author={Mathieu, Michael and Couprie, Camille and LeCun, Yann},
  journal={arXiv preprint arXiv:1511.05440},
  year={2015}
}

@article{hansen2023td,
  title={Td-mpc2: Scalable, robust world models for continuous control},
  author={Hansen, Nicklas and Su, Hao and Wang, Xiaolong},
  journal={arXiv preprint arXiv:2310.16828},
  year={2023}
}

@article{zhou2024robodreamer,
  title={Robodreamer: Learning compositional world models for robot imagination},
  author={Zhou, Siyuan and Du, Yilun and Chen, Jiaben and Li, Yandong and Yeung, Dit-Yan and Gan, Chuang},
  journal={arXiv preprint arXiv:2404.12377},
  year={2024}
}

@article{wang2004image,
  title={Image quality assessment: from error visibility to structural similarity},
  author={Wang, Zhou and Bovik, Alan C and Sheikh, Hamid R and Simoncelli, Eero P},
  journal={IEEE transactions on image processing},
  volume={13},
  number={4},
  pages={600--612},
  year={2004},
  publisher={IEEE}
}

@article{prabhudesai2024video,
  title={Video diffusion alignment via reward gradients},
  author={Prabhudesai, Mihir and Mendonca, Russell and Qin, Zheyang and Fragkiadaki, Katerina and Pathak, Deepak},
  journal={arXiv preprint arXiv:2407.08737},
  year={2024}
}

@article{brooks2024video,
  title={Video generation models as world simulators},
  author={Brooks, Tim and Peebles, Bill and Holmes, Connor and DePue, Will and Guo, Yufei and Jing, Leo and Schnurr, David and Taylor, Joe and Luhman, Troy and Luhman, Eric and others},
  journal={OpenAI Blog},
  volume={1},
  number={8},
  pages={1},
  year={2024}
}

@article{zhang2025step,
  title={A step toward world models: A survey on robotic manipulation},
  author={Zhang, Peng-Fei and Cheng, Ying and Sun, Xiaofan and Wang, Shijie and Li, Fengling and Zhu, Lei and Shen, Heng Tao},
  journal={arXiv preprint arXiv:2511.02097},
  year={2025}
}

@inproceedings{bruce2024genie,
  title={Genie: Generative interactive environments},
  author={Bruce, Jake and Dennis, Michael D and Edwards, Ashley and Parker-Holder, Jack and Shi, Yuge and Hughes, Edward and Lai, Matthew and Mavalankar, Aditi and Steigerwald, Richie and Apps, Chris and others},
  booktitle={Forty-first International Conference on Machine Learning},
  year={2024}
}

@article{yang2023learning,
  title={Learning interactive real-world simulators},
  author={Yang, Mengjiao and Du, Yilun and Ghasemipour, Kamyar and Tompson, Jonathan and Schuurmans, Dale and Abbeel, Pieter},
  journal={arXiv preprint arXiv:2310.06114},
  volume={1},
  number={2},
  pages={6},
  year={2023}
}

@article{wu2024beamvq,
  title={BeamVQ: aligning space-time forecasting model via self-training on physics-aware metrics},
  author={Wu, Hao and Shi, Xingjian and Huang, Ziyue and Zhao, Penghao and Xiong, Wei and Xue, Jinbao and Tao, Yangyu and Huang, Xiaomeng and Wang, Weiyan},
  journal={arXiv preprint arXiv:2405.17051},
  year={2024}
}

@article{ravuri2021skilful,
  title={Skilful precipitation nowcasting using deep generative models of radar},
  author={Ravuri, Suman and Lenc, Karel and Willson, Matthew and Kangin, Dmitry and Lam, Remi and Mirowski, Piotr and Fitzsimons, Megan and Athanassiadou, Maria and Kashem, Sheleem and Madge, Sam and others},
  journal={Nature},
  volume={597},
  number={7878},
  pages={672--677},
  year={2021},
  publisher={Nature Publishing Group UK London}
}

@article{finn2016unsupervised,
  title={Unsupervised learning for physical interaction through video prediction},
  author={Finn, Chelsea and Goodfellow, Ian and Levine, Sergey},
  journal={Advances in neural information processing systems},
  volume={29},
  year={2016}
}

@inproceedings{zhang2018unreasonable,
  title={The unreasonable effectiveness of deep features as a perceptual metric},
  author={Zhang, Richard and Isola, Phillip and Efros, Alexei A and Shechtman, Eli and Wang, Oliver},
  booktitle={Proceedings of the IEEE conference on computer vision and pattern recognition},
  pages={586--595},
  year={2018}
}

@article{li2024evaluating,
  title={Evaluating real-world robot manipulation policies in simulation},
  author={Li, Xuanlin and Hsu, Kyle and Gu, Jiayuan and Pertsch, Karl and Mees, Oier and Walke, Homer Rich and Fu, Chuyuan and Lunawat, Ishikaa and Sieh, Isabel and Kirmani, Sean and others},
  journal={arXiv preprint arXiv:2405.05941},
  year={2024}
}

@article{mao2025robot,
  title={Robot Learning from a Physical World Model},
  author={Mao, Jiageng and He, Sicheng and Wu, Hao-Ning and You, Yang and Sun, Shuyang and Wang, Zhicheng and Bao, Yanan and Chen, Huizhong and Guibas, Leonidas and Guizilini, Vitor and others},
  journal={arXiv preprint arXiv:2511.07416},
  year={2025}
}

@article{ding2025understanding,
  title={Understanding world or predicting future? a comprehensive survey of world models},
  author={Ding, Jingtao and Zhang, Yunke and Shang, Yu and Zhang, Yuheng and Zong, Zefang and Feng, Jie and Yuan, Yuan and Su, Hongyuan and Li, Nian and Sukiennik, Nicholas and others},
  journal={ACM Computing Surveys},
  volume={58},
  number={3},
  pages={1--38},
  year={2025},
  publisher={ACM New York, NY}
}

@article{huang2025vid2world,
  title={Vid2world: Crafting video diffusion models to interactive world models},
  author={Huang, Siqiao and Wu, Jialong and Zhou, Qixing and Miao, Shangchen and Long, Mingsheng},
  journal={arXiv preprint arXiv:2505.14357},
  year={2025}
}

@article{xiong2026phycritic,
  title={PhyCritic: Multimodal Critic Models for Physical AI},
  author={Xiong, Tianyi and Wang, Shihao and Liu, Guilin and Dong, Yi and Li, Ming and Huang, Heng and Kautz, Jan and Yu, Zhiding},
  journal={arXiv preprint arXiv:2602.11124},
  year={2026}
}

@article{li2025worldmodelbench,
  title={Worldmodelbench: Judging video generation models as world models},
  author={Li, Dacheng and Fang, Yunhao and Chen, Yukang and Yang, Shuo and Cao, Shiyi and Wong, Justin and Luo, Michael and Wang, Xiaolong and Yin, Hongxu and Gonzalez, Joseph E and others},
  journal={arXiv preprint arXiv:2502.20694},
  year={2025}
}

@article{wu2025rlvr,
  title={Rlvr-world: Training world models with reinforcement learning},
  author={Wu, Jialong and Yin, Shaofeng and Feng, Ningya and Long, Mingsheng},
  journal={arXiv preprint arXiv:2505.13934},
  year={2025}
}

@article{wang2025beamvq,
  title={BeamVQ: Beam search with vector quantization to mitigate data scarcity in physical spatiotemporal forecasting},
  author={Wang, Weiyan and Shi, Xingjian and Shu, Ruiqi and Gao, Yuan and Chen, Rui Ray and Wang, Kun and Xu, Fan and Xue, Jinbao and Li, Shuaipeng and Tao, Yangyu and others},
  journal={arXiv preprint arXiv:2502.18925},
  year={2025}
}

@article{sfp2026,
  title={Spatiotemporal Forecasting as Planning: A Model-Based Reinforcement Learning Approach with Generative World Models},
  author={Anonymous},
  journal={Under review as a conference paper at ICLR 2026},
  year={2026}
}

@inproceedings{huang2024vbench,
  title={Vbench: Comprehensive benchmark suite for video generative models},
  author={Huang, Ziqi and He, Yinan and Yu, Jiashuo and Zhang, Fan and Si, Chenyang and Jiang, Yuming and Zhang, Yuanhan and Wu, Tianxing and Jin, Qingyang and Chanpaisit, Nattapol and others},
  booktitle={Proceedings of the IEEE/CVF Conference on Computer Vision and Pattern Recognition},
  pages={21807--21818},
  year={2024}
}

@inproceedings{he2024videoscore,
  title={Videoscore: Building automatic metrics to simulate fine-grained human feedback for video generation},
  author={He, Xuan and Jiang, Dongfu and Zhang, Ge and Ku, Max and Soni, Achint and Siu, Sherman and Chen, Haonan and Chandra, Abhranil and Jiang, Ziyan and Arulraj, Aaran and others},
  booktitle={Proceedings of the 2024 Conference on Empirical Methods in Natural Language Processing},
  pages={2105--2123},
  year={2024}
}

@article{xiao2023efficient,
  title={Efficient streaming language models with attention sinks},
  author={Xiao, Guangxuan and Tian, Yuandong and Chen, Beidi and Han, Song and Lewis, Mike},
  journal={arXiv preprint arXiv:2309.17453},
  year={2023}
}

@inproceedings{xiong2025llava,
  title={Llava-critic: Learning to evaluate multimodal models},
  author={Xiong, Tianyi and Wang, Xiyao and Guo, Dong and Ye, Qinghao and Fan, Haoqi and Gu, Quanquan and Huang, Heng and Li, Chunyuan},
  booktitle={Proceedings of the Computer Vision and Pattern Recognition Conference},
  pages={13618--13628},
  year={2025}
}

@article{yan2021videogpt,
  title={Videogpt: Video generation using vq-vae and transformers},
  author={Yan, Wilson and Zhang, Yunzhi and Abbeel, Pieter and Srinivas, Aravind},
  journal={arXiv preprint arXiv:2104.10157},
  year={2021}
}

@article{wu2024ivideogpt,
  title={ivideogpt: Interactive videogpts are scalable world models},
  author={Wu, Jialong and Yin, Shaofeng and Feng, Ningya and He, Xu and Li, Dong and Hao, Jianye and Long, Mingsheng},
  journal={Advances in Neural Information Processing Systems},
  volume={37},
  pages={68082--68119},
  year={2024}
}

@article{bai2025qwen25vl,
  title={Qwen2.5-{VL} Technical Report},
  author={Bai, Shuai and Chen, Keqin and Liu, Xuejing and Wang, Jialin and Ge, Wenbin and Song, Sibo and Dang, Kai and Wang, Peng and Wang, Shijie and Tang, Jun and others},
  journal={arXiv preprint arXiv:2502.13923},
  year={2025}
}

@article{shao2024deepseekmath,
  title={Deep{S}eek{M}ath: Pushing the limits of mathematical reasoning in open language models},
  author={Shao, Zhihong and Wang, Peiyi and Zhu, Qihao and Xu, Runxin and Song, Junxiao and Zhang, Mingchuan and Li, YK and Wu, Y and Guo, Daya},
  journal={arXiv preprint arXiv:2402.03300},
  year={2024}
}

@inproceedings{mentzer2024finite,
  title={Finite scalar quantization: {VQ-VAE} made simple},
  author={Mentzer, Fabian and Minnen, David and Agustsson, Eirikur and Toderici, George},
  booktitle={International Conference on Learning Representations},
  year={2024}
}

@article{brohan2023rt1,
  title={Rt-1: Robotics transformer for real-world control at scale},
  author={Brohan, Anthony and Brown, Noah and Carbajal, Justice and Chebotar, Yevgen and Dabis, Joseph and Finn, Chelsea and Gober, Keerthana and Hausman, Karol and Herzog, Alexander and Hsu, Jasmine and others},
  journal={arXiv preprint arXiv:2212.06817},
  year={2023}
}

@article{walke2023bridgedata,
  title={Bridgedata v2: A dataset for robot learning at scale},
  author={Walke, Homer and Black, Kevin and Zhao, Tony Z and Vuong, Quan and Zheng, Chongyi and Hansen-Estruch, Philippe and He, Andre Wang and Myers, Vivek and Kim, Moo Jin and Du, Max and others},
  journal={arXiv preprint arXiv:2308.12952},
  year={2023}
}

@inproceedings{mees2022calvin,
  title={{CALVIN}: A benchmark for language-conditioned policy learning for long-horizon robot manipulation tasks},
  author={Mees, Oier and Hermann, Lukas and Rosete-Beas, Erick and Burgard, Wolfram},
  booktitle={IEEE Robotics and Automation Letters},
  volume={7},
  number={3},
  pages={7327--7334},
  year={2022}
}

@inproceedings{liu2024libero,
  title={{LIBERO}: Benchmarking knowledge transfer for lifelong robot learning},
  author={Liu, Bo and Zhu, Yifeng and Gao, Chongkai and Feng, Yizhou and Liu, Qiang and Zhu, Yuke and Stone, Peter},
  booktitle={Advances in Neural Information Processing Systems},
  volume={36},
  year={2024}
}

@article{wang2022predrnn,
  title={Predrnn: A recurrent neural network for spatiotemporal predictive learning},
  author={Wang, Yunbo and Wu, Haixu and Zhang, Jianjin and Gao, Zhifeng and Wang, Jianmin and Yu, Philip S and Long, Mingsheng},
  journal={IEEE Transactions on Pattern Analysis and Machine Intelligence},
  volume={45},
  number={2},
  pages={2208--2225},
  year={2022},
  publisher={IEEE}
}

@article{tan2026towards,
  title={Towards Generalist Embodied AI: A Survey on World Models for VLA Agents},
  author={Tan, Wentao and Zhu, Lei and Wang, Bowen and Xie, Enci and Ji, Baixu and Lin, Zengrong and Yang, Wenjie and Li, Jingjing and Shen, Heng Tao},
  journal={Authorea Preprints},
  year={2026},
  publisher={Authorea}
}

@inproceedings{ni2025maskgwm,
  title={Maskgwm: A generalizable driving world model with video mask reconstruction},
  author={Ni, Jingcheng and Guo, Yuxin and Liu, Yichen and Chen, Rui and Lu, Lewei and Wu, Zehuan},
  booktitle={Proceedings of the Computer Vision and Pattern Recognition Conference},
  pages={22381--22391},
  year={2025}
}

@article{hu2022lora,
  title={Lora: Low-rank adaptation of large language models.},
  author={Hu, Edward J and Shen, Yelong and Wallis, Phillip and Allen-Zhu, Zeyuan and Li, Yuanzhi and Wang, Shean and Wang, Liang and Chen, Weizhu and others},
  journal={Iclr},
  volume={1},
  number={2},
  pages={3},
  year={2022}
}

@article{barcellona2024dream,
  title={Dream to manipulate: Compositional world models empowering robot imitation learning with imagination},
  author={Barcellona, Leonardo and Zadaianchuk, Andrii and Allegro, Davide and Papa, Samuele and Ghidoni, Stefano and Gavves, Efstratios},
  journal={arXiv preprint arXiv:2412.14957},
  year={2024}
}

@inproceedings{pang2025learning,
  title={Learning view-invariant world models for visual robotic manipulation},
  author={Pang, Jing-Cheng and Tang, Nan and Li, Kaiyuan and Tang, Yuting and Cai, Xin-Qiang and Zhang, Zhen-Yu and Niu, Gang and Sugiyama, Masashi and Yu, Yang},
  booktitle={The Thirteenth International Conference on Learning Representations},
  year={2025}
}

\beginappendix
\crefalias{section}{appendix}
\appendix

\section*{Appendix Contents}
\begingroup
\footnotesize
\setlength{\parindent}{0pt}
\newcommand{\apptocline}[3][0em]{%
  \noindent\hspace*{#1}\hyperref[#2]{\ref*{#2}\hspace{0.6em}#3}%
  \nobreak\leaders\hbox{$\mkern2mu.\mkern2mu$}\hfill\nobreak\pageref*{#2}\par}
\newcommand{\appsec}[2]{\vspace{0.2em}\apptocline{#1}{\textbf{#2}}}
\newcommand{\appsub}[2]{\apptocline[1.6em]{#1}{#2}}

\appsec{app:related_work}{Extended Related Work}

\appsec{app:vwm_training}{Training Details of the Video World Model}
\appsub{app:vwm_notation}{Notation}
\appsub{app:vwm_stage1}{Stage 1: Context-Aware Compression Tokenizer}
\appsub{app:vwm_stage2}{Stage 2: Autoregressive Transformer World Model}
\appsub{app:vwm_dataset}{Dataset: Bridge V2 Instantiation}
\appsub{app:vwm_repro}{Reproduction Protocol}
\appsub{app:vwm_infra}{Optimization Infrastructure}
\appsub{app:vwm_summary}{End-to-End Quantitative Summary}
\appsub{app:vwm_tb}{TensorBoard Training Curves}

\appsec{app:proof_swr}{Stylized derivation for Proposition~\ref{prop:swr}}
\appsec{app:code_train}{Pseudocode for Reward-Aligned Post-Training}
\appsec{app:student_reward}{Student Reward Model Details}
\appsec{app:code}{Pseudocode and Implementation of Sliding Window Re-encoding}
\appsec{app:swr_window_ablate}{SWR window-size ablation}
\appsec{app:swr_case_figs}{SWR qualitative visualizations}

\appsec{app:eval_metrics}{Evaluation Metrics}
\appsub{app:metric_semantic}{Semantic and Physical Alignment Metrics}
\appsub{app:metric_pixel}{Pixel-Level Reconstruction Metrics}
\appsub{app:roi_metrics}{Motion-Mask-Based ROI Metrics}

\appsec{app:roboalign_judge}{\textsc{RoboAlign-Judge}: Multimodal Teacher Judge Training Details}
\appsub{app:judge_motivation}{Motivation and Design Rationale}
\appsub{app:judge_data}{Training Data Construction}
\appsub{app:judge_rubric}{Six-Dimension Scoring Rubric}
\appsub{app:judge_training}{Model Architecture and LoRA Fine-Tuning}
\appsub{app:judge_cot}{Chain-of-Thought Reasoning}
\appsub{app:judge_to_reward}{From Teacher Judge to Reward Signal}
\appsub{app:judge_vlm_comparison}{Comparison with Alternative VLM Judges}
\appsub{app:human_eval_template}{Small-scale Blinded Human Evaluation}
\appsub{app:judge_comparison}{Comparison with Alternative Judge Backbones}
\appsub{app:judge_limitations}{Limitations and Future Directions}
\appsub{app:judge_prompt}{Judge Prompt Templates}
\endgroup
\clearpage


\section{Extended Related Work}
\label{app:related_work}

\paragraph{Robot Video World Models.}
World models provide learnable internal simulators for planning, policy evaluation, and long-horizon decision making by predicting how environments evolve under candidate actions \cite{ha2018world,hansen2023td}. With the rise of video generation and embodied learning, recent work has increasingly explored video-based world models for robotics, including approaches that model manipulation dynamics, interaction outcomes, and future observations through video prediction or generative simulation \cite{brooks2024video,bruce2024genie,yang2023learning,zhou2024robodreamer,huang2025vid2world,mao2025robot}. These studies highlight the promise of video world models for embodied intelligence, but most still emphasize generation quality, dynamics modeling, or representation learning, and are largely trained with maximum-likelihood or reconstruction-style objectives \cite{zhang2025step,ding2025understanding}. In contrast, our work focuses on reward-aligned post-training for robot video world models.

\paragraph{RL Post-Training for World Models.}
Recent work has begun to explore reinforcement learning as a post-training mechanism for world models, showing that world-model behavior can be further aligned with downstream objectives beyond standard supervised fitting \cite{wu2025rlvr,prabhudesai2024video,sfp2026}. However, existing approaches still rely mainly on low-level or verifiable rewards, such as pixel reconstruction error or perceptual similarity metrics \cite{wu2025rlvr,mathieu2015deep,wu2024beamvq,zhang2018unreasonable,wang2004image,wang2025beamvq}. While these rewards are stable and easy to compute, they remain weak proxies for the properties that matter in robot prediction, such as correct instruction execution, physically plausible contact dynamics, and consistent long-horizon action outcomes. Similar limitations have also been observed in physical spatiotemporal forecasting~\cite{ravuri2021skilful,finn2016unsupervised}, where optimizing average reconstruction objectives often leads to oversmoothed predictions and poor coverage of rare but decision-critical events \cite{wang2025beamvq,sfp2026}. Long-horizon rollout quality is also shaped by inference-time generation strategy: in autoregressive token-based world models~\cite{yan2021videogpt,xiao2023efficient,wu2024ivideogpt,wang2022predrnn}, prediction errors can accumulate over time and progressively degrade later predictions. Our work therefore complements reward-aligned post-training with a simple decoding strategy for stabilizing long-horizon rollouts.

\paragraph{Multimodal Judges and Rewards.}
Another relevant line of research studies multimodal evaluators and reward models for video understanding and generation \cite{huang2024vbench,he2024videoscore,xiong2025llava}. Prior work shows that multimodal judges can provide richer supervision than low-level visual metrics by explicitly assessing instruction following, physical plausibility, and higher-level video consistency \cite{li2025worldmodelbench,xiong2026phycritic,huang2024vbench,he2024videoscore}. Such evaluators are much better aligned with human judgment, but their inference cost and latency make them difficult to use directly as online rewards in reinforcement learning \cite{li2025worldmodelbench,he2024videoscore,xiong2026phycritic}. Our method is closely related in spirit to this line of work, but differs in two important ways: we focus on robot video world models rather than general video evaluation, and we distill high-capacity multimodal judgment into a lightweight reward model that is practical for online RL. This makes multimodal reward supervision not only evaluative, but directly usable for post-training robot world models.

\section{Training Details of the Video World Model}\label{app:vwm_training}

This appendix details the implementation of the two-stage video world model used as the backbone throughout our experiments.
Stage~1 trains a context-aware compression tokenizer that maps raw robot video clips into discrete index sequences; on top of a frozen Stage~1 tokenizer, Stage~2 trains an autoregressive Transformer that jointly models visual and action tokens.
The two stages share the same clip length, spatial resolution, and data pipeline, and communicate exclusively through discrete tokens.

\subsection{Notation}\label{app:vwm_notation}

Unless stated otherwise, the symbols in Table~\ref{tab:vwm_notation} are shared between both stages.

\begin{table}[h]
  \centering
  \scriptsize
  \setlength{\tabcolsep}{6pt}
  \renewcommand{\arraystretch}{1.10}
  \caption{Symbols shared between Stage~1 and Stage~2.}
  \label{tab:vwm_notation}
  \fitwidth{%
  \begin{tabular}{@{}llc@{}}
    \toprule
    \rowcolor{tblheadbg}
    Symbol & Meaning & Value \\
    \midrule
    \rowcolor{tblstripe}
    $T$ & Clip length (frames) & $8$ \\
    $t_c$ & Context frames & $1$ \\
    \rowcolor{tblstripe}
    $t_d = T - t_c$ & Dynamic frames & $7$ \\
    $H \times W$ & Spatial resolution & $256 \times 320$ \\
    \rowcolor{tblstripe}
    $p$ & Dynamic-branch patch size & $4$ \\
    $D$ & FSQ quantization dimension & $5$ \\
    \rowcolor{tblstripe}
    $\mathbf{L}$ & FSQ level vector & $(7,5,5,5,5)$ \\
    $K = \prod_i L_i$ & Codebook size & $4375$ \\
    \rowcolor{tblstripe}
    $N_c$ & Context tokens per clip & $1280$ \\
    $N_d$ & Dynamic tokens per frame & $80$ \\
    \rowcolor{tblstripe}
    $D_a$ & Action-vector dimensionality (Bridge~V2) & $13$ \\
    $B_a$ & Per-dimension action bins & $256$ \\
    \rowcolor{tblstripe}
    $V$ & Joint vocabulary size (Stage~2) & $9008$ \\
    $S$ & Transformer sequence length (Stage~2) & $1931$ \\
    \rowcolor{tblstripe}
    $\phi_l$ & VGG-16 feature map at layer $l$ (LPIPS) & --- \\
    \bottomrule
  \end{tabular}}
\end{table}

\subsection{Stage 1: Context-Aware Compression Tokenizer}\label{app:vwm_stage1}

Stage~1 compresses a clip $\{x_t\}_{t=1}^{T}\!\in\!\mathbb{R}^{T\times 3\times H\times W}$ into $1840$ discrete indices using two asymmetric branches: a \emph{Full VAE} over the context frame $x_c\!:=\!x_1$ and a \emph{Conditional VAE} over the dynamic frames $x_d\!:=\!\{x_t\}_{t=2}^{T}$. Quantization is carried out by Finite Scalar Quantization (FSQ); no learnable codebook is used.

\paragraph{Architecture.}
The \emph{context branch} uses a convolutional encoder $E_c$ that produces a latent feature map together with an intermediate feature pyramid $\{f_l^{(E)}\}_{l=1}^{L}$:
\[
\bigl(h,\;\{f_l^{(E)}\}_{l=1}^{L}\bigr) = E_c(x_c),\qquad h\in\mathbb{R}^{256\times 32\times 40}.
\]
A $1{\times}1$ convolution projects $h$ into the quantization space, $z_c = \mathrm{QuantConv}(h)\in\mathbb{R}^{5\times 32\times 40}$. No spatial patchification is applied, so this branch emits $N_c = 32\cdot 40 = 1280$ tokens, each covering an $8{\times}8$ pixel block of the input.

The \emph{dynamic branch} uses a conditional encoder $E_d$ whose downsampling stages are interleaved with cross-attention layers that inject the context feature pyramid,
\[
s_{l+1} = \mathrm{CrossAttn}\!\bigl(\mathrm{DownBlock}_l(s_l),\; f_{l+1}^{(E)}\bigr),
\]
so that $E_d$ encodes only the residual dynamics relative to the static context. Its output $d'\!\in\!\mathbb{R}^{64\times 32\times 40}$ is patchified with $p=4$ and linearly projected,
\[
d'' = \mathrm{Patchify}_p(d')\in\mathbb{R}^{80\times 1024},\qquad
z_d = W_q\,d'' + b_q \in \mathbb{R}^{80\times 5},\quad W_q\in\mathbb{R}^{5\times 1024},
\]
producing $N_d = 80$ tokens per dynamic frame, each covering a $32{\times}32$ pixel block.

\paragraph{FSQ quantization.}
For a vector $z\in\mathbb{R}^{D}$ with $D=5$ and $\mathbf{L}=(7,5,5,5,5)$, FSQ performs per-dimension bounding, straight-through rounding, and mixed-radix encoding:
\[
\tilde z_i = \tfrac{L_i}{2}\tanh(z_i + s_i) - o_i,\qquad
\hat z_i = \mathrm{round}(\tilde z_i) + \bigl(\tilde z_i - \mathrm{sg}[\mathrm{round}(\tilde z_i)]\bigr),
\]
\[
\mathrm{idx}(\hat z) \;=\; \sum_{i=1}^{D}\Bigl(\hat z_i+\lfloor L_i/2\rfloor\Bigr)\!\prod_{j<i}L_j \;\in\; \{0,\ldots,K-1\},\qquad K=\prod_i L_i = 4375.
\]
Because the index set is analytically fixed by $\mathbf{L}$ and every codeword is reachable by construction, no commitment loss is required.

\paragraph{Decoding.}
Symmetric decoders $D_c$ and $D_d$ reconstruct the clip,
\[
\hat x_c = D_c\!\bigl(\mathrm{PostQuantConv}(\mathcal{Q}(z_c))\bigr),\qquad
\hat x_d = D_d\!\bigl(\mathrm{UnPatchify}(\mathrm{PostQuantLinear}(\mathcal{Q}(z_d)));\;\{f_l^{(D)}\}\bigr),
\]
with cross-attention inserted at every upsampling stage of $D_d$ to read the decoder-side feature pyramid $\{f_l^{(D)}\}$ of $D_c$. \emph{No skip connection bypasses the quantization bottleneck}: the decoders depend solely on the discrete indices. Each clip therefore carries $N_c + t_d\cdot N_d = 1280 + 7\cdot 80 = 1840$ tokens, corresponding to a compression ratio of roughly $1070\times$ relative to the $8\!\cdot\!3\!\cdot\!256\!\cdot\!320 = 1{,}966{,}080$ raw pixel values.

\paragraph{End-to-end data flow.}
Figure~\ref{fig:vwm_stage1_dataflow} summarizes the Stage~1 forward pipeline, making explicit how the two branches share the context feature pyramid yet quantize independently through FSQ, and how the decoders reconstruct the clip from discrete indices alone.

\begin{figure}[h]
  \centering
  \scriptsize
  \begin{tikzpicture}[
      node distance=5mm and 11mm,
      font=\scriptsize,
      tensor/.style={draw, rounded corners=1pt, fill=mydarkblue!6, inner sep=2.5pt, align=center, minimum width=34mm},
      op/.style={draw, rounded corners=1pt, fill=gray!8, inner sep=2.5pt, align=center, minimum width=34mm},
      fsq/.style={draw, rounded corners=1pt, fill=mydarkblue!18, inner sep=2.5pt, align=center, minimum width=34mm},
      tok/.style={draw, rounded corners=1pt, fill=mydarkblue!26, inner sep=2.5pt, align=center, minimum width=34mm},
      outbox/.style={draw, rounded corners=1pt, fill=mydarkblue!10, inner sep=2.5pt, align=center, minimum width=34mm},
      arr/.style={-Stealth, thick},
      cross/.style={-Stealth, thick, dashed, mydarkblue!70!black},
    ]
    \node[tensor] (xc) {$x_c \in \mathbb{R}^{1\times 3\times 256\times 320}$};
    \node[op, below=of xc] (Ec) {$E_c$ (convolutional encoder)};
    \node[tensor, below=of Ec] (h) {$h \in \mathbb{R}^{256\times 32\times 40}$};
    \node[op, below=of h] (qconv) {$\mathrm{QuantConv}$ ($256\!\to\!5$)};
    \node[tensor, below=of qconv] (zc) {$z_c \in \mathbb{R}^{5\times 32\times 40}$};
    \node[fsq, below=of zc] (fsqc) {FSQ\quad($K=4375$)};
    \node[tok, below=of fsqc] (idxc) {$\mathrm{idx}_c \in \{0,\ldots,K{-}1\}^{32\times 40}$\\[-1pt]\footnotesize $= 1280$ context tokens};
    \node[op, below=of idxc] (dc) {PostQuantConv $+$ $D_c$};
    \node[outbox, below=of dc] (xchat) {$\hat{x}_c \in \mathbb{R}^{3\times 256\times 320}$};

    \node[tensor, right=of xc] (xd) {$x_d \in \mathbb{R}^{7\times 3\times 256\times 320}$};
    \node[op, below=of xd] (Ed) {$E_d$ (conditional encoder)};
    \node[tensor, below=of Ed] (dp) {$d' \in \mathbb{R}^{64\times 32\times 40}$};
    \node[op, below=of dp] (patch) {$\mathrm{Patchify}_{p=4}\,+\,$Linear ($1024\!\to\!5$)};
    \node[tensor, below=of patch] (zd) {$z_d \in \mathbb{R}^{7\times 80\times 5}$};
    \node[fsq, below=of zd] (fsqd) {FSQ\quad($K=4375$)};
    \node[tok, below=of fsqd] (idxd) {$\mathrm{idx}_d \in \{0,\ldots,K{-}1\}^{7\times 80}$\\[-1pt]\footnotesize $= 560$ dynamic tokens};
    \node[op, below=of idxd] (dd) {PostQuantLinear $+$ UnPatchify $+$ $D_d$};
    \node[outbox, below=of dd] (xdhat) {$\hat{x}_d \in \mathbb{R}^{7\times 3\times 256\times 320}$};

    \foreach \a/\b in {xc/Ec, Ec/h, h/qconv, qconv/zc, zc/fsqc, fsqc/idxc, idxc/dc, dc/xchat}{
      \draw[arr] (\a) -- (\b);
    }
    \foreach \a/\b in {xd/Ed, Ed/dp, dp/patch, patch/zd, zd/fsqd, fsqd/idxd, idxd/dd, dd/xdhat}{
      \draw[arr] (\a) -- (\b);
    }

    \draw[cross] (Ec.east) -- node[above, sloped, font=\tiny]{$\{f_l^{(E)}\}$} node[below, sloped, font=\tiny]{CrossAttn} (Ed.west);
    \draw[cross] (dc.east) -- node[above, sloped, font=\tiny]{$\{f_l^{(D)}\}$} node[below, sloped, font=\tiny]{CrossAttn} (dd.west);

    \node[above=1mm of xc, font=\scriptsize\bfseries, text=mydarkblue] {Context branch (Full VAE)};
    \node[above=1mm of xd, font=\scriptsize\bfseries, text=mydarkblue] {Dynamic branch (Conditional VAE)};
  \end{tikzpicture}
  \caption{\textbf{End-to-end data flow of the Stage~1 tokenizer.}
  The context frame $x_c$ passes through a Full VAE and is quantized into $1280$ context tokens at $8{\times}8$ granularity; the $7$ dynamic frames $x_d$ pass through a Conditional VAE whose encoder and decoder are cross-attention-conditioned on the context feature pyramids $\{f_l^{(E)}\}$ and $\{f_l^{(D)}\}$ (dashed arrows), producing $560$ dynamic tokens at $32{\times}32$ granularity. FSQ uses a shared codebook of size $K=4{,}375$ without any commitment loss, and both decoders depend solely on the discrete indices -- no skip connection bypasses the quantization bottleneck.}
  \label{fig:vwm_stage1_dataflow}
\end{figure}

\paragraph{Training objective.}
Let $\hat x := [\hat x_c;\,\hat x_d]$. The generator loss combines reconstruction, perceptual, and adversarial terms,
\[
\mathcal{L}_G \;=\; \lambda_r\bigl(\mathcal{L}_{\text{rec}}^{d} + \mathcal{L}_{\text{rec}}^{c}\bigr) + \lambda_p\bigl(\mathcal{L}_{\text{perc}}^{d} + \mathcal{L}_{\text{perc}}^{c}\bigr) + \lambda_a\cdot\mathbb{1}[t\ge T_{\mathrm{adv}}]\cdot\mathcal{L}_{\text{adv}}^{G},
\]
with
\[
\mathcal{L}_{\text{rec}}^{d} = \tfrac{1}{t_d\,3HW}\textstyle\sum_{k=1}^{t_d}\|x_d^{(k)}-\hat x_d^{(k)}\|_1,\quad
\mathcal{L}_{\text{rec}}^{c} = \tfrac{1}{3HW}\|x_c-\hat x_c\|_1,
\]
\[
\mathcal{L}_{\text{perc}}^{\bullet} = \textstyle\sum_{l}w_l\|\phi_l(\tilde x^{\bullet})-\phi_l(\tilde{\hat x}^{\bullet})\|_2^2,\quad \tilde x = 2x-1,\qquad
\mathcal{L}_{\text{adv}}^{G} = -\mathbb{E}\bigl[D_\psi(\hat x)\bigr],
\]
where $\phi_l$ denotes ImageNet-pretrained VGG-16 features with LPIPS linear weights $\{w_l\}$. The discriminator $D_\psi$ is a depth-$6$ PatchGAN trained with the hinge objective and an $R_1$-style gradient penalty,
\[
\mathcal{L}_D = \mathbb{E}\bigl[\max(0,\,1+D_\psi(\hat x))\bigr] + \mathbb{E}\bigl[\max(0,\,1-D_\psi(x))\bigr] + \gamma\,\mathbb{E}_x\!\bigl[(\|\nabla_x D_\psi(x)\|_2 - 1)^2\bigr].
\]
The adversarial term is disabled for the first $T_{\mathrm{adv}}=10{,}000$ steps; thereafter $G$ and $D$ are updated alternately under a shared global-step counter.

\paragraph{Data and augmentation.}
Training follows the Open~X-Embodiment (OXE) mixed-dataset protocol, with mixing weights proportional to the sample counts of each sub-dataset; the numbers reported in this appendix are obtained on the Bridge~V2 subset for reproducibility. For each trajectory we sample a start index uniformly and extract a clip of length $T=8$ with stride~$1$. Photometric and geometric augmentations are sampled once per clip and shared across all $T$ frames (Table~\ref{tab:vwm_augment}).

\begin{table}[h]
  \centering
  \scriptsize
  \setlength{\tabcolsep}{6pt}
  \renewcommand{\arraystretch}{1.10}
  \caption{Clip-wise augmentation ranges used by the Stage~1 tokenizer.}
  \label{tab:vwm_augment}
  \fitwidth{%
  \begin{tabular}{@{}ll@{}}
    \toprule
    \rowcolor{tblheadbg}
    Augmentation & Range \\
    \midrule
    \rowcolor{tblstripe}
    Brightness / contrast / saturation & $[0.9,\,1.1]$ \\
    Hue & $[-0.05,\,0.05]$ \\
    \rowcolor{tblstripe}
    \texttt{RandomResizedCrop} scale & $[0.8,\,1.0]$ \\
    \texttt{RandomResizedCrop} aspect ratio & $[1.0,\,1.4]$ \\
    \bottomrule
  \end{tabular}}
\end{table}

\paragraph{Hyperparameters.}
Table~\ref{tab:vwm_stage1_hparams} lists the Stage~1 hyperparameters. Generator components, discriminator components, and validation-set metrics are logged as separate TensorBoard scalar groups. Validation is run every $1{,}000$ steps over $100$ batches, reporting mean L1 and LPIPS together with a few reconstruction visualizations.

\begin{table}[h]
  \centering
  \scriptsize
  \setlength{\tabcolsep}{6pt}
  \renewcommand{\arraystretch}{1.10}
  \caption{Stage~1 tokenizer hyperparameters.}
  \label{tab:vwm_stage1_hparams}
  \fitwidth{%
  \begin{tabular}{@{}ll@{}}
    \toprule
    \rowcolor{tblheadbg}
    Item & Value \\
    \midrule
    \rowcolor{tblstripe}
    Resolution $H\times W$ & $256\times 320$ \\
    Clip length $T$ / context $t_c$ & $8\,/\,1$ \\
    \rowcolor{tblstripe}
    Context latent channels $C_h$ & $256$ \\
    Dynamic latent channels & $64$ \\
    \rowcolor{tblstripe}
    Patch size $p$ & $4$ \\
    FSQ $(D,\mathbf{L},K)$ & $(5,\,(7,5,5,5,5),\,4375)$ \\
    \rowcolor{tblstripe}
    Tokens per clip & $1280 + 7\times 80 = 1840$ \\
    Discriminator depth & $6$ \\
    \rowcolor{tblstripe}
    Reconstruction loss & L1 \\
    Loss weights $(\lambda_r,\lambda_p,\lambda_a)$ & $(1.0,\,1.0,\,0.1)$ \\
    \rowcolor{tblstripe}
    Gradient-penalty coefficient $\gamma$ & $10$ \\
    Adversarial start step $T_{\mathrm{adv}}$ & $10{,}000$ \\
    \rowcolor{tblstripe}
    Optimizer & AdamW, $(\beta_1,\beta_2,\varepsilon)\!=\!(0.9,0.999,10^{-8})$, weight decay $0$ \\
    Learning rate ($G$ / $D$) & $5\times 10^{-4}$ / $5\times 10^{-4}$ \\
    \rowcolor{tblstripe}
    LR schedule ($G$ / $D$) & cosine / constant-with-warmup, warmup $5{,}000$ steps \\
    Gradient clipping & $\|\nabla\|_2 \le 1.0$ \\
    \rowcolor{tblstripe}
    Mixed precision & bf16 \\
    Gradient checkpointing & enabled \\
    \rowcolor{tblstripe}
    Per-GPU batch / accumulation & $1\,/\,4$ \\
    GPUs & $8\times$ A100-40GB \\
    \rowcolor{tblstripe}
    Global effective batch & $32$ \\
    Total training steps & $60{,}000$ \\
    \rowcolor{tblstripe}
    Checkpoint / validation frequency & every $2{,}000$ / $1{,}000$ steps \\
    \bottomrule
  \end{tabular}}
\end{table}

\subsection{Stage 2: Autoregressive Transformer World Model}\label{app:vwm_stage2}

Stage~2 trains a causal Transformer $\pi_\theta$ on top of the \emph{frozen} Stage~1 tokenizer. With $\mathbf{c}\!\in\!\{K,\ldots,2K-1\}^{N_c}$ the context tokens, $\mathbf{d}_t\!\in\!\{0,\ldots,K-1\}^{N_d}$ the dynamic tokens of frame $t$, and $\mathbf{a}_t\!\in\!\{2K,\ldots,2K+B_a-1\}^{D_a}$ the action tokens at step $t$, the model factorizes
\[
p_\theta\bigl(\mathbf{d}_2,\mathbf{d}_3,\ldots,\mathbf{d}_T\;\big|\;\mathbf{c},\,\mathbf{d}_1,\,\mathbf{a}_1,\ldots,\mathbf{a}_{T-1}\bigr).
\]

\paragraph{Joint vocabulary.}
Visual and action tokens are merged into a single vocabulary of size $V$ via fixed offsets, so that a standard LM head can emit either modality (Table~\ref{tab:vwm_vocab}). The two visual segments share the same FSQ codebook but are disambiguated by an offset of $K$, allowing the embedding matrix $W_E\!\in\!\mathbb{R}^{V\times d}$ to learn separate embeddings for context and dynamic semantics.

\begin{table}[h]
  \centering
  \scriptsize
  \setlength{\tabcolsep}{6pt}
  \renewcommand{\arraystretch}{1.10}
  \caption{Stage~2 joint vocabulary.}
  \label{tab:vwm_vocab}
  \fitwidth{%
  \begin{tabular}{@{}llc@{}}
    \toprule
    \rowcolor{tblheadbg}
    Range & Semantics & Size \\
    \midrule
    \rowcolor{tblstripe}
    $[0,\,K)$ & Dynamic visual tokens (FSQ indices of $E_d$) & $K = 4375$ \\
    $[K,\,2K)$ & Context visual tokens (FSQ indices of $E_c$, offset by $K$) & $K = 4375$ \\
    \rowcolor{tblstripe}
    $[2K,\,2K+B_a)$ & Action tokens & $B_a = 256$ \\
    $\{2K+B_a\}$ & BOS (reserved) & $1$ \\
    \rowcolor{tblstripe}
    $\{2K+B_a+1\}$ & EOS (reserved) & $1$ \\
    \midrule
    \rowcolor{mydarkblue!6}
    \multicolumn{2}{l}{\textbf{Total $V$}} & $\mathbf{9008}$ \\
    \bottomrule
  \end{tabular}}
\end{table}

\paragraph{Action discretization.}
Each dimension of $\mathbf{a}_t\!\in\!\mathbb{R}^{D_a}$ is independently mapped into $B_a=256$ equal-width bins. The per-dimension lower and upper bounds $(a_i^{\min},\,a_i^{\max})$ are pre-computed on the training split and persisted as an offline lookup table shared between training and inference:
\[
\mathrm{bin}_i(a_{t,i}) = \Bigl\lfloor B_a\cdot\frac{a_{t,i}-a_i^{\min}}{a_i^{\max}-a_i^{\min}+\varepsilon}\Bigr\rfloor\in\{0,\ldots,B_a-1\},\qquad
a^{\text{tok}}_{t,i} = \mathrm{bin}_i(a_{t,i}) + 2K.
\]

\paragraph{Sequence layout and loss masking.}
The data module concatenates the segments into
\[
\mathbf{x} = \bigl[\underbrace{\mathbf{c}}_{1280}\;\|\;
\underbrace{\mathbf{d}_1\|\mathbf{a}_1}_{80+13}\;\|\;
\underbrace{\mathbf{d}_2\|\mathbf{a}_2}_{80+13}\;\|\;\cdots\;\|\;
\underbrace{\mathbf{d}_{T}\|\mathbf{a}_{T}}_{80+13}\bigr]\in\{0,\ldots,V-1\}^{S},
\]
where $\mathbf{a}_T$ is filled with the action associated with the last transition. The total sequence length is
\[
S = N_c + t_d\cdot(N_d + D_a) = 1280 + 7\cdot(80+13) = 1931.
\]
The label sequence has the same shape as $\mathbf{x}$; all positions except the dynamic tokens of frames $2,\ldots,T$ are set to the ignore index and excluded from the cross-entropy loss (Table~\ref{tab:vwm_label_mask}). The number of supervised positions per sample is
\[
|\mathcal{M}| = (t_d - 1)\cdot N_d = 6\cdot 80 = 480.
\]
Since a single causal forward pass simultaneously supervises the predictions of frames $2,\ldots,T$, this objective is equivalent to $t_d-1=6$ one-step next-frame prediction subtasks that share attention compute, i.e.\ multi-step prediction (MSP).

\begin{table}[h]
  \centering
  \scriptsize
  \setlength{\tabcolsep}{6pt}
  \renewcommand{\arraystretch}{1.10}
  \caption{Loss-mask layout of a single Stage~2 sample.}
  \label{tab:vwm_label_mask}
  \fitwidth{%
  \begin{tabular}{@{}ll@{}}
    \toprule
    \rowcolor{tblheadbg}
    Segment & Label \\
    \midrule
    \rowcolor{tblstripe}
    Context tokens ($N_c=1280$) & $-100$ (condition) \\
    Dynamic tokens at frame~1 ($N_d=80$) & $-100$ (initial state) \\
    \rowcolor{tblstripe}
    Dynamic tokens at frames $2{:}T$ ($6\cdot 80=480$) & ground-truth token ids \\
    All action tokens ($7\cdot 13=91$) & $-100$ (condition) \\
    \bottomrule
  \end{tabular}}
\end{table}

\paragraph{Backbone.}
$\pi_\theta$ is a causal LLaMA configured as in Table~\ref{tab:vwm_backbone}. The causal mask is realized inside the FlashAttention-2 kernel; weights, activations, and gradients are all stored in bf16, since FlashAttention-2 is incompatible with fp32.

\begin{table}[h]
  \centering
  \scriptsize
  \setlength{\tabcolsep}{6pt}
  \renewcommand{\arraystretch}{1.10}
  \caption{Stage~2 Transformer backbone.}
  \label{tab:vwm_backbone}
  \fitwidth{%
  \begin{tabular}{@{}ll@{}}
    \toprule
    \rowcolor{tblheadbg}
    Component & Value \\
    \midrule
    \rowcolor{tblstripe}
    Layers & $12$ \\
    Hidden size $d$ & $768$ \\
    \rowcolor{tblstripe}
    Attention heads & $12$ (no GQA) \\
    FFN size & $3072$ \\
    \rowcolor{tblstripe}
    Activation & SwiGLU \\
    Normalization & RMSNorm, $\varepsilon=10^{-6}$ \\
    \rowcolor{tblstripe}
    Positional encoding & RoPE \\
    Max positions & $8192$ \\
    \rowcolor{tblstripe}
    Tied word embeddings & disabled \\
    Parameters (incl.\ embeddings) & ${\approx}1.38\times 10^{8}$ \\
    \rowcolor{tblstripe}
    Attention kernel & FlashAttention-2 \\
    Compute precision & bf16 \\
    \bottomrule
  \end{tabular}}
\end{table}

\paragraph{Training objective.}
Let $h_{i-1}\!\in\!\mathbb{R}^{d}$ denote the last-layer hidden state at position $i{-}1$ and $W_O\!\in\!\mathbb{R}^{d\times V}$ the LM head, so that
\[
p_\theta(x_i\mid x_{<i}) = \bigl[\mathrm{softmax}(W_O\,h_{i-1})\bigr]_{x_i}.
\]
The training loss is the causal cross-entropy restricted to the supervised positions~$\mathcal{M}$,
\[
\mathcal{L}_{\mathrm{VGPT}}(\theta) = -\frac{1}{|\mathcal{M}|}\sum_{i\in\mathcal{M}}\log p_\theta(x_i\mid x_{<i}),
\]
and the perplexity $\mathrm{PPL} = \exp(\mathcal{L}_{\mathrm{VGPT}}) \in [1,\,V]$ serves as an interpretable monitoring metric.

\paragraph{Tokenization during training.}
Tokenization is performed inside the data pipeline with all Stage~1 parameters frozen and detached from the computation graph:
\[
(\mathbf{c},\mathbf{d}_{1:T}) = \mathcal{T}_{\text{Stage1}}(x_{1:T}),\qquad
\mathbf{a}_{1:T-1} = \mathrm{discretize}(a_{1:T-1}),
\]
followed by the offsetting and concatenation above. The Transformer therefore only ever consumes integer token ids, and gradients flow exclusively through $\pi_\theta$.

\paragraph{Hyperparameters.}
Stage~2 hyperparameters are summarized in Table~\ref{tab:vwm_stage2_hparams}. Validation computes $\mathcal{L}_{\mathrm{VGPT}}$ and $\mathrm{PPL}$ on a held-out split every $5{,}000$ steps. At inference, autoregressive sampling uses temperature $\tau=1.0$ with top-$k$ and top-$p$ truncation; beam search is not used.

\begin{table}[h]
  \centering
  \scriptsize
  \setlength{\tabcolsep}{6pt}
  \renewcommand{\arraystretch}{1.10}
  \caption{Stage~2 autoregressive Transformer hyperparameters.}
  \label{tab:vwm_stage2_hparams}
  \fitwidth{%
  \begin{tabular}{@{}ll@{}}
    \toprule
    \rowcolor{tblheadbg}
    Item & Value \\
    \midrule
    \rowcolor{tblstripe}
    Backbone & LLaMA-12L-768d-12h (FlashAttention-2, bf16) \\
    Vocabulary size $V$ & $9008$ \\
    \rowcolor{tblstripe}
    Sequence length $S$ & $1931$ (max positions $8192$) \\
    Supervised positions $|\mathcal{M}|$ & $480$ \\
    \rowcolor{tblstripe}
    Frozen tokenizer & Stage~1 checkpoint at $60{,}000$ steps \\
    Data module & Context + MSP processor \\
    \rowcolor{tblstripe}
    Optimizer & AdamW, $(\beta_1,\beta_2,\varepsilon)\!=\!(0.9,0.999,10^{-8})$, weight decay $0$ \\
    Learning rate & $5\times 10^{-5}$ \\
    \rowcolor{tblstripe}
    LR schedule & constant-with-warmup, warmup $5{,}000$ steps \\
    Gradient clipping & $\|\nabla\|_2 \le 1.0$ \\
    \rowcolor{tblstripe}
    Dropout / stochastic depth & $0$ \\
    Per-GPU batch / accumulation & $4\,/\,1$ \\
    \rowcolor{tblstripe}
    GPUs & $8\times$ A100-40GB \\
    Global effective batch & $32$ \\
    \rowcolor{tblstripe}
    Max steps $T_{\max}$ & $1{,}000{,}000$ \\
    Checkpoint / validation frequency & every $10{,}000$ / $5{,}000$ steps \\
    \bottomrule
  \end{tabular}}
\end{table}

\subsection{Dataset: Bridge V2 Instantiation}\label{app:vwm_dataset}

The two-stage pipeline is dataset-agnostic and only assumes synchronized image observations paired with low-dimensional action sequences. For reproducibility we instantiate it on \textbf{Bridge V2}~(Walke~\emph{et~al.}, 2023) as a representative case; any dataset satisfying the same interface can be substituted without further changes.

Bridge V2 is a large-scale real-robot manipulation dataset collected with a WidowX-250 six-DoF arm across $24$ indoor kitchen and tabletop environments. It contains approximately $60{,}096$ human-teleoperated trajectories covering $13$ skill categories (grasping, placing, pushing, pulling, opening and closing drawers, flipping, stacking, and so on). Each trajectory logs synchronized third-person RGB, end-effector pose, and gripper state at $5$\,Hz. The dataset is distributed in the TensorFlow Datasets (TFDS) format; we use version $1.0.0$.

\paragraph{Raw sample structure.}
Each raw trajectory is a variable-length sequence with the fields summarized in Table~\ref{tab:vwm_bridge_fields}, where $L$ is the trajectory length (typically $20$--$60$ frames). The data is distributed as TFRecord shards and split into training and validation partitions.

\begin{table}[h]
  \centering
  \scriptsize
  \setlength{\tabcolsep}{6pt}
  \renewcommand{\arraystretch}{1.10}
  \caption{Core fields of a Bridge V2 trajectory.}
  \label{tab:vwm_bridge_fields}
  \fitwidth{%
  \begin{tabular}{@{}llll@{}}
    \toprule
    \rowcolor{tblheadbg}
    Field & Shape & Dtype & Description \\
    \midrule
    \rowcolor{tblstripe}
    Primary RGB & $(L,\,480,\,640,\,3)$ & uint8 & Third-person primary-camera observation \\
    Proprio.\ state & $(L,\,7)$ & float32 & End-effector $(x,y,z,\text{roll},\text{pitch},\text{yaw},\text{gripper})$ \\
    \rowcolor{tblstripe}
    Action & $(L,\,7)$ & float32 & EE increments $(\Delta x,\Delta y,\Delta z,\Delta\text{roll},\Delta\text{pitch},\Delta\text{yaw},\text{gripper\_cmd})$ \\
    Language instruction & string & --- & Natural-language task description (unused here) \\
    \rowcolor{tblstripe}
    Terminal flag & bool & --- & Episode-end indicator \\
    \bottomrule
  \end{tabular}}
\end{table}

\paragraph{Pre-processing (TFDS $\rightarrow$ per-trajectory arrays).}
The TFDS source is converted offline into per-trajectory array files once, so that the downstream data pipeline does not need to touch TFRecords again:
\begin{itemize}[leftmargin=1.2em,itemsep=1pt,topsep=1pt]
  \item \textbf{Image resampling.} $480\!\times\!640 \to H\!\times\!W = 256\!\times\!320$ via aspect-ratio-preserving resize followed by center cropping, stored as uint8.
  \item \textbf{Action expansion.} The raw $\mathbb{R}^{7}$ action is expanded to $\mathbb{R}^{13}$ by appending several dimensions of the previous end-effector state, matching the shared OXE action layout ($D_a=13$).
  \item \textbf{Action-range statistics.} The training split is scanned once to record per-dimension $(a_i^{\min},a_i^{\max})$; the resulting table is persisted and reused by the equal-width binning in \S\ref{app:vwm_stage2}.
\end{itemize}
After conversion, each trajectory becomes an independent array file containing an image tensor $(L,\,256,\,320,\,3)$ in uint8, an action tensor $(L,\,13)$ in float32, and a proprioceptive state $(L,\,7)$ in float32. The resulting corpus has ${\approx}60{,}096$ trajectories, over $1.92$\,M frames, and occupies roughly $260$\,GB on disk.

\paragraph{Sampling and slicing.}
Each optimization step samples a clip according to:
\begin{enumerate}[leftmargin=1.2em,itemsep=1pt,topsep=1pt]
  \item Uniformly sample an episode from the training split.
  \item Uniformly sample a start frame $s\in[0,\,L-T]$.
  \item Take $\{I_{s},\ldots,I_{s+T-1}\}$ and the paired actions $\{a_{s},\ldots,a_{s+T-1}\}$ with $T=8$ and stride~$1$.
  \item Apply the clip-wise photometric and geometric augmentations of Table~\ref{tab:vwm_augment}, with parameters shared across the $T$ frames to preserve temporal consistency.
  \item Normalize images to $[0,\,1]$ before feeding them to the tokenizer.
\end{enumerate}
Stage~1 consumes only the pixels $\{I_{s+k}\}_{k=0}^{T-1}$; Stage~2 additionally consumes the paired actions to construct the joint token sequence of \S\ref{app:vwm_stage2}. The two stages share identical sampling and augmentation logic, and differ only in their collate-level outputs.

\subsection{Reproduction Protocol}\label{app:vwm_repro}

We describe the end-to-end reproduction protocol in terms of inputs, computation, and outputs, deliberately avoiding prescriptive script layouts or launch commands.

\paragraph{Software environment.}
The implementation is built on PyTorch with the version constraints in Table~\ref{tab:vwm_env}. Distributed training uses Accelerate over the NCCL backend; both stages train under bf16 mixed precision to satisfy the dtype constraints of FlashAttention-2.

\begin{table}[h]
  \centering
  \scriptsize
  \setlength{\tabcolsep}{6pt}
  \renewcommand{\arraystretch}{1.10}
  \caption{Key software dependencies.}
  \label{tab:vwm_env}
  \fitwidth{%
  \begin{tabular}{@{}ll@{}}
    \toprule
    \rowcolor{tblheadbg}
    Component & Version \\
    \midrule
    \rowcolor{tblstripe}
    PyTorch & $2.3$ \\
    FlashAttention & $2.5$ \\
    \rowcolor{tblstripe}
    HuggingFace Accelerate & $0.30$ \\
    HuggingFace Transformers & latest stable \\
    \rowcolor{tblstripe}
    HuggingFace Diffusers & latest stable \\
    TensorFlow (only for TFDS I/O) & $2.15$ \\
    \rowcolor{tblstripe}
    LPIPS / timm / einops & latest stable \\
    \bottomrule
  \end{tabular}}
\end{table}

\paragraph{Step 0: Offline pre-processing of raw data.}
This step is CPU-only and converts TFRecord shards into the per-trajectory array files of \S\ref{app:vwm_dataset}, reporting the total number of converted trajectories as a consistency check. It runs once per dataset release and takes ${\approx}2$--$3$ hours on a single CPU node at the scale of the example dataset.

\paragraph{Step 1: Train the context-aware tokenizer.}
Stage~1 is trained on the pre-processed per-trajectory data with the hyperparameters of \S\ref{app:vwm_stage1}: global effective batch $32$, base learning rate $5\times 10^{-4}$, discriminator start at step $10{,}000$, $6\times 10^{4}$ total steps, checkpoint every $2{,}000$ steps, and validation every $1{,}000$ steps on a held-out split (L1 and LPIPS). Activation checkpointing is enabled to reduce memory. The final checkpoint becomes the frozen backbone of Stage~2. Training takes ${\approx}72$~hours on a single $8\times$ A100-40GB node.

\paragraph{Step 2: Train the autoregressive Transformer.}
Stage~2 is trained on the joint token sequence of \S\ref{app:vwm_stage2} with the hyperparameters therein: per-GPU batch $4$, no accumulation, global effective batch $32$, learning rate $5\times 10^{-5}$ with a $5{,}000$-step linear warmup, gradient clipping $\|\nabla\|_2\le 1.0$, $10^{6}$ total steps, checkpoint every $10{,}000$ steps, and validation every $5{,}000$ steps. The Stage~1 tokenizer is loaded read-only and remains frozen throughout. Training throughput is ${\approx}3.5$~it/s on a single $8\times$ A100-40GB node, totalling ${\approx}80$~hours for the full $10^{6}$ steps.

\paragraph{Monitoring and sanity checks.}
All scalar and image summaries are logged to TensorBoard. In practice we monitor three families of health indicators:
\emph{(i) mixed precision}---bf16 must be explicitly enabled, since FlashAttention-2 refuses to execute under fp32;
\emph{(ii) hardware utilization}---compute utilization should remain consistently above $90\%$ with near-uniform per-GPU memory; otherwise the bottleneck typically lies in data loading or NCCL communication;
\emph{(iii) training stability}---a smooth generator-loss transition around discriminator activation in Stage~1, and long-term stability of the gradient norm together with a monotone decrease of the held-out PPL in Stage~2.
Common failure modes and mitigations are listed in Table~\ref{tab:vwm_failures}.

\begin{table}[h]
  \centering
  \scriptsize
  \setlength{\tabcolsep}{6pt}
  \renewcommand{\arraystretch}{1.10}
  \caption{Common failure modes observed during training.}
  \label{tab:vwm_failures}
  \begin{tabular}{@{}p{0.30\linewidth}p{0.30\linewidth}p{0.34\linewidth}@{}}
    \toprule
    \rowcolor{tblheadbg}
    Symptom & Root cause & Fix \\
    \midrule
    \rowcolor{tblstripe}
    FlashAttention only supports fp16 / bf16 & Mixed precision misconfigured as fp32 & Force the mixed-precision policy to bf16 \\
    NVML / NVLink symbol missing & Legacy node driver (e.g.\ the 450.x series) & Disable framework-side NVML probing and turn off NCCL P2P/NVLS fast paths \\
    \rowcolor{tblstripe}
    Nested snapshots and abnormal disk usage under the checkpoint directory & Residuals from previous smoke-test runs & Clean the smoke-test directory before starting the production run \\
    \bottomrule
  \end{tabular}
\end{table}

\subsection{Optimization Infrastructure}\label{app:vwm_infra}

\begin{table}[h]
  \centering
  \scriptsize
  \setlength{\tabcolsep}{6pt}
  \renewcommand{\arraystretch}{1.10}
  \caption{Optimization infrastructure shared by Stage~1 and Stage~2.}
  \label{tab:vwm_infra}
  \fitwidth{%
  \begin{tabular}{@{}ll@{}}
    \toprule
    \rowcolor{tblheadbg}
    Item & Value \\
    \midrule
    \rowcolor{tblstripe}
    Framework & PyTorch + HuggingFace Accelerate (multi-GPU) \\
    Distributed backend & NCCL \\
    \rowcolor{tblstripe}
    Mixed precision & bf16 (both stages) \\
    Activation checkpointing & Stage~1 enabled, Stage~2 disabled \\
    \rowcolor{tblstripe}
    Logging & TensorBoard scalar and image summaries \\
    Random seed & set per experiment via Accelerate \\
    \rowcolor{tblstripe}
    Checkpoint format & HuggingFace Safetensors (unwrapped) \\
    \bottomrule
  \end{tabular}}
\end{table}

Stage~1 produces tokenizer checkpoints at a fixed cadence; Stage~2 loads a specific checkpoint in read-only mode as its frozen backbone. No optimizer state, LR scheduler state, or data-loader state is shared between the two stages.

\subsection{End-to-End Quantitative Summary}\label{app:vwm_summary}

\begin{table}[h]
  \centering
  \scriptsize
  \setlength{\tabcolsep}{5pt}
  \renewcommand{\arraystretch}{1.10}
  \caption{Side-by-side comparison of the two training stages.}
  \label{tab:vwm_summary}
  \fitwidth{%
  \begin{tabular}{@{}lll@{}}
    \toprule
    \rowcolor{tblheadbg}
    Dimension & Stage 1 & Stage 2 \\
    \midrule
    \rowcolor{tblstripe}
    Input & $x_{1:T}\in\mathbb{R}^{T\times 3\times H\times W}$ & $\mathbf{x}\in\{0,\ldots,V-1\}^{S}$ \\
    Output & Reconstructions $\hat x_{1:T}$ and FSQ indices & Logits $\in\mathbb{R}^{S\times V}$ \\
    \rowcolor{tblstripe}
    Objective & $\mathcal{L}_G$ (L1 + LPIPS + hinge-GAN) & $\mathcal{L}_{\mathrm{VGPT}}$ (masked causal CE) \\
    Quantization & FSQ, $K = 4375$ & --- \\
    \rowcolor{tblstripe}
    Tokens per clip & $1840$ & $1931$ (supervised $480$) \\
    Trainable parameters & Stage~1 only & Stage~2 only (Stage~1 frozen) \\
    \rowcolor{tblstripe}
    Total steps & $6\times 10^{4}$ & $10^{6}$ \\
    Effective batch & $32$ & $32$ \\
    \rowcolor{tblstripe}
    Mixed precision & bf16 & bf16 \\
    Hardware & $8\times$ A100-40GB & $8\times$ A100-40GB \\
    \bottomrule
  \end{tabular}}
\end{table}

\paragraph{Compact formula card.}
For reference, the end-to-end forward and training equations are collected below:
{\footnotesize
\begin{align*}
\text{Stage 1 encoders:}\quad &
z_c = \mathrm{QuantConv}\bigl(E_c(x_c)\bigr),\quad
z_d = W_q\,\mathrm{Patchify}_p\!\bigl(E_d(x_d;\,\{f_l^{(E)}\})\bigr),\\[2pt]
\text{Stage 1 quantization:}\quad &
\hat z = \mathcal{Q}_{\mathbf{L}}(z),\quad
\mathrm{idx}(\hat z)=\sum_{i}\bigl(\hat z_i+\lfloor L_i/2\rfloor\bigr)\!\!\prod_{j<i}\!L_j,\\[2pt]
\text{Stage 1 decoders:}\quad &
\hat x_c = D_c\bigl(\mathcal{Q}(z_c)\bigr),\quad
\hat x_d = D_d\!\bigl(\mathcal{Q}(z_d);\,\{f_l^{(D)}\}\bigr),\\[2pt]
\text{Stage 1 loss:}\quad &
\mathcal{L}_G = \lambda_r\bigl(\|x_d-\hat x_d\|_1+\|x_c-\hat x_c\|_1\bigr)\\
& \hphantom{\mathcal{L}_G = } + \lambda_p\bigl(\mathrm{LPIPS}(x_d,\hat x_d)+\mathrm{LPIPS}(x_c,\hat x_c)\bigr)\\
& \hphantom{\mathcal{L}_G = } - \lambda_a\,\mathbb{1}[t\ge T_{\mathrm{adv}}]\,\mathbb{E}\bigl[D_\psi(\hat x)\bigr],\\[2pt]
\text{Stage 2 tokens:}\quad &
\mathbf{c}\in[K,2K)^{N_c},\;\mathbf{d}_t\in[0,K)^{N_d},\;\mathbf{a}_t\in[2K,2K+B_a)^{D_a},\\[2pt]
\text{Stage 2 sequence:}\quad &
\mathbf{x}=\bigl[\mathbf{c}\,\|\,\mathbf{d}_1\|\mathbf{a}_1\,\|\,\cdots\,\|\,\mathbf{d}_T\|\mathbf{a}_T\bigr],\quad S=1931,\\[2pt]
\text{Stage 2 model:}\quad &
p_\theta(x_i\mid x_{<i})=\bigl[\mathrm{softmax}(W_O\,h_{i-1})\bigr]_{x_i},\\[2pt]
\text{Stage 2 loss:}\quad &
\mathcal{L}_{\mathrm{VGPT}}=-\tfrac{1}{|\mathcal{M}|}\sum_{i\in\mathcal{M}}\log p_\theta(x_i\mid x_{<i}),\;\;
\mathcal{M}=\{\text{positions of }\mathbf{d}_2,\ldots,\mathbf{d}_T\},\;|\mathcal{M}|=480.
\end{align*}
}
Constants: $T=8,\;t_c=1,\;t_d=7,\;H\times W=256\times 320,\;p=4,\;D=5,\;\mathbf{L}=(7,5,5,5,5),\;K=4375,\;N_c=1280,\;N_d=80,\;D_a=13,\;B_a=256,\;V=9008,\;S=1931,\;|\mathcal{M}|=480$.

\subsection{TensorBoard Training Curves}\label{app:vwm_tb}

This section summarizes the convergence curves of both stages on the Bridge~V2 instantiation. All curves are exported directly from TensorBoard scalars without smoothing, with global optimization steps on the horizontal axis.

\subsubsection{Stage 1: Tokenizer (GAN + Reconstruction + Perception)}

Stage~1 logs four prefix-grouped scalar families. Tables~\ref{tab:vwm_tb_gen}--\ref{tab:vwm_tb_val} report the trends observed over the first $60{,}000$ steps; each TensorBoard tag (quoted verbatim) maps one-to-one to the exported CSV file.

\begin{table}[h]
  \centering
  \scriptsize
  \setlength{\tabcolsep}{4pt}
  \renewcommand{\arraystretch}{1.10}
  \caption{Generator loss $\mathcal{L}_G$ and its components (\texttt{gen\_loss/*}).}
  \label{tab:vwm_tb_gen}
  \begin{tabular}{@{}p{0.28\linewidth}p{0.22\linewidth}p{0.42\linewidth}@{}}
    \toprule
    \rowcolor{tblheadbg}
    Curve & Term & Observed trend ($0\!\to\!60{,}000$ steps) \\
    \midrule
    \rowcolor{tblstripe}
    \texttt{gen\_loss/recon\_loss} & $\mathcal{L}_{\text{rec}}$ (dynamic-frame L1) & Monotone drop from ${\sim}0.11$ to $\mathbf{0.0169}$ \\
    \texttt{gen\_loss/perceptual\_loss} & $\mathcal{L}_{\text{perc}}$ (dynamic-frame LPIPS) & Monotone drop from ${\sim}0.30$ to $\mathbf{0.0607}$ \\
    \rowcolor{tblstripe}
    \texttt{gen\_loss/ref\_recon\_loss} & Context-frame L1 & Same magnitude as \texttt{recon\_loss}; final ${\sim}0.02$ \\
    \texttt{gen\_loss/ref\_perceptual\_loss} & Context-frame LPIPS & Tracks \texttt{perceptual\_loss}; final ${\sim}0.07$ \\
    \rowcolor{tblstripe}
    \texttt{gen\_loss/gan\_loss} & $\mathcal{L}_{\text{adv}}^{G}$ (hinge, $\lambda_a=0.1$) & $0$ for step~$\le 10{,}000$; afterwards oscillates in $[0.10,\,0.40]$; final $\mathbf{0.2546}$ \\
    \texttt{gen\_loss/commit\_loss} & VQ commitment (exactly $0$ under FSQ) & $0$ throughout \\
    \rowcolor{tblstripe}
    \texttt{gen\_loss/dyna\_commit\_loss} & Dynamic-branch commitment (exactly $0$ under FSQ) & $0$ throughout \\
    \texttt{step\_gen\_loss} & Generator total loss $\mathcal{L}_G$ & Drops from ${\sim}0.50$ to $\mathbf{0.1630}$ \\
    \bottomrule
  \end{tabular}
\end{table}

\begin{table}[h]
  \centering
  \scriptsize
  \setlength{\tabcolsep}{4pt}
  \renewcommand{\arraystretch}{1.10}
  \caption{Discriminator loss $\mathcal{L}_D$ and logits (\texttt{disc\_loss/*}; logged only for step~$\ge 10{,}000$).}
  \label{tab:vwm_tb_disc}
  \begin{tabular}{@{}llp{0.50\linewidth}@{}}
    \toprule
    \rowcolor{tblheadbg}
    Curve & Term & Observed trend \\
    \midrule
    \rowcolor{tblstripe}
    \texttt{disc\_loss/real\_logits} & $\mathbb{E}\!\left[D_\psi(x)\right]$ & Mild drift in $[-0.30,\,0.10]$; final $\mathbf{-0.2297}$ \\
    \texttt{disc\_loss/fake\_logits} & $\mathbb{E}\!\left[D_\psi(\hat x)\right]$ & Drifts synchronously with \texttt{real\_logits}; final $\mathbf{-0.2463}$ \\
    \rowcolor{tblstripe}
    \texttt{disc\_loss/logit\_diff} & $\mathbb{E}\!\left[D_\psi(x)-D_\psi(\hat x)\right]$ & Hovers around $0$ with amplitude $<0.05$, indicating $D$ does not overwhelm $G$ \\
    \texttt{step\_discr\_loss} & Discriminator total loss $\mathcal{L}_D$ & Quickly settles into the $[1.5,\,2.2]$ band after activation; final $\mathbf{1.9864}$, consistent with a healthy GAN balance against $\mathcal{L}_{\text{adv}}^{G}$ \\
    \bottomrule
  \end{tabular}
\end{table}

\begin{table}[h]
  \centering
  \scriptsize
  \setlength{\tabcolsep}{4pt}
  \renewcommand{\arraystretch}{1.10}
  \caption{Validation scalars (\texttt{val\_loss/*}; averaged over $100$ batches every $1{,}000$ steps).}
  \label{tab:vwm_tb_val}
  \begin{tabular}{@{}llp{0.50\linewidth}@{}}
    \toprule
    \rowcolor{tblheadbg}
    Curve & Meaning & Observed trend \\
    \midrule
    \rowcolor{tblstripe}
    \texttt{val\_loss/recon\_loss} & Mean pixel L1 & Monotone drop; best $\mathbf{0.01944}$ at step~$50{,}001$, final $0.01968$ at step~$59{,}001$ \\
    \texttt{val\_loss/perceptual\_loss} & Mean LPIPS & Monotone drop; best $\mathbf{0.06840}$ at step~$50{,}001$, final $0.07147$ at step~$59{,}001$ \\
    \bottomrule
  \end{tabular}
\end{table}

\paragraph{Optimization and throughput.}
The auxiliary scalar \texttt{lr} tracks the generator learning rate (warmup for $5{,}000$ steps, followed by cosine decay); \texttt{samples\_sec\_gpu}, \texttt{batch\_time}, and \texttt{data\_time} respectively record per-GPU throughput, per-step wall time, and data-loading time. Reconstruction previews \texttt{images/reconstruction\_*} are saved every $1{,}000$ steps for qualitative inspection of high-frequency texture fidelity.

\paragraph{Key training phases.}
\textbf{(i)} Steps $0$--$10{,}000$: pure L1 + LPIPS supervision; reconstruction and perceptual curves descend fastest.
\textbf{(ii)} Step~$10{,}000$: discriminator activation (\texttt{disc\_start}~$=10{,}000$); \texttt{gen\_loss/gan\_loss} becomes nonzero and \texttt{step\_gen\_loss} shows a mild ${+}0.05$-magnitude rebound.
\textbf{(iii)} Steps $10{,}000$--$60{,}000$: reconstruction and adversarial terms decrease jointly, and \texttt{val\_loss/*} continues to drop monotonically.

\paragraph{Training-curve visualization.}
Figure~\ref{fig:vwm_stage1_gen} shows the four generator-side training curves, and Figure~\ref{fig:vwm_stage1_disc_val} shows the discriminator logits, validation metrics, and learning-rate schedule. In each panel the light trace is the raw sampled values and the dark trace an exponential moving average ($\alpha=0.08$); a dash-dotted vertical line marks discriminator activation at step~$10{,}000$.

\begin{figure}[h]
  \centering
  \includegraphics[width=1\linewidth]{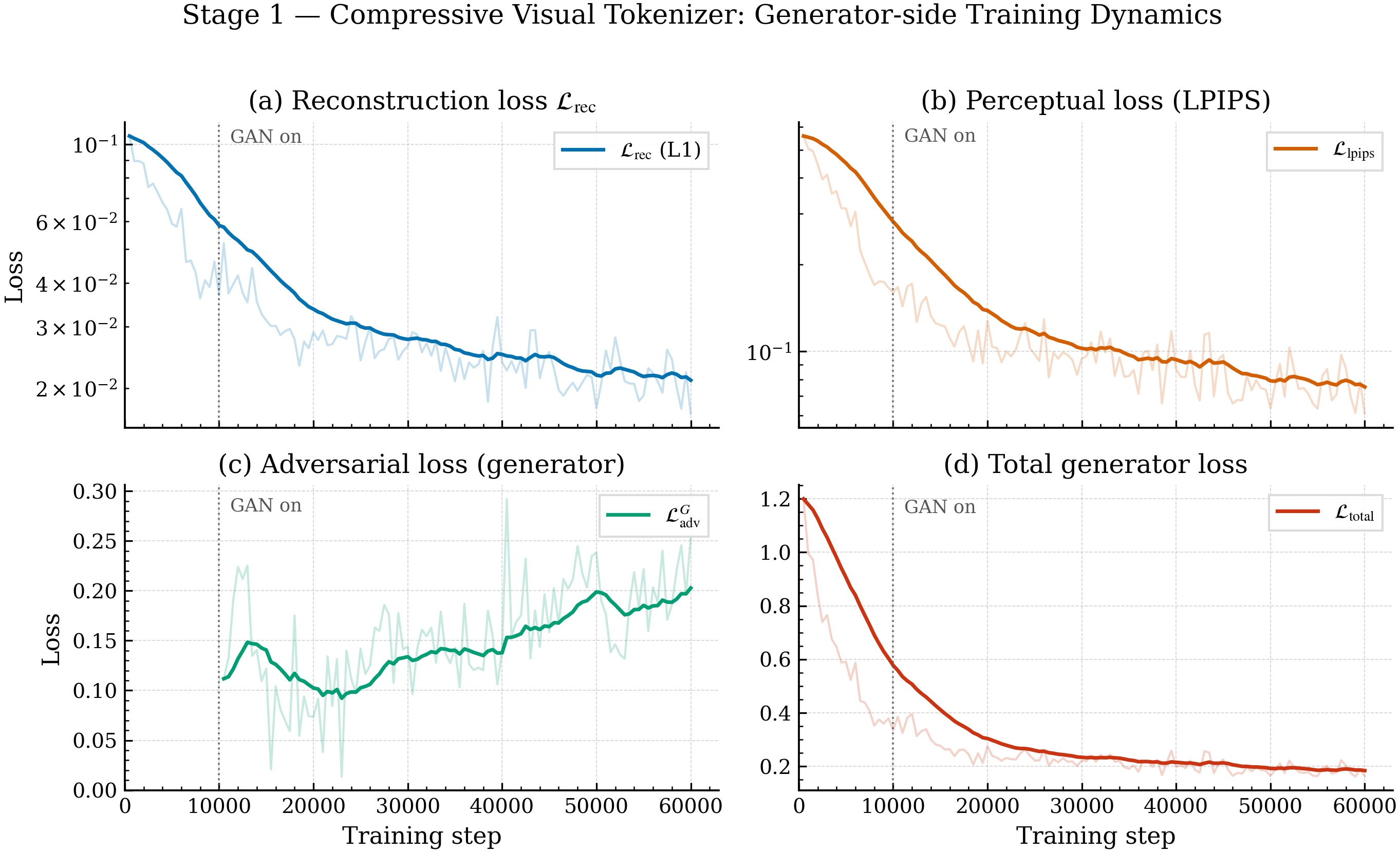}
  \caption{\textbf{Stage~1 generator-side training dynamics.}
  (a) Reconstruction loss $\mathcal{L}_{\mathrm{rec}}$ (\texttt{gen\_loss/recon\_loss}, L1);
  (b) perceptual loss $\mathcal{L}_{\mathrm{lpips}}$ (\texttt{gen\_loss/perceptual\_loss});
  (c) adversarial loss $\mathcal{L}_{\mathrm{adv}}^{G}$ (\texttt{gen\_loss/gan\_loss});
  (d) generator total loss $\mathcal{L}_{G}$ (\texttt{step\_gen\_loss}).
  Panels (a) and (b) descend fastest before discriminator activation and continue to decrease monotonically afterwards; panel (c) stabilizes in $[0.10,\,0.40]$ after activation, reflecting the dynamic $G$--$D$ equilibrium.}
  \label{fig:vwm_stage1_gen}
\end{figure}

\begin{figure}[h]
  \centering
  \includegraphics[width=1\linewidth]{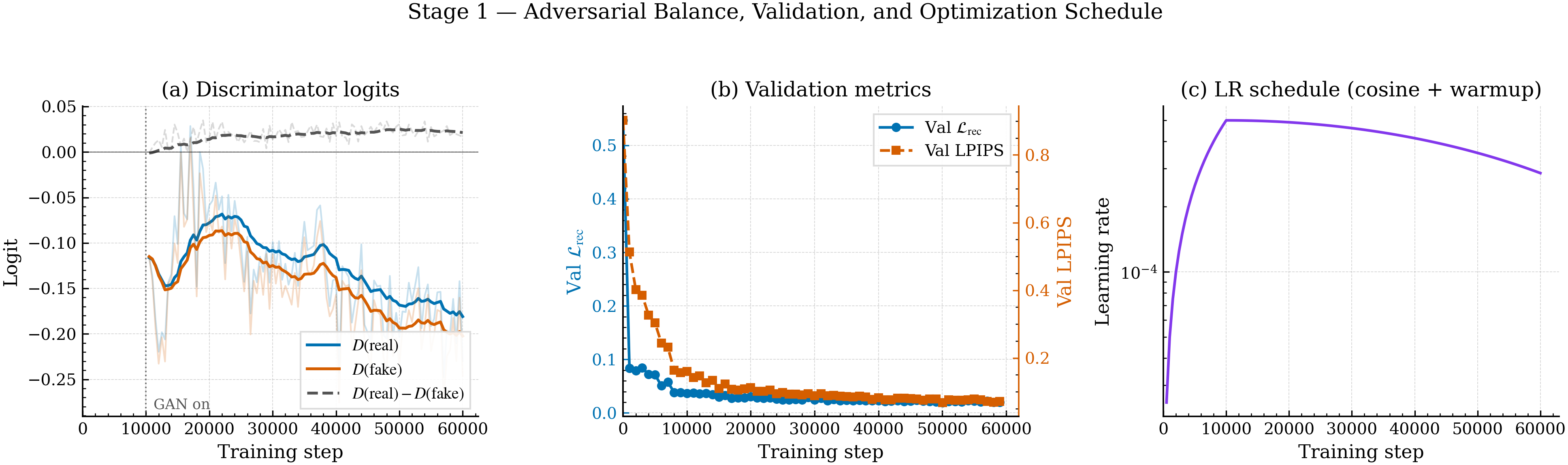}
  \caption{\textbf{Stage~1 adversarial balance, validation metrics, and optimization schedule.}
  (a) Discriminator logits on real and fake samples (\texttt{disc\_loss/real\_logits}, \texttt{disc\_loss/fake\_logits}) and their difference \texttt{disc\_loss/logit\_diff}~$=D(\mathrm{real})-D(\mathrm{fake})$, which hovers around $0$ with a mild positive drift and indicates that $D$ does not overwhelm $G$;
  (b) validation L1 (\texttt{val\_loss/recon\_loss}, left axis) and LPIPS (\texttt{val\_loss/perceptual\_loss}, right axis), both decreasing monotonically;
  (c) generator learning rate \texttt{lr}, which undergoes a $5{,}000$-step warmup followed by cosine decay.}
  \label{fig:vwm_stage1_disc_val}
\end{figure}

\paragraph{Reconstruction evolution.}
To visualize how the numerical trends translate into perceptual quality, Figure~\ref{fig:vwm_stage1_recon_evolution} shows pixel-space reconstructions from the Stage~1 tokenizer at six representative training steps ($\{1,\,2{,}500,\,10{,}000,\,25{,}000,\,50{,}000,\,60{,}000\}$). Each training step is rendered as a pair of adjacent rows: the upper row shows the ground-truth frames $x$ and the lower row the decoder outputs $\hat x$; columns uniformly sample $4$ temporal positions ($t_1,\,t_3,\,t_5,\,t_7$) out of the $T=8$ clip frames, so that both spatial fidelity and temporal consistency become visually apparent.

\begin{figure}[h]
  \centering
  \includegraphics[width=1\linewidth]{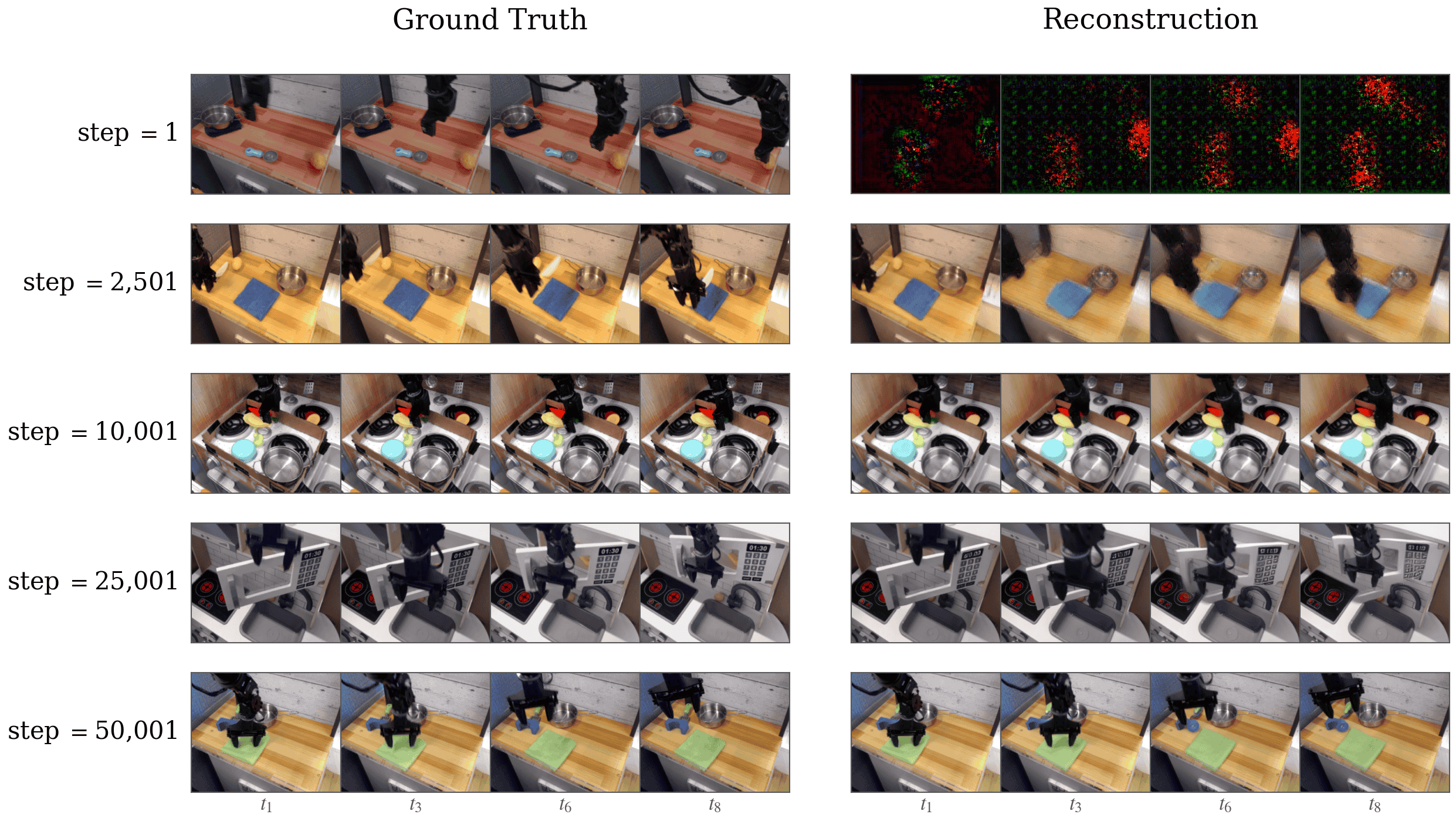}
  \caption{\textbf{Evolution of Stage~1 reconstructions across training steps.}
  Horizontal axis: temporal frame position $t_i$ inside a clip. Vertical axis: training steps from early to late; every two rows form a comparison pair, with the upper row showing the ground-truth frame and the lower row showing the decoded $\hat x$ from the same FSQ indices.
  At step~$\le 10^{3}$ the dynamic branch fails to recover high-frequency textures occluded by the manipulator, and reconstructions exhibit global blur and mild colour bias;
  around step~$10^{4}$ (discriminator activation) texture sharpness improves noticeably, yet locally visible grid-like artefacts still contribute a non-negligible LPIPS gap;
  beyond step~$2.5\times 10^{4}$, reconstructions become visually indistinguishable from ground truth, with residual errors concentrated on the narrow contact region between the end-effector and the manipulated object.
  This qualitative evolution is step-aligned with the monotone L1/LPIPS curves in Figures~\ref{fig:vwm_stage1_gen}--\ref{fig:vwm_stage1_disc_val} and with the best-validation step ($5\times 10^{4}$) of \texttt{val\_loss/*}.}
  \label{fig:vwm_stage1_recon_evolution}
\end{figure}

\subsubsection{Stage 2: Autoregressive Transformer (Causal CE + Perplexity)}

Stage~2 is monitored with TensorBoard scalars centered on the masked causal cross-entropy objective and its induced perplexity. Figure~\ref{fig:vwm_stage2_curves} reports the exported raw traces together with an exponential moving average (EMA, $\alpha=0.9$), avoiding manually tabulated estimates from the plots.

\begin{figure}[h]
  \centering
  \includegraphics[width=0.78\linewidth]{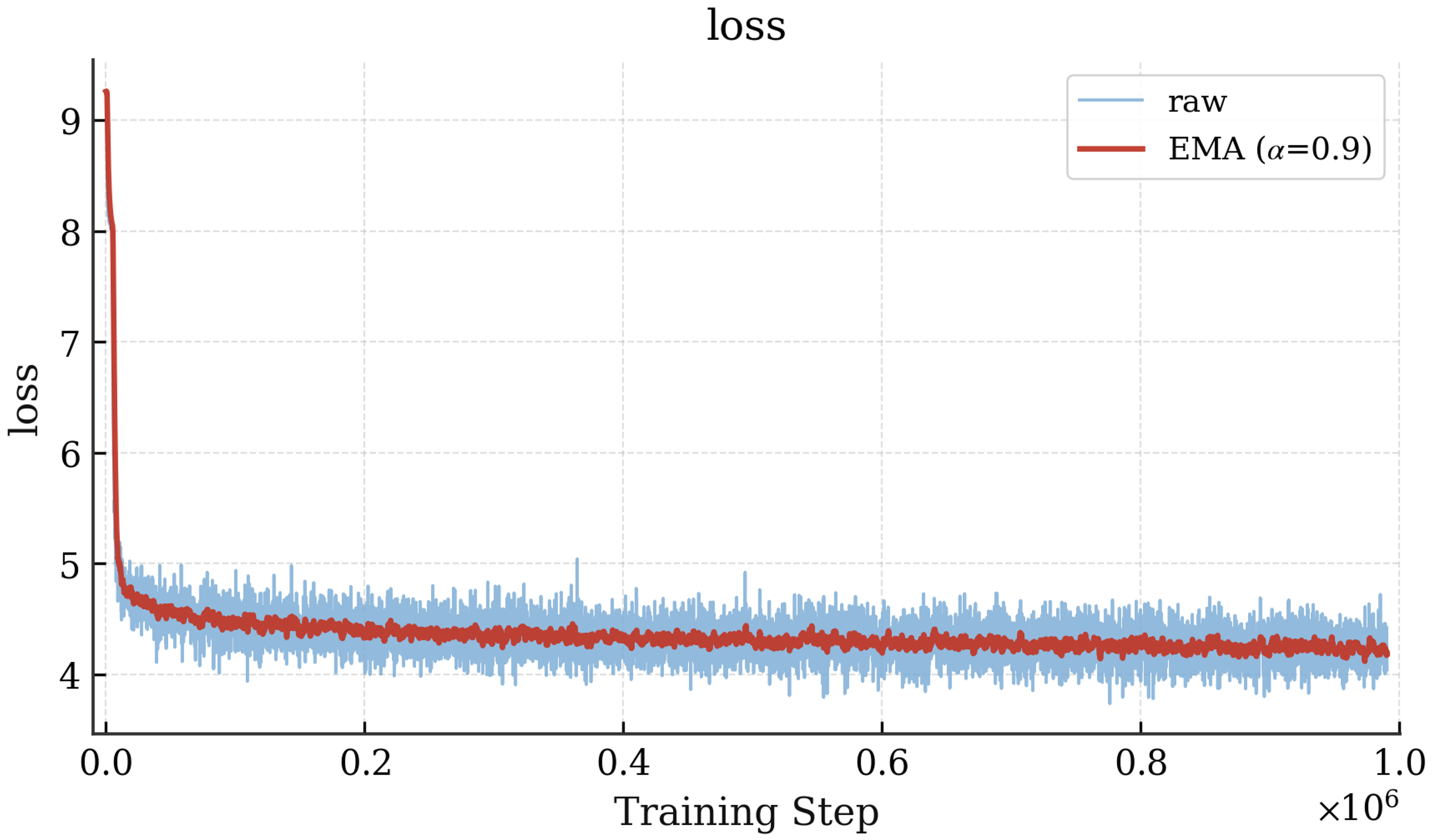}\\[0.4em]
  \begin{minipage}[t]{0.49\linewidth}
    \centering
    \includegraphics[width=\linewidth]{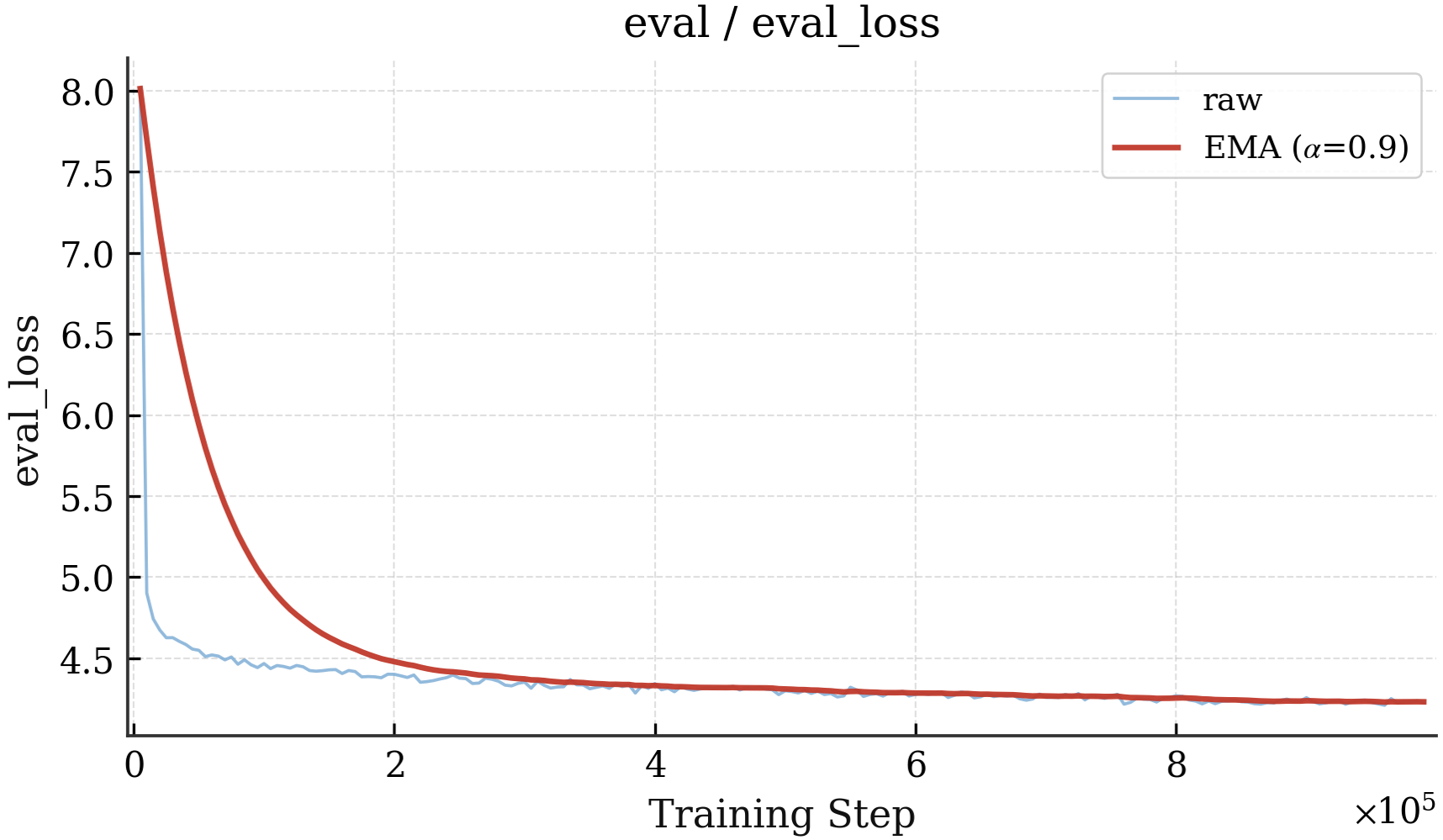}
  \end{minipage}\hfill
  \begin{minipage}[t]{0.49\linewidth}
    \centering
    \includegraphics[width=\linewidth]{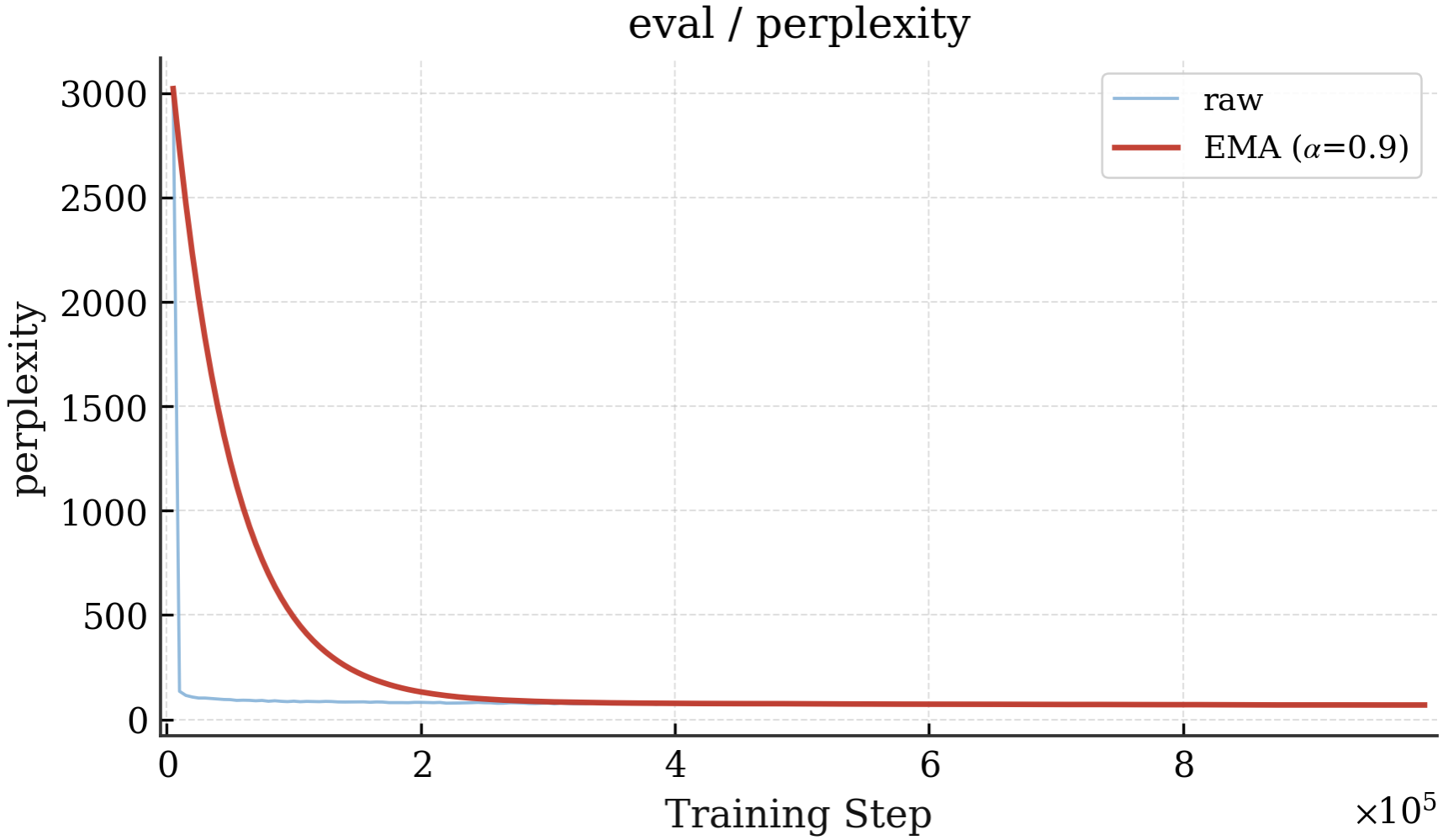}
  \end{minipage}
  \caption{\textbf{Stage~2 autoregressive Transformer training curves.}
  Top: training masked causal cross-entropy loss. Bottom left: held-out evaluation loss. Bottom right: held-out evaluation perplexity. Each panel shows both the raw TensorBoard trace and an EMA-smoothed curve with $\alpha=0.9$.}
  \label{fig:vwm_stage2_curves}
\end{figure}

\paragraph{Observed convergence.}
The training loss drops sharply at the beginning of optimization and then enters a slower refinement regime, with the EMA curve continuing to decrease over the full training horizon. The held-out loss follows the same overall pattern: a rapid early reduction followed by a gradual flattening, without a late-stage upward trend. The evaluation perplexity is consistent with the held-out cross-entropy curve, decreasing rapidly in the early phase and then approaching a stable low-variance regime. Together, these curves indicate that the Stage~2 Transformer learns the token dynamics early and continues to refine its next-token distribution over long training, while the validation metrics do not show visible signs of divergence from the training trajectory.

\section{Stylized derivation for Proposition~\ref{prop:swr}}\label{app:proof_swr}

We provide the simplified derivation underlying the main-text stability discussion. The goal is to illustrate why periodic refresh can limit long-horizon drift under a local contraction assumption, rather than to claim a complete dynamical model of the tokenizer--decoder pair or a quantitative predictor of the exact gains observed in experiments.

\paragraph{Setup and notation.}
Consider a token-based autoregressive world model generating a rollout of $T$ frames partitioned into $K = \lceil T/W \rceil$ segments of window size $W$.
Let $x_t^{*}$ denote the ground-truth frame at step $t$ and $\hat{x}_t$ the predicted frame.
We define the frame-level error as $e_t = \lVert \hat{x}_t - x_t^{*} \rVert$.
We make two assumptions:

\begin{enumerate}[label=\textbf{A\arabic*}.,leftmargin=2em]
  \item \textbf{Bounded per-step error.} Conditioned on a correct context, the single-step prediction error satisfies $e_t \le \varepsilon$ for all $t$.
  More generally, if the context carries an error $\eta$, the prediction error at the next step satisfies $e_{t} \le \varepsilon + \alpha\,\eta$ for some contraction factor $\alpha \in [0,1)$.
  \item \textbf{Bounded quantization error.} The decode--re-encode cycle introduces a quantization error bounded by $\delta_q$: for any predicted frame $\hat{x}$, $\lVert \mathcal{D}_{\mathrm{vis}}(\mathcal{T}_{\mathrm{vis}}(\hat{x})) - \hat{x} \rVert \le \delta_q$.
\end{enumerate}

\paragraph{Standard autoregressive generation (no re-encoding).}
Under native AR decoding, the context is never refreshed, so context-carried error keeps accumulating across the full horizon. If a contraction assumption analogous to \textbf{A1} held globally, one would obtain the geometric series
\[
e_t \;\le\; \varepsilon \sum_{i=0}^{t-1} \alpha^i \;=\; \frac{\varepsilon\,(1-\alpha^t)}{1-\alpha}.
\]
For any fixed $\alpha < 1$, this is bounded by $\varepsilon/(1-\alpha)$, but this constant can still blow up as $\alpha\!\to\!1$. In the non-contractive worst case, the accumulation becomes linear in the horizon, yielding the coarse comparison bound
\begin{equation}\label{eq:ar_error}
\mathcal{E}_{\mathrm{AR}}(T) \;=\; \max_{t \le T}\, e_t \;\le\; T\,\varepsilon.
\end{equation}
The AR regime therefore either pays a horizon-independent but potentially large constant $\varepsilon/(1-\alpha)$, or a linear-in-$T$ bound when contraction fails; SWR will instead replace this with a window-size-controlled constant.

\paragraph{Sliding-window re-encoding.}
With SWR, at each segment boundary $t = kW$, the model decodes the last predicted frame $\hat{x}_{kW}$ to pixel space and re-encodes it as a fresh context/state pair $(z^{\mathrm{ctx}}_{k+1},\, z^{\mathrm{dyn}}_0) = \mathcal{T}_{\mathrm{vis}}(\hat{x}_{kW})$; concretely, $\hat{x}_{kW}$ is treated both as the new conditioning frame for the Full-VAE branch $E_c$ and as a length-one dynamic clip for the Conditional-VAE branch $E_d$ (Appendix~\ref{app:vwm_stage1}).

\textit{Step 1: Within-segment error.}
Within segment $k$, the context $z^{\mathrm{ctx}}_k$ is fixed and the model generates at most $W$ frames. By \textbf{A1}, the error at relative step $j \in \{1, \dots, W\}$ within the segment satisfies:
\[
e_{(k-1)W+j} \;\le\; \varepsilon \sum_{i=0}^{j-1} \alpha^i + \alpha^{j}\,\eta_k,
\]
where $\eta_k$ is the error carried by the context $z^{\mathrm{ctx}}_k$. For $j \le W$, using $\sum_{i=0}^{j-1}\alpha^i = \frac{1-\alpha^j}{1-\alpha} \le j \le W$ and $\alpha^j\le 1$:
\begin{equation}\label{eq:within_seg}
e_{(k-1)W+j} \;\le\; \frac{\varepsilon(1-\alpha^W)}{1-\alpha} + \alpha^W \eta_k \;\le\; W\varepsilon + \eta_k.
\end{equation}
Taking $j=W$ gives the tighter intermediate bound $e_{kW}\le W\varepsilon + \alpha^W\eta_k$, which we use in Step~2.

\textit{Step 2: Cross-segment error refresh.}
At the boundary, the context error of segment $k{+}1$ is:
\[
\eta_{k+1}
= \lVert z^{\mathrm{ctx}}_{k+1} - z^{\mathrm{ctx},*}_{k+1} \rVert
\le \lVert \hat{x}_{kW} - x_{kW}^{*} \rVert + \delta_q
= e_{kW} + \delta_q.
\]
Crucially, $z^{\mathrm{ctx}}_{k+1}$ is obtained by re-encoding the \emph{decoded} frame, so the next segment depends on the current decoded observation rather than the entire raw token history of all prior segments. Substituting the tighter form of Eq.~\ref{eq:within_seg} at $j=W$:
\[
\eta_{k+1} \;\le\; W\varepsilon + \alpha^W \eta_k + \delta_q.
\]

\textit{Step 3: Steady-state bound.}
Since $\alpha^W < 1$, the recurrence $\eta_{k+1} \le W\varepsilon + \alpha^W \eta_k + \delta_q$ converges to a fixed point:
\[
\eta^{*} = \frac{W\varepsilon + \delta_q}{1 - \alpha^W}.
\]
Starting from $\eta_1 = 0$ (ground-truth conditioning frame), by induction $\eta_k \le \eta^{*}$ for all $k$. Combining with the loose within-segment bound $e_{(k-1)W+j}\le W\varepsilon + \eta_k$ from Eq.~\ref{eq:within_seg} (which upper-bounds $\alpha^W\eta_k$ by $\eta_k$), the maximum frame-level error in any segment is therefore:
\begin{equation}\label{eq:swr_final}
\mathcal{E}_{\mathrm{SWR}}(T) = \max_{t \le T}\, e_t \;\le\; W\varepsilon + \eta^{*} = W\varepsilon + \frac{W\varepsilon + \delta_q}{1 - \alpha^W}.
\end{equation}

In the simplified case $\alpha \to 0$ (errors do not propagate within the causal Transformer's effective receptive field beyond one step), $\alpha^W \to 0$ and $\eta^{*} \to W\varepsilon + \delta_q$, so the bound becomes $\mathcal{E}_{\mathrm{SWR}}(T) \le 2W\varepsilon + \delta_q$.
More generally, for any $\alpha \in [0,1)$ and any window size $W$ such that $\alpha^W < 1$, the bound in Eq.~\ref{eq:swr_final} is finite and does not grow explicitly with $T$, showing that under this stylized local model the effect of SWR is governed by the refresh window and the local error parameters rather than the rollout horizon alone.
We do not estimate $\alpha$ or $\delta_q$ from the trained model; instead, the derivation is meant to justify the qualitative window-size trade-off observed in Table~\ref{tab:swr_quality_efficiency} and Appendix~\ref{app:swr_window_ablate}, where moderate refresh intervals outperform both overly frequent and overly infrequent refresh. \qed

\section{Pseudocode for Reward-Aligned Post-Training}\label{app:code_train}

Algorithm~\ref{alg:train} summarizes the complete \method{} training pipeline, covering all four stages described in \S\ref{sec:reward}.
Color coding marks each stage:
{\color{mydarkblue}\textbf{benchmark construction}} (\textcolor{mydarkblue}{blue}),
{\color{codegreen}\textbf{teacher judge training}} (\textcolor{codegreen}{green}),
{\color{codepurple}\textbf{student reward distillation}} (\textcolor{codepurple}{purple}), and
{\color{red!55!black}\textbf{GRPO post-training}} (\textcolor{red!55!black}{red}).

\begin{algorithm}[H]
\caption{Reward-Aligned Post-Training of \method{}}\label{alg:train}
\begin{algorithmic}[1]
\Require Pre-trained world model $p_\theta$;\; robot datasets $\{$RT-1, Bridge, CALVIN, LIBERO$\}$;\; T2V model;\; base VLM (Qwen3-VL-8B-Thinking)
\Ensure Post-trained world model $p_{\theta^{*}}$
\Statex
\Statex \textcolor{mydarkblue}{\textit{--- Stage 1: Benchmark Construction ---}}
\For{each instruction $l$ in robot datasets}
    \State \textcolor{mydarkblue}{Sample ground-truth video $v^{+}$ from dataset}
    \State \textcolor{mydarkblue}{Generate candidate video $v^{-} \gets \mathrm{T2V}(l)$}
    \State \textcolor{mydarkblue}{Annotate $(l, v)$ with raw rubric scores $\mathbf{r} = (r_1,\dots,r_6)$ using ranges $[3,2,1,1,1,2]$}
    \Comment{\textcolor{mydarkblue}{Six dimensions}}
\EndFor
\State \textcolor{mydarkblue}{$\mathcal{D}_{\mathrm{bench}} \gets \{(l_i, v_i, \mathbf{r}_i)\}_{i=1}^{N}$}
\Statex
\Statex \textcolor{codegreen}{\textit{--- Stage 2: Teacher Judge Training ---}}
\State \textcolor{codegreen}{Initialize teacher $f_\phi$ from Qwen3-VL-8B-Thinking}
\State \textcolor{codegreen}{Fine-tune $f_\phi$ on $\mathcal{D}_{\mathrm{bench}}$: $\;\mathcal{L}_{\mathrm{teacher}} = -\mathbb{E}\bigl[\log p_\phi(\mathbf{r}\mid l, v)\bigr]$}
\Comment{\textcolor{codegreen}{$f_\phi$ $\to$ \textsc{RoboAlign-Judge}}}
\Statex
\Statex \textcolor{codepurple}{\textit{--- Stage 3: Student Reward Distillation ---}}
\State \textcolor{codepurple}{Initialize student $g_\psi$ (compact visual--text encoder + linear head)}
    \State \textcolor{codepurple}{Construct $\mathcal{D}_{\mathrm{distill}}$ from benchmark videos and generated rollouts scored by teacher $f_\phi$}
\Comment{\textcolor{codepurple}{Teacher-labeled mixed corpus}}
    \State \textcolor{codepurple}{Train $g_\psi$ with per-dimension weighted Huber regression on normalized teacher scores (Eq.~\ref{eq:distill})}
\Statex
\Statex \textcolor{red!55!black}{\textit{--- Stage 4: GRPO Post-Training with Online Iterative Distillation ---}}
\For{iteration $n = 1, 2, \dots$}
    \State \textcolor{red!55!black}{Sample group of $G$ rollouts $\{\hat{x}^{(j)}\}_{j=1}^{G} \sim p_\theta$}
    \State \textcolor{red!55!black}{Score each rollout: $R^{(j)} = \sum_{k=1}^{6} w_k [g_\psi(l, \hat{x}^{(j)})]_k$}
    \State \textcolor{red!55!black}{Compute advantage: $A^{(j)} = (R^{(j)} - \mathrm{mean}_j R) / \mathrm{std}_j R$}
    \State \textcolor{red!55!black}{Update $\theta$ via clipped policy gradient $\mathcal{L}_{\mathrm{GRPO}}$ (Eq.~\ref{eq:grpo})}
    \If{\textcolor{codepurple}{$n \bmod K = 0$}}
        \State \textcolor{codepurple}{Score fresh rollouts with teacher $f_\phi$ and update student $g_\psi$}
        \Comment{\textcolor{codepurple}{Online iterative distillation}}
    \EndIf
\EndFor
\State \Return $p_{\theta^{*}}$
\end{algorithmic}
\end{algorithm}

\section{Student Reward Model Details}\label{app:student_reward}

Figure~\ref{fig:student_reward_model} presents the architecture of the distilled student reward model $g_\psi$. The student is designed to provide efficient reward estimation within the inner RL loop by approximating the six-dimensional score vector produced by the teacher judge $f_\phi$. In our implementation, the model contains approximately 98M parameters in total, of which about 40.6M are trainable under partial ViT unfreezing. Its inference latency is approximately 20\,ms per video (roughly 50 videos/s), making it substantially more efficient than direct teacher inference.

\begin{figure}[t]
  \centering
  \includegraphics[width=\linewidth]{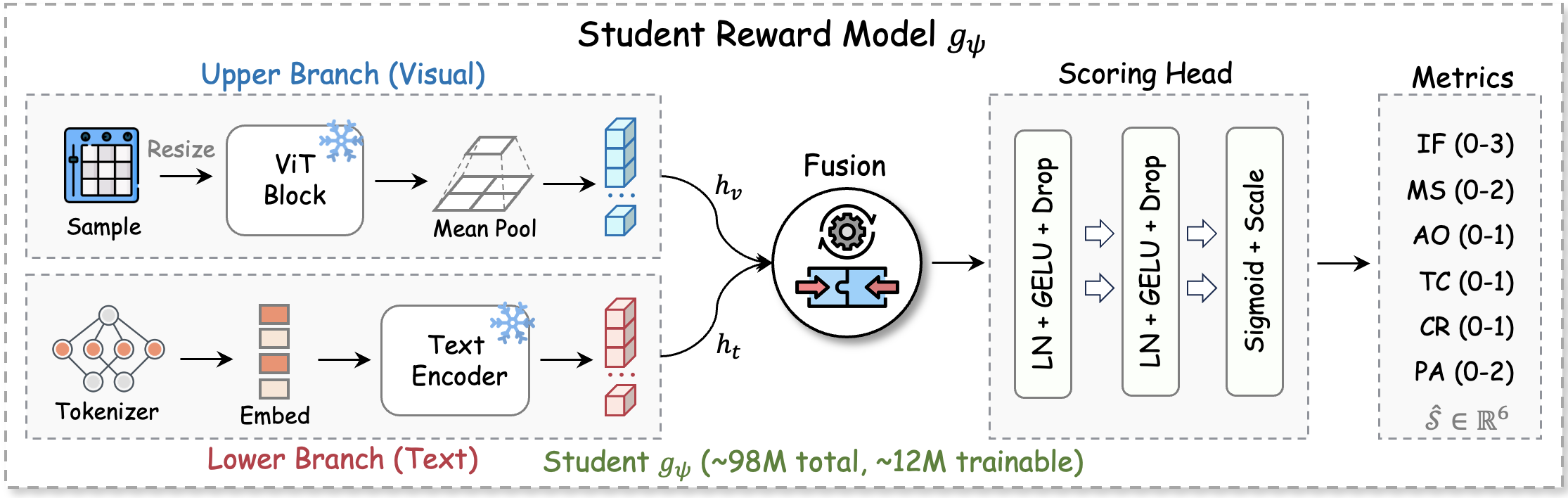}
  \caption{Architecture of the distilled student reward model $g_\psi$. A visual branch encodes uniformly sampled video frames using a partially unfrozen ViT-Base/16 backbone, while a text branch encodes the instruction using a lightweight Transformer. The fused multimodal representation is then mapped to six normalized reward dimensions by a compact MLP head.}
  \label{fig:student_reward_model}
\end{figure}

\paragraph{Visual branch.}
Given a generated video $v$ consisting of $T$ frames, we uniformly sample $N{=}8$ key frames. During training, we introduce temporal jitter by perturbing the uniform frame indices with small random offsets, thereby improving robustness to minor timing variations. The sampled frames are further processed with video-level data augmentation, including a shared random resized crop, a shared horizontal flip, and mild per-frame color jitter. During validation and testing, all frames are deterministically resized to $224 \times 224$.

The augmented frames are encoded independently by a pretrained ViT-Base/16 visual encoder. Rather than freezing the backbone entirely, we freeze the first 8 Transformer blocks and unfreeze the final 4 blocks together with the final normalization layer. This design allows high-level visual features to adapt to failure modes specific to robot-manipulation videos while preserving generic low-level image representations. For each frame, we extract the [CLS] feature $\mathbf{h}_i \in \mathbb{R}^{768}$ and perform temporal mean pooling to obtain a video-level representation,
\[
\mathbf{v}_{\text{pool}} = \frac{1}{N}\sum_{i=1}^{N}\mathbf{h}_i.
\]
This representation is then passed through a lightweight projection layer (LayerNorm $\rightarrow$ Linear $\rightarrow$ GELU) to produce the final visual embedding $\mathbf{v} \in \mathbb{R}^{768}$.

\paragraph{Text branch.}
The instruction $l$ is tokenized using the BERT tokenizer from \texttt{bert-base-uncased} and truncated or padded to a maximum length of $L{=}64$. The resulting subword sequence is encoded by a lightweight 4-layer Transformer encoder with hidden size 256, 4 attention heads, and feed-forward dimension 1024. A learnable [CLS] token is prepended to the input sequence and combined with learnable positional embeddings. After passing through the 4 Transformer layers, the output corresponding to the [CLS] token is used as the instruction embedding $\mathbf{t} \in \mathbb{R}^{256}$. Each layer adopts a Pre-Norm Transformer block with GELU activations and dropout at rate 0.2.

\paragraph{Multimodal fusion and scoring head.}
We fuse the visual and textual modalities by simple concatenation,
\[
\mathbf{z} = [\mathbf{v};\mathbf{t}] \in \mathbb{R}^{1024}.
\]
The fused representation is processed by a 3-layer MLP scoring head with the following structure: Linear$(1024 \rightarrow 512)$ $\rightarrow$ LayerNorm $\rightarrow$ GELU $\rightarrow$ Dropout, followed by Linear$(512 \rightarrow 256)$ $\rightarrow$ LayerNorm $\rightarrow$ GELU $\rightarrow$ Dropout, and a final Linear$(256 \rightarrow 6)$ layer. A sigmoid activation maps the output to normalized reward scores in $[0,1]^6$, corresponding to instruction following, manipulation success, action--outcome consistency, temporal consistency, contact realism, and physics adherence. During distillation training and during GRPO post-training (Eq.~\ref{eq:reward}), the student is always consumed in this normalized $[0,1]^6$ form; the raw-range rescaling to $[3,2,1,1,1,2]$ is applied only when student outputs need to be reported alongside teacher scores for diagnostic or visualization purposes, and is never re-applied before the reward aggregation $R(\hat{x}_{1:T})$.

\paragraph{Distillation training.}
We train the student on teacher-labeled video--instruction pairs using normalized teacher scores, which prevents dimensions with larger numerical ranges from dominating optimization. Instead of plain MSE, we adopt a per-dimension weighted Huber loss:
\begin{equation}
\mathcal{L}_{\mathrm{distill}}
\;=\;
\frac{1}{B}\sum_{b=1}^{B}\sum_{k=1}^{6}
\lambda_k \,\mathrm{Huber}_{\delta_h}\!\bigl(\hat{s}_{b,k}, \tilde{s}_{b,k}\bigr),
\end{equation}
where $\hat{s}_{b,k}\!=\![g_\psi(l_b,v_b)]_k\!\in\![0,1]$ is the student's sigmoid-activated output, $\tilde{s}_{b,k}$ is the teacher score normalized dimension-wise to $[0,1]$, and $\delta_h=0.5$ is the Huber threshold. Since both sides already share the same $[0,1]$ scale after normalization, the per-dimension weights $\{\lambda_k\}$ are used only to equalize convergence speed across dimensions with unequal label-noise levels (e.g., 3-level rubric vs.\ binary rubric); in practice we tune them on a validation split rather than tying them to the raw score ranges. Note that $\{\lambda_k\}$ are distinct from the reward aggregation weights $\{w_k\}$ used in Eq.~\ref{eq:reward}.

Training further employs mixup regularization with probability 0.3, AdamW with weight decay 0.05, and differential learning rates: $5\times 10^{-6}$ for the unfrozen ViT blocks and $5\times 10^{-4}$ for the text encoder and scoring head. The learning-rate schedule uses linear warmup over the first 10\% of training steps followed by cosine decay, with a minimum learning rate of $10^{-6}$. We additionally apply gradient clipping with maximum norm 1.0, maintain an exponential moving average (EMA) of the trainable parameters with decay 0.999, and select checkpoints based on validation-set Pearson correlation with early stopping (patience 10). Collectively, these design choices improve fidelity to the teacher's ranking behavior and stabilize the student when used as an online reward proxy during RL post-training.

\paragraph{Backbone and architecture comparison.}
Because the student reward model is intended to serve as a high-throughput reward proxy rather than the primary source of semantic supervision, an important design question is whether a simple CNN-style reward model is sufficient, or whether a partially unfrozen ViT-based architecture is necessary. Table~\ref{tab:student_backbone_ablation} compares representative lightweight baselines with the final partially unfrozen ViT student used in our method. The results indicate that the final student achieves a more favorable trade-off between fidelity to the teacher and online efficiency than simpler reward regressors.

\begin{table}[t]
  \centering
  \scriptsize
  \setlength{\tabcolsep}{3pt}
  \renewcommand{\arraystretch}{1.05}
  \caption{\textbf{Backbone and architecture ablation of the student reward model.} Comparison of representative visual encoders and fusion designs under matched training and evaluation settings. Metrics capture both fidelity to teacher judgments and inference efficiency.}
  \label{tab:student_backbone_ablation}
  \fitwidth{%
  \begin{tabular}{@{}lccccc@{}}
    \toprule
    \rowcolor{tblheadbg}
    Method & Visual encoder & Fusion module & Pearson corr. $\uparrow$ & MSE $\downarrow$ & Latency (ms) $\downarrow$ \\
    \midrule
    \rowcolor{tblstripe}
    ResNet student & ResNet-50 & concat + MLP & 0.79 & 0.186 & 13 \\
    Frozen ViT student & ViT-Base/16 (frozen) & concat + MLP & 0.84 & 0.142 & 18 \\
    \rowcolor{tblstripe}
    Text-only student & none & Transformer + MLP & 0.52 & 0.421 & 4 \\
    \rowcolor{tblstripe}
    Ours & ViT-Base/16 (last 4 blocks unfrozen) & concat + MLP & 0.88 & 0.096 & 20 \\
    \bottomrule
  \end{tabular}}
\end{table}

\paragraph{Teacher--student efficiency trade-off.}
The teacher and student play complementary roles in our framework. The high-capacity teacher remains essential for providing semantically rich, high-fidelity supervision during offline reward distillation. The student is introduced solely to make such supervision practical within the inner RL loop, where thousands of rollout evaluations may be required. Table~\ref{tab:student_efficiency} summarizes the resulting efficiency gap.

\begin{table}[t]
  \centering
  \scriptsize
  \setlength{\tabcolsep}{3pt}
  \renewcommand{\arraystretch}{1.06}
  \caption{\textbf{Teacher--student efficiency comparison for reward evaluation.} The teacher provides high-fidelity supervisory signals, whereas the distilled student is optimized for fast online reward evaluation during RL post-training. All numbers are measured or estimated on a single A100 40GB GPU.}
  \label{tab:student_efficiency}
  \fitwidth{%
  \begin{tabular}{@{}lcc@{}}
    \toprule
    \rowcolor{tblheadbg}
    Statistic & Teacher judge ($f_\phi$) & Student reward model ($g_\psi$) \\
    \midrule
    \rowcolor{tblstripe}
    Architecture & Qwen3-VL-8B-Thinking + LoRA & ViT-Base/16 + 4L text Transformer + MLP \\
    Total parameters & ${\sim}8$B & ${\sim}98$M \\
    \rowcolor{tblstripe}
    Trainable parameters & LoRA-only adaptation & ${\sim}40.6$M \\
    Per-video latency & ${\sim}2800$ ms & ${\sim}20$ ms \\
    \rowcolor{tblstripe}
    Inference throughput & ${\sim}0.5$ videos/s & ${\sim}50$ videos/s \\
    Inference GPU memory & ${\sim}16$--$20$ GB & ${\sim}0.5$--$1$ GB \\
    \rowcolor{tblstripe}
    Input representation & video frames + judge prompt & 8 sampled frames + instruction \\
    Output representation & autoregressive text + parsed scores & direct 6D score vector \\
    \bottomrule
  \end{tabular}}
\end{table}

This efficiency gap is particularly important for RL post-training. For example, if a single training step requires scoring 16 generated videos, direct teacher evaluation would still take on the order of seconds even with moderate parallelism, whereas the student can score the same batch in a fraction of a second. In practice, this shifts reward computation from a dominant bottleneck to a comparatively minor overhead, while preserving the teacher as the source of high-quality supervision.

\begin{table}[t]
  \centering
  \scriptsize
  \setlength{\tabcolsep}{3pt}
  \renewcommand{\arraystretch}{1.06}
  \caption{\textbf{Reward-evaluation cost within the RL loop.} Comparison of teacher-only evaluation and distilled-student evaluation for representative RL workloads.}
  \label{tab:student_rl_cost}
  \fitwidth{%
  \begin{tabular}{@{}lcc@{}}
    \toprule
    \rowcolor{tblheadbg}
    RL workload & Teacher judge ($f_\phi$) & Student reward model ($g_\psi$) \\
    \midrule
    \rowcolor{tblstripe}
    Reward evaluation for 16 videos / step & ${\sim}45$ s (serial) / ${\sim}12$ s (4-way parallel) & ${\sim}0.32$ s (batched) \\
    Reward cost over 1000 RL steps & ${\sim}12{,}000$--$45{,}000$ s & ${\sim}320$ s \\
    \rowcolor{tblstripe}
    Share of total training time & dominant bottleneck & minor overhead \\
    Additional hardware demand & dedicated large-VLM inference budget & can share training GPU \\
    \bottomrule
  \end{tabular}}
\end{table}

\section{Pseudocode and Implementation of Sliding Window Re-encoding}\label{app:code}

We present the sliding-window re-encoding inference procedure in two complementary formats: Algorithm~\ref{alg:swr} provides a formal pseudocode overview, and Listing~\ref{lst:swr} shows the corresponding PyTorch implementation extracted from our codebase.
Both use matching color coding for the three key stages:
{\color{mydarkblue}\textbf{prompt construction}} (\textcolor{mydarkblue}{blue}),
{\color{codegreen}\textbf{autoregressive generation}} (\textcolor{codegreen}{green}), and
{\color{red!55!black}\textbf{context refresh}} (\textcolor{red!55!black}{red}).

\begin{algorithm}[H]
\caption{Sliding Window Re-encoding Inference}\label{alg:swr}
\begin{algorithmic}[1]
\Require World model $p_\theta$;\; tokenizer $\mathcal{T}_{\mathrm{vis}}$;\; decoder $\mathcal{D}_{\mathrm{vis}}$;\; conditioning frame $x_0$;\; actions $\{a_t\}_{t=1}^{T}$;\; window size $W$
\Ensure Generated video $\hat{x}_{1:T}$
\State $z^{\mathrm{ctx}}_1,\, z^{\mathrm{dyn}}_0 \gets \mathcal{T}_{\mathrm{vis}}(x_0)$
\Comment{Encode initial context ($x_0$ used as both $E_c$ and length-1 $E_d$ input)}
\State $\mathrm{frames\_out} \gets [\,]$;\quad $m \gets 1$
\While{$|\mathrm{frames\_out}| < T$}
    \State $W' \gets \min(W,\; T - |\mathrm{frames\_out}|)$
    \State \textcolor{mydarkblue}{$\mathrm{prompt} \gets [\,z^{\mathrm{ctx}}_m \;\|\; z^{\mathrm{dyn}}_0 \;\|\; \hat{a}_1\,]$}
    \Comment{\textcolor{mydarkblue}{Build prompt}}
    \State $\mathrm{window\_tokens} \gets [\,]$
    \For{$j = 1, \dots, W'$}
        \State \textcolor{codegreen}{$\hat{z}^{\mathrm{dyn}}_j \gets p_\theta\!\bigl(\cdot \mid \mathrm{prompt}\bigr)$}
        \Comment{\textcolor{codegreen}{Generate $N_d{=}80$ dynamics tokens}}
        \State Append $\hat{z}^{\mathrm{dyn}}_j$ to $\mathrm{window\_tokens}$
        \State \textcolor{mydarkblue}{$\mathrm{prompt} \gets \mathrm{prompt} \;\|\; \hat{z}^{\mathrm{dyn}}_j \;\|\; \hat{a}_{j+1}$}
        \Comment{\textcolor{mydarkblue}{Extend prompt}}
    \EndFor
    \State \textcolor{codegreen}{$\hat{x}_{\mathrm{win}} \gets \mathcal{D}_{\mathrm{vis}}(z^{\mathrm{ctx}}_m,\; \mathrm{window\_tokens})$}
    \Comment{\textcolor{codegreen}{Decode window to pixels}}
    \State Append $\hat{x}_{\mathrm{win}}$ to $\mathrm{frames\_out}$
    \If{$|\mathrm{frames\_out}| < T$}
        \State \textcolor{red!55!black}{$\hat{x}_{\mathrm{last}} \gets \hat{x}_{\mathrm{win}}[-1]$}
        \Comment{\textcolor{red!55!black}{Take last decoded frame}}
        \State \textcolor{red!55!black}{$z^{\mathrm{ctx}}_{m+1},\, z^{\mathrm{dyn}}_0 \gets \mathcal{T}_{\mathrm{vis}}(\hat{x}_{\mathrm{last}})$}
        \Comment{\textcolor{red!55!black}{Re-encode last frame via $E_c$ and length-1 $E_d$}}
    \EndIf
    \State $m \gets m + 1$
\EndWhile
\State \Return $\mathrm{frames\_out}$
\end{algorithmic}
\end{algorithm}

\FloatBarrier
\begin{lstlisting}[style=pythonstyle, caption={\textbf{PyTorch implementation} of sliding window re-encoding.  Comments are color-coded to match Algorithm~\ref{alg:swr}. @\color{mydarkblue}{\texttt{ctx\_tokens}}@ corresponds to @\color{mydarkblue}{$z^{\mathrm{ctx}}_m$}@ and @\color{mydarkblue}{\texttt{dyn\_tokens}}@ to @\color{mydarkblue}{$z^{\mathrm{dyn}}_0$}@ in Algorithm~\ref{alg:swr}.}, label={lst:swr}, escapechar=@]
def generate_sliding_window(
    model, tokenizer, ctx_tokens, dyn_tokens,
    actions, num_frames, window_size=6,
):
    all_frames = []
    frames_done, step = 0, 0

    while frames_done < num_frames:
        W = min(window_size, num_frames - frames_done)

        @\color{mydarkblue}{\# [BLUE] Build initial prompt: [ctx | dyn\_0 | action\_0]}@
        prompt = build_prompt(ctx_tokens, dyn_tokens, actions[step])
        window_dyn = []

        @\color{codegreen}{\# [GREEN] Autoregressive generation within one window}@
        for j in range(W):
            gen_tokens = model.generate(prompt, max_tokens=80)
            window_dyn.append(gen_tokens)
            prompt = prompt + gen_tokens + discretize(actions[step+j+1])

        @\color{codegreen}{\# [GREEN] Decode entire window to pixel space}@
        decoded = tokenizer.decode(ctx_tokens, window_dyn)
        all_frames.extend(decoded)
        frames_done += W
        step += W

        @\color{red!55!black}{\# [RED] Context refresh: decode-re-encode cycle}@
        @\color{red!55!black}{\# last\_frame fed to both E\_c and length-1 E\_d}@
        if frames_done < num_frames:
            last_frame = decoded[-1]
            ctx_tokens, dyn_tokens = tokenizer.encode(
                last_frame, last_frame
            )

    return all_frames
\end{lstlisting}
\FloatBarrier

\section{SWR window-size ablation}\label{app:swr_window_ablate}

This section complements the main-text SWR panel (Table~\ref{tab:swr_quality_efficiency}).
Unless noted, measurements use Llama-style causal LM (12 layers, hidden 768, 12 heads), vLLM inference, and 300 episodes $\times 30$ frames.

\begin{table}[t]
  \centering
  \scriptsize
  \setlength{\tabcolsep}{3pt}
  \renewcommand{\arraystretch}{1.06}
  \caption{SWR efficiency vs.\ default AR across $W$: timing, throughput, GPU memory, prompt length (tokens), mean per-frame latency, and re-encoding.}
  \label{tab:swr_efficiency_full}
  \fitwidth{%
  \begin{tabular}{@{}lcccccc@{}}
    \toprule
    \rowcolor{tblheadbg}
    Metric & AR ($\infty$) & $W{=}4$ & $W{=}6$ & $W{=}8$ & $W{=}10$ & $W{=}15$ \\
    \midrule
    Total time (s) & \plainscore{5.646}{0.01} & \plainscore{5.863}{0.02} & \plainscore{5.709}{0.07} & \plainscore{5.858}{0.28} & \plainscore{5.740}{0.11} & \plainscore{5.732}{0.07} \\
    \rowcolor{tblstripe}
    FPS & $5.31$ & $5.12$ & $5.26$ & $5.13$ & $5.23$ & $5.23$ \\
    Peak GPU (MB) & $34{,}206$ & $32{,}066$ & $32{,}781$ & $33{,}493$ & $34{,}214$ & $34{,}209$ \\
    \rowcolor{tblstripe}
    Mean prompt len. & $2{,}722$ & $1{,}506$ & $1{,}606$ & $1{,}680$ & $1{,}792$ & $2{,}024$ \\
    Max prompt len. & $4{,}070$ & $1{,}652$ & $1{,}838$ & $2{,}024$ & $2{,}210$ & $2{,}675$ \\
    \rowcolor{tblstripe}
    Mean ms/frame & $182.2$ & $174.7$ & $175.7$ & $181.5$ & $180.7$ & $181.7$ \\
    \#Re-encodings & $0$ & $7$ & $4$ & $3$ & $2$ & $1$ \\
    \rowcolor{tblstripe}
    Re-enc.\ total (ms) & $0$ & $99.8$ & $73.8$ & $68.3$ & $48.3$ & $87.0$ \\
    \bottomrule
  \end{tabular}}
\end{table}

\begin{table}[t]
  \centering
  \footnotesize
  \setlength{\tabcolsep}{4pt}
  \renewcommand{\arraystretch}{1.10}
  \caption{Re-encoding overhead vs.\ $W$. \% total: re-encoding time / wall time. Per step: mean ms per refresh.}
  \label{tab:swr_reencode}
  \fitwidth{%
  \begin{tabular}{@{}lccccc@{}}
    \toprule
    \rowcolor{tblheadbg}
    $W$ & \#Re & Total (ms) & \% total & Per step (ms) \\
    \midrule
    $4$ & $7$ & $99.8$ & $1.70\%$ & $14.3$ \\
    \rowcolor{tblstripe}
    $6$ & $4$ & $73.8$ & $1.29\%$ & $18.5$ \\
    $8$ & $3$ & $68.3$ & $1.17\%$ & $22.8$ \\
    \rowcolor{tblstripe}
    $10$ & $2$ & $48.3$ & $0.84\%$ & $24.1$ \\
    $15$ & $1$ & $87.0$ & $1.52\%$ & $87.0$ \\
    \bottomrule
  \end{tabular}}
\end{table}

\begin{table}[t]
  \centering
  \footnotesize
  \setlength{\tabcolsep}{4pt}
  \renewcommand{\arraystretch}{1.10}
  \caption{Long-horizon prompt scaling: AR max prompt extrapolates linearly with rollout length; SWR ($W{=}6$) caps at the 30-frame plateau. Slow-down columns are indicative.}
  \label{tab:swr_scale}
  \fitwidth{%
  \begin{tabular}{@{}lccccc@{}}
    \toprule
    \rowcolor{tblheadbg}
    Frames & AR max prompt & SWR $W{=}6$ max & Prompt drop & AR slow-down & SWR slow-down \\
    \midrule
    $30$ & $4{,}070$ & $1{,}838$ & $54.8\%$ & ${\sim}7\%$ & ${\sim}0\%$ \\
    \rowcolor{tblstripe}
    $100$ & ${\sim}12{,}673$ & $1{,}838$ & $85.5\%$ & ${\sim}25$--$30\%$ & ${\sim}0\%$ \\
    $300$ & ${\sim}37{,}273$ & $1{,}838$ & $95.1\%$ & OOM risk & ${\sim}0\%$ \\
    \bottomrule
  \end{tabular}}
\end{table}

\section{SWR qualitative visualizations}\label{app:swr_case_figs}

This section provides qualitative case studies comparing default autoregressive decoding with SWR under different refresh windows.
In each figure, columns show sampled frames along the rollout, while rows compare Ground Truth, Default AR, and SWR with different window sizes $W$.
The highlighted row marks the visually best SWR variant for that episode.
Consistent with the quantitative results, periodic refresh generally reduces long-horizon drift and better preserves task progression and object configuration, although the best per-episode $W$ can vary with the manipulation trajectory.

\begin{figure}[t]
  \centering
  \includegraphics[width=\linewidth]{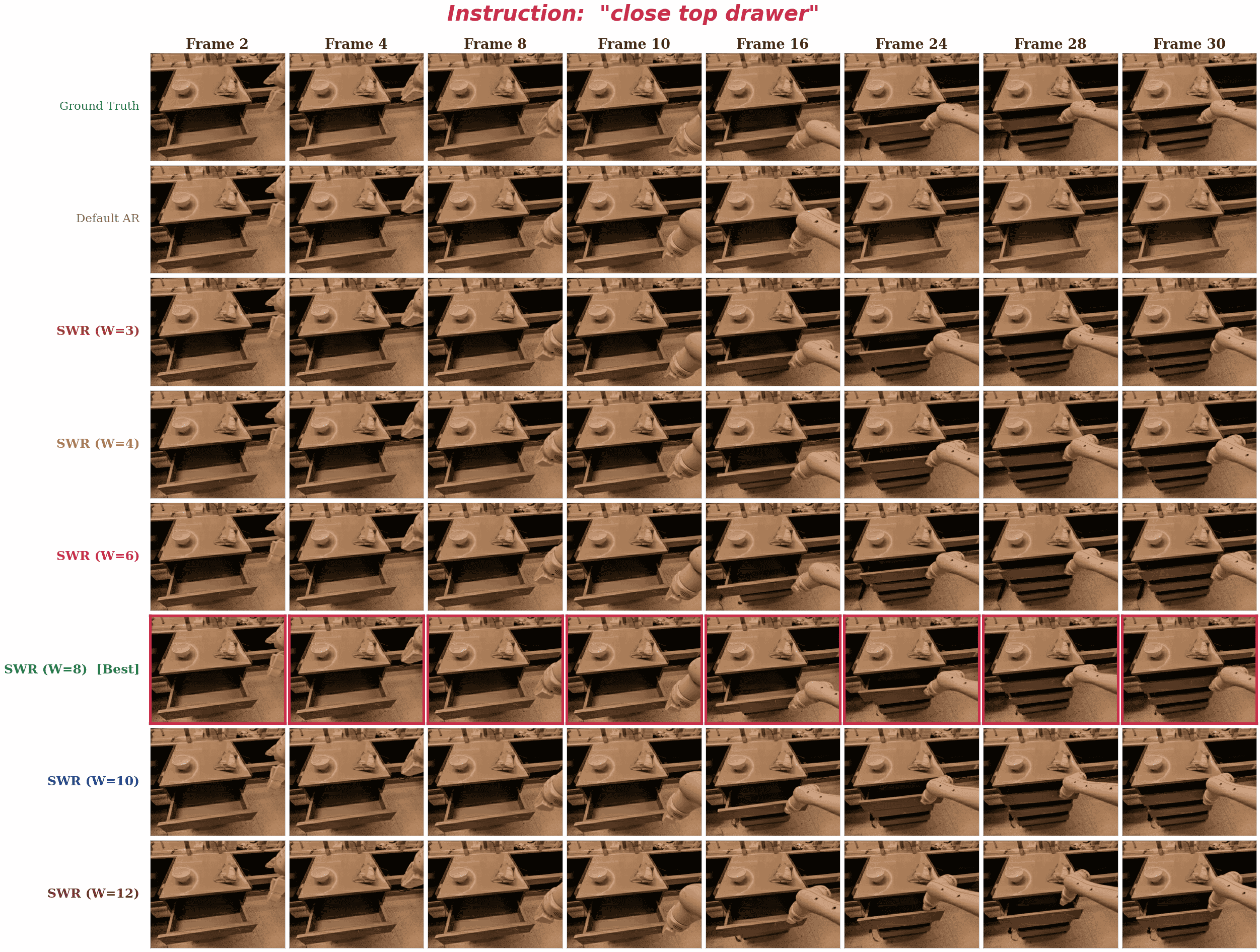}
  \caption{Qualitative comparison for the instruction \textit{``close top drawer''}. The highlighted row denotes the visually best SWR result for this episode.}
  \label{fig:swr_case_1}
\end{figure}

\begin{figure}[t]
  \centering
  \includegraphics[width=\linewidth]{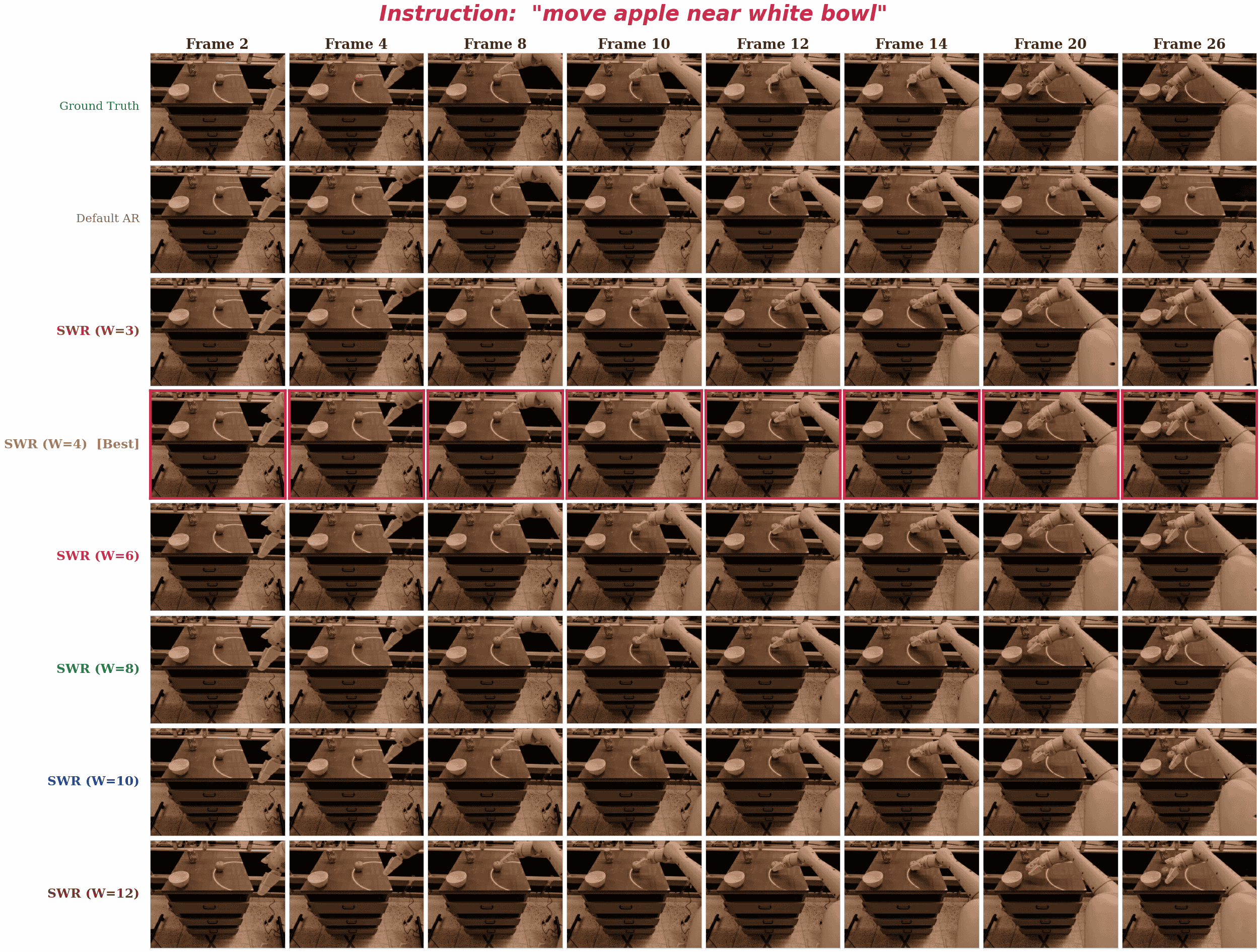}
  \caption{Qualitative comparison for the instruction \textit{``move apple near white bowl''}. The highlighted row denotes the visually best SWR result for this episode.}
  \label{fig:swr_case_2}
\end{figure}

\begin{figure}[t]
  \centering
  \includegraphics[width=\linewidth]{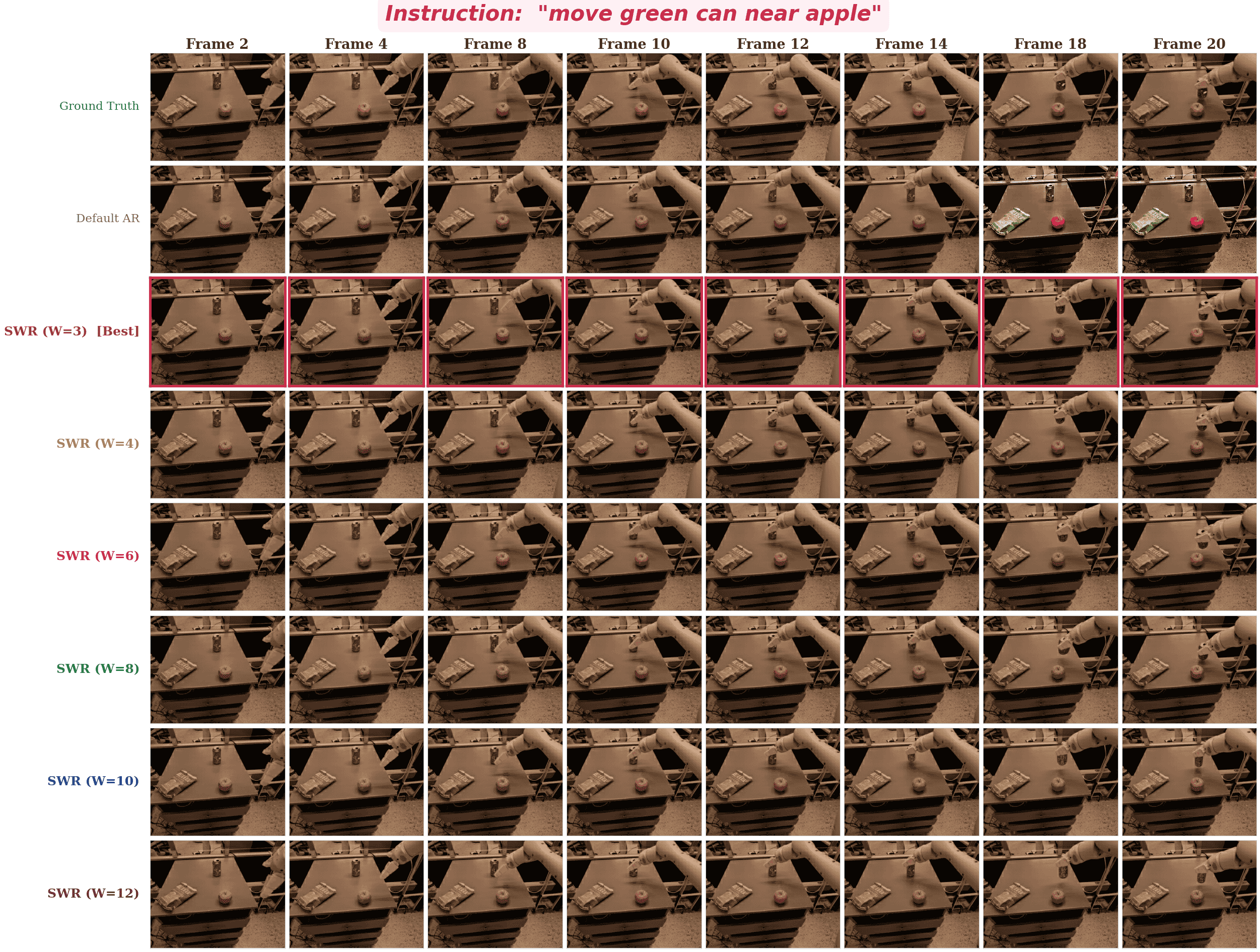}
  \caption{Qualitative comparison for the instruction \textit{``move green can near apple''}. The highlighted row denotes the visually best SWR result for this episode.}
  \label{fig:swr_case_3}
\end{figure}

\begin{figure}[t]
  \centering
  \includegraphics[width=\linewidth]{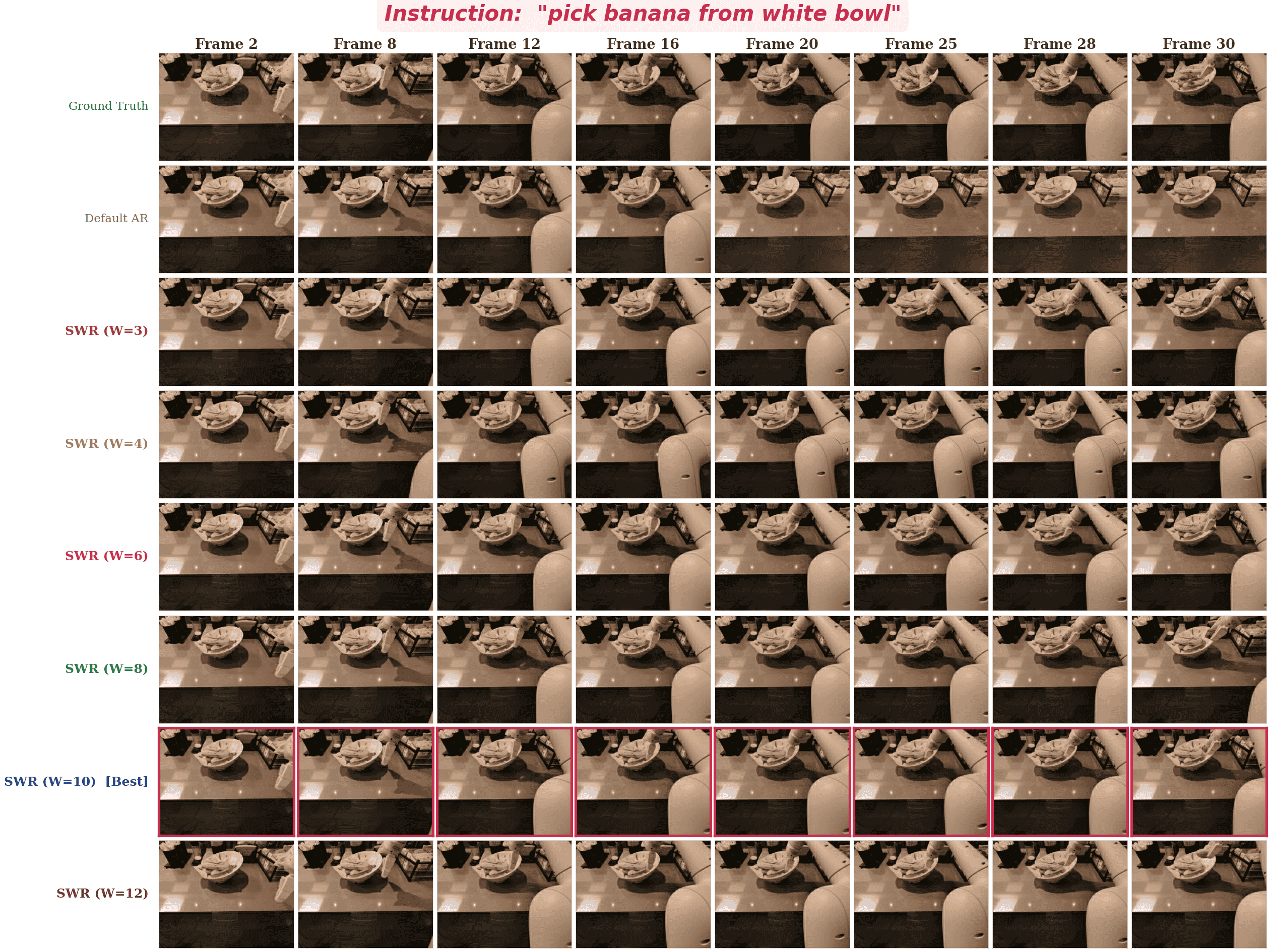}
  \caption{Qualitative comparison for the instruction \textit{``pick banana from white bowl''}. The highlighted row denotes the visually best SWR result for this episode.}
  \label{fig:swr_case_4}
\end{figure}

\section{Evaluation Metrics}\label{app:eval_metrics}

This section provides a detailed description of all evaluation metrics used in our experiments.

\subsection{Semantic and Physical Alignment Metrics}\label{app:metric_semantic}

The \textsc{RoboAlign-Judge} evaluates generated videos along six fine-grained dimensions, each scored on a bounded scale:

\begin{itemize}[leftmargin=1.5em, itemsep=0.15em]
  \item \textbf{Instruction Following (IF)} $\in [0, 3]$: Whether the generated video faithfully executes the language instruction.
  \item \textbf{Manipulation Success (MS)} $\in [0, 2]$: Whether the robot successfully completes the target manipulation task.
  \item \textbf{Action--Outcome Consistency (AO)} $\in [0, 1]$: Whether the observed outcome is consistent with the executed action.
  \item \textbf{Temporal Consistency (TC)} $\in [0, 1]$: Whether the video maintains smooth and temporally consistent transitions.
  \item \textbf{Contact Realism (CR)} $\in [0, 1]$: Whether contact events between the robot and objects appear physically plausible.
  \item \textbf{Physics Adherence (PA)} $\in [0, 2]$: Whether the video obeys basic physical laws (gravity, rigidity, collision response).
\end{itemize}

\noindent The aggregate score is the sum of the raw six-dimensional scores, yielding a maximum of $10.0$. When these scores are used for student distillation, each dimension is normalized independently to $[0,1]$ before regression.

\subsection{Pixel-Level Reconstruction Metrics}\label{app:metric_pixel}

We report four standard metrics computed over the full image:

\begin{itemize}[leftmargin=1.5em, itemsep=0.15em]
  \item \textbf{MSE}: Mean Squared Error between generated and ground-truth frames (pixel values normalized to $[0,1]$).
  \item \textbf{PSNR}: Peak Signal-to-Noise Ratio, defined as $\mathrm{PSNR} = 10\log_{10}(1/\mathrm{MSE})$.
  \item \textbf{SSIM}~\cite{wang2004image}: Structural Similarity Index, measuring luminance, contrast, and structural similarity.
  \item \textbf{LPIPS}~\cite{zhang2018unreasonable}: Learned Perceptual Image Patch Similarity, measuring perceptual distance using deep features.
\end{itemize}

\subsection{Motion-Mask-Based ROI Metrics}\label{app:roi_metrics}

\paragraph{Motivation.}
In robot manipulation videos, the background typically remains static while only the robot arm and manipulated objects undergo significant motion. As a result, global pixel-level metrics (e.g., PSNR, SSIM) are dominated by the large static background, which is trivially easy to reconstruct. This dilutes the evaluation signal from the critical dynamic interaction regions. To address this, we introduce \textbf{Region-of-Interest (ROI) metrics} that focus exclusively on the dynamic foreground.

\paragraph{Motion mask extraction.}
We derive the ROI from ground-truth video sequences using temporal frame differencing. Specifically, for each frame $x_t$, we compute the average absolute pixel difference against neighboring frames within a temporal window of size $\tau$:
\begin{equation}\label{eq:frame_diff}
  D_t(i,j) = \frac{1}{|\mathcal{N}_t|} \sum_{t' \in \mathcal{N}_t} \frac{1}{3}\sum_{c=1}^{3} |x_t^c(i,j) - x_{t'}^c(i,j)|,
\end{equation}
where $\mathcal{N}_t = \{t' : 0 < |t'-t| \le \tau,\; 1 \le t' \le T\}$ and $c$ indexes the RGB channels. The binary motion mask is obtained by thresholding:
\begin{equation}\label{eq:motion_mask}
  M_t(i,j) = \mathbf{1}[D_t(i,j) > \theta],
\end{equation}
followed by morphological closing and dilation (elliptical kernel, size $k$) to fill small gaps and include surrounding context. In our experiments, we use $\tau{=}3$, $\theta{=}15$, and $k{=}15$. The \textbf{union mask} $M_{\cup} = \max_t M_t$ aggregates all per-frame masks into a single ROI that covers the entire dynamic region across the episode.

\paragraph{Illustration.}
Figure~\ref{fig:motion_mask} visualizes the motion mask extraction process on a representative RT-1 episode. The left panel shows the original ground-truth frame; the right panel overlays the extracted motion mask (highlighted region) on the frame. The mask accurately captures the robot arm and the manipulated object while excluding the static background, confirming that ROI metrics focus evaluation on the most task-relevant regions.

\begin{figure}[t]
  \centering
  \includegraphics[width=0.85\linewidth]{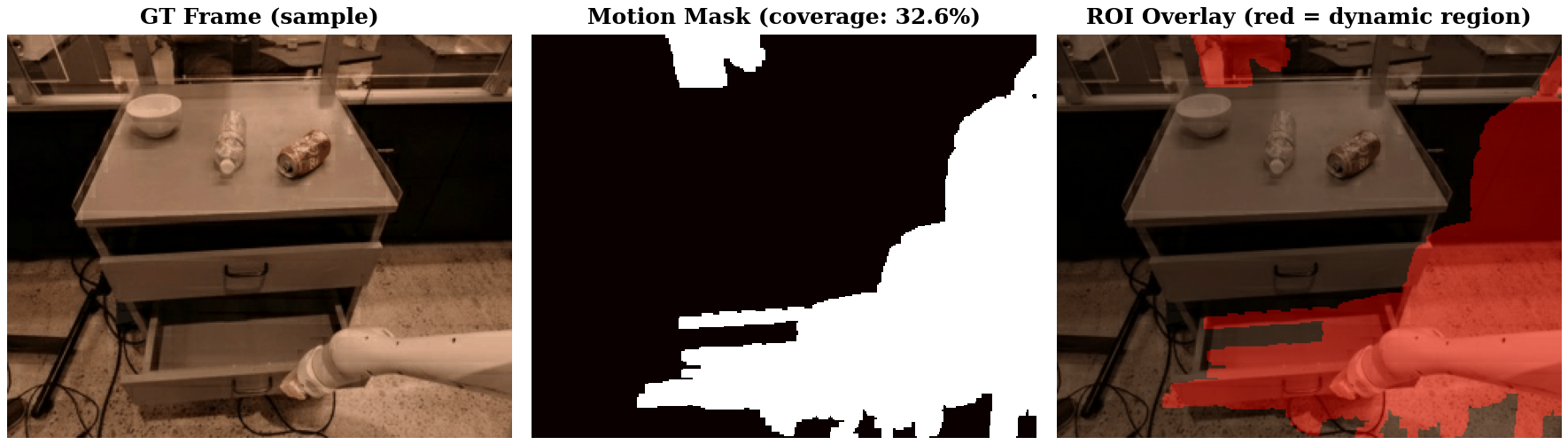}
  \caption{\textbf{Motion mask visualization.} The extracted motion mask (highlighted) isolates the dynamic interaction region---the robot arm and manipulated object---from the static background. ROI metrics are computed exclusively within this masked region, providing a more faithful assessment of generation quality for task-critical areas.}
  \label{fig:motion_mask}
\end{figure}

\paragraph{ROI metric definitions.}
Given a ground-truth frame $x_t$, a generated frame $\hat{x}_t$, and the binary mask $M_t$ (or $M_{\cup}$), the ROI metrics are defined as:

\begin{itemize}[leftmargin=1.5em, itemsep=0.15em]
  \item \textbf{ROI-MSE}: $\displaystyle \mathrm{ROI\text{-}MSE} = \frac{\sum_{i,j} M(i,j) \cdot \|x_t(i,j) - \hat{x}_t(i,j)\|^2}{3 \cdot \sum_{i,j} M(i,j)}$
  \item \textbf{ROI-PSNR}: $\displaystyle \mathrm{ROI\text{-}PSNR} = 10\log_{10}\!\left(\frac{1}{\mathrm{ROI\text{-}MSE}}\right)$
  \item \textbf{ROI-SSIM}: Local SSIM map computed over the full image, then averaged only within the masked region.
  \item \textbf{ROI-LPIPS}: LPIPS computed by masking out the static background (setting it to zero in both ground-truth and generated frames) before passing through the perceptual network.
\end{itemize}

\paragraph{ROI coverage.}
We additionally report \textbf{ROI coverage}, defined as the fraction of pixels in the union mask: $|M_{\cup}| / (H \times W)$. This provides context for interpreting ROI metrics---a typical RT-1 episode has ROI coverage of approximately $30$--$35\%$, confirming that the majority of the image is static background.


\section{\textsc{RoboAlign-Judge}: Multimodal Teacher Judge Training Details}
\label{app:roboalign_judge}

This appendix provides a comprehensive account of the training pipeline, data construction, and evaluation of \textsc{RoboAlign-Judge}---the multimodal teacher judge at the core of \method{}'s reward-aligned post-training framework (\S\ref{sec:reward}).
\textsc{RoboAlign-Judge} is instantiated by fine-tuning Qwen3-VL-8B-Thinking~\cite{bai2025qwen25vl} with LoRA~\cite{hu2022lora}, producing structured six-dimensional quality scores together with chain-of-thought reasoning for robot manipulation videos.

\subsection{Motivation and Design Rationale}
\label{app:judge_motivation}

As discussed in \S\ref{sec:reward}, standard low-level metrics (MSE, LPIPS, SSIM) are poorly aligned with the properties that matter for robot world models: instruction correctness, physical plausibility, and action--outcome consistency.
A multimodal judge that can assess these high-level properties provides much richer supervision for RL post-training.
We choose Qwen3-VL-8B-Thinking as the teacher backbone for three reasons:
\begin{enumerate}[leftmargin=*,nosep,label=(\roman*)]
    \item \textbf{Native video understanding.} Qwen3-VL supports multi-frame visual inputs with dynamic resolution, enabling direct processing of robot manipulation video sequences without frame-level feature extraction.
    \item \textbf{Built-in reasoning.} The ``Thinking'' variant provides structured chain-of-thought reasoning via \texttt{<think>} tokens, which naturally aligns with our requirement for interpretable, dimension-wise scoring with explicit justification.
    \item \textbf{Parameter efficiency.} At 8B parameters, the model is large enough to capture nuanced physical and semantic judgments, yet small enough for efficient LoRA fine-tuning and batch inference during teacher labeling.
\end{enumerate}

\subsection{Training Data Construction}
\label{app:judge_data}

The training data for \textsc{RoboAlign-Judge} forms the annotation component of \textsc{RobotWorldBench} (Eq.~\ref{eq:bench}), which spans four robot datasets and 10{,}000 annotated video--instruction pairs in total. In this appendix, we summarize two representative construction pipelines used in the corpus and report the cross-dataset composition used in the current study.

\paragraph{Part A: Synthetic degradation with rule-based annotation.}
As one representative source within the broader corpus, we extract 500 ground-truth (GT) episodes from the RT-1 dataset~\cite{brohan2023rt1}, each containing a robot manipulation video and its corresponding language instruction.
For each GT video, we generate two degraded variants by randomly sampling 1--3 degradation types from a pool of 10 operations spanning four categories:
\begin{itemize}[leftmargin=*,nosep]
    \item \textbf{Temporal} (3 types): frame shuffling, frame dropping, segment reversal---simulating temporal incoherence and causal violations.
    \item \textbf{Visual} (4 types): Gaussian blur, Gaussian noise, color jitter, resolution degradation---simulating perceptual quality loss and compression artifacts.
    \item \textbf{Spatial} (1 type): random per-frame spatial shifts---simulating camera instability and spatial inconsistency.
    \item \textbf{Semantic} (2 types): video truncation, end-frame freezing---simulating task incompleteness and action termination failures.
\end{itemize}
Each degradation is applied at a severity level sampled from $\{\text{mild}, \text{moderate}, \text{severe}\}$ with probabilities $\{0.2, 0.4, 0.4\}$, biasing toward harder examples.
Scores are computed deterministically: each degradation type has predefined per-dimension deduction fractions that scale with severity, ensuring consistent and reproducible annotations.
This yields ${\sim}$994 augmented videos.
An additional 100 GT videos are retained as perfect-score references (total score $= 10$), providing the upper anchor for the score distribution.
Figure~\ref{fig:judge_synth_degradation_examples} shows representative examples from this synthetic-degradation pipeline, illustrating how the rule-based transformations create controlled failures across multiple error types and severity levels.

\begin{figure}[t]
\centering
\includegraphics[width=\linewidth]{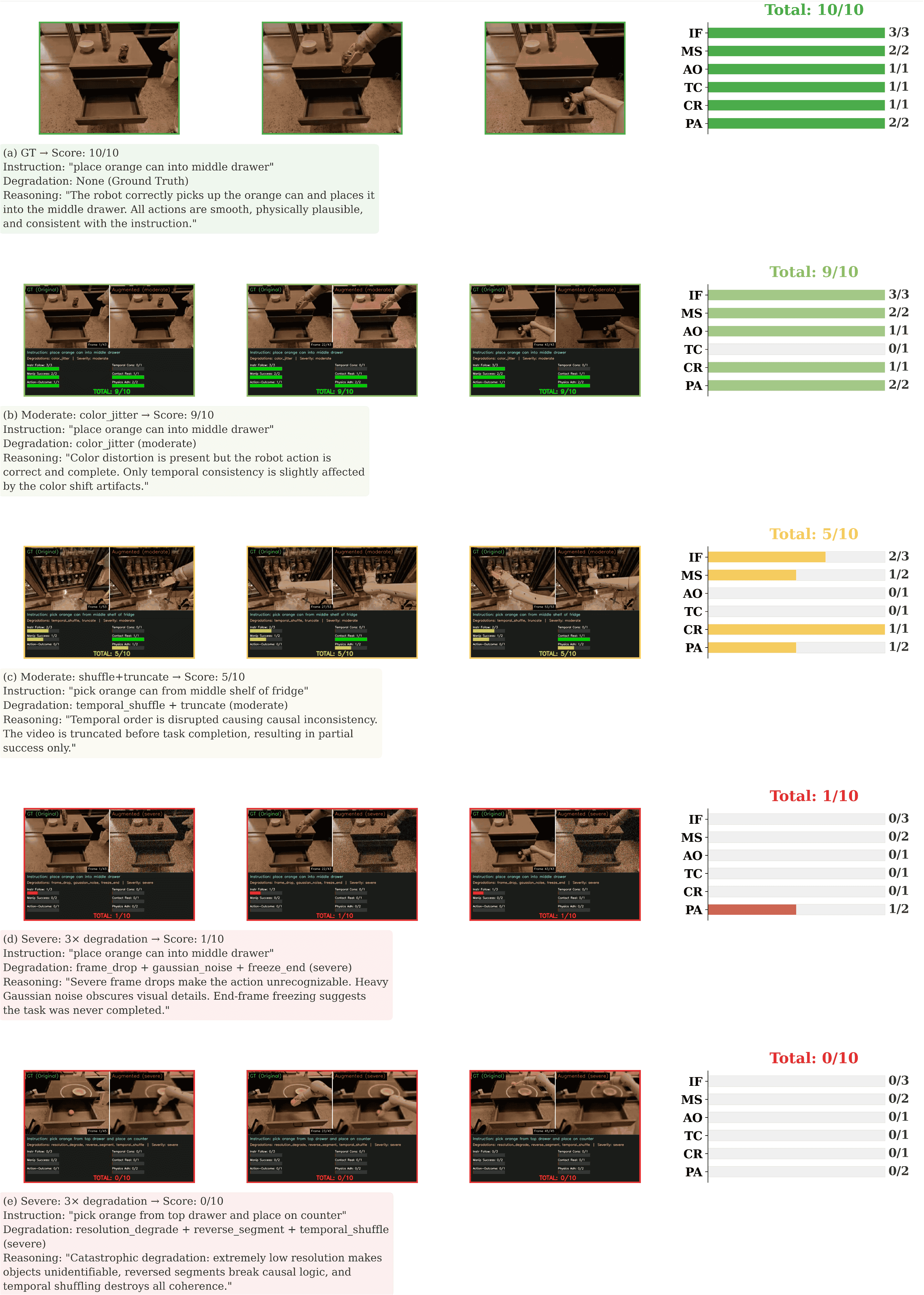}
\caption{Representative examples from Part~A synthetic degradation with rule-based annotation. Starting from a ground-truth robot manipulation video, we apply controlled temporal, visual, spatial, and semantic perturbations at different severity levels to generate training examples with reproducible score deductions.}
\label{fig:judge_synth_degradation_examples}
\end{figure}

\paragraph{Part B: Real I2V model outputs with human annotation.}
Synthetic degradation cannot fully capture the failure modes of real generative models---\emph{e.g.}, hallucinated objects, semantic drift, physically implausible contact dynamics, or mode collapse.
To address this gap, we collect generated videos from 10 diverse image-to-video (I2V) models evaluated on RT-1 prompts, spanning both general-purpose and robotics-specific architectures (Table~\ref{tab:judge_i2v_models}).
Each video is scored by trained human annotators following the identical six-dimension rubric used in Part~A.

\begin{table}[h]
\centering
\small
\setlength{\tabcolsep}{4pt}
\renewcommand{\arraystretch}{1.10}
\caption{I2V models used for collecting human-annotated training data in \textsc{RobotWorldBench}. Models span diffusion-based, transformer-based, and autoregressive architectures across general and robotics domains.}
\label{tab:judge_i2v_models}
\fitwidth{%
\begin{tabular}{@{}llll@{}}
\toprule
\rowcolor{tblheadbg}
\textbf{Model} & \textbf{Architecture} & \textbf{Domain} & \textbf{Key Characteristic} \\
\midrule
\rowcolor{tblstripe}
CogVideoX       & Diffusion-based I2V  & General  & 3D causal VAE + DiT \\
HunyuanVideo     & Diffusion-based I2V  & General  & Dual-stream DiT \\
\rowcolor{tblstripe}
I2VGen-XL        & Cascaded diffusion   & General  & Two-stage generation \\
LTX-Video        & Latent Transformer   & General  & Efficient latent space \\
\rowcolor{tblstripe}
SVD              & Diffusion-based I2V  & General  & Temporal layer fine-tuning \\
OpenSora         & DiT-based I2V        & General  & Open-source Sora replica \\
\rowcolor{tblstripe}
DynamiCrafter    & Diffusion-based I2V  & General  & Image-conditioned dynamics \\
\midrule
\rowcolor{tblstripe}
iVideoGPT       & Autoregressive GPT   & Robotics & Token-based world model \\
RoboDreamer      & Goal-conditioned diffusion & Robotics & Compositional planning \\
\rowcolor{tblstripe}
Vid2World        & Diffusion world model & Robotics & Interactive simulation \\
\bottomrule
\end{tabular}}
\end{table}

\paragraph{Data complementarity and balancing.}
The two sources serve complementary roles: Part~A provides controllable coverage across score ranges with deterministic labels, while Part~B introduces authentic generative artifacts that improve robustness to real-world evaluation scenarios. Beyond these RT-1-centered examples, the full \textsc{RobotWorldBench} training corpus used in the current study spans four robot datasets and contains 10{,}000 annotated video--instruction pairs for judge training and reward distillation. After score-bin balancing and corpus aggregation, the resulting training set maintains broad coverage over the $[0, 10]$ score range, which is important for preventing the judge from developing score-range biases that would propagate through reward distillation into RL post-training.

\begin{table}[htbp]
  \centering
  \small
  \setlength{\tabcolsep}{8pt}
  \renewcommand{\arraystretch}{1.1}
  \caption{\textsc{RobotWorldBench} data statistics.}
  \label{tab:judge_data_stats_template}
  \fitwidth{%
  \begin{tabular}{lcccc}
    \toprule
    \rowcolor{tblheadbg}
    Dataset & GT & Synthetic & Generated & Total Pairs \\
    \midrule
    \rowcolor{tblstripe}
    RT-1 & 150 & 450 & 600 & 1,200 \\
    BridgeData V2 & 450 & 1,050 & 1,500 & 3,000 \\
    \rowcolor{tblstripe}
    CALVIN & 300 & 700 & 1,000 & 2,000 \\
    LIBERO & 550 & 1,350 & 1,900 & 3,800 \\
    \midrule
    \rowcolor{mydarkblue!6}
    \textbf{Total} & \textbf{1,450} & \textbf{3,550} & \textbf{5,000} & \textbf{10,000} \\
    \bottomrule
  \end{tabular}}
\end{table}

\begin{figure*}[t]
\centering
\begin{minipage}[t]{0.58\textwidth}
\centering
\begin{tikzpicture}
\begin{axis}[
  ybar stacked,
  width=\linewidth,
  height=5.3cm,
  ymin=0,
  ymax=4200,
  ylabel={Pairs},
  symbolic x coords={RT-1,Bridge,CALVIN,LIBERO},
  xtick=data,
  xticklabel style={font=\scriptsize},
  ylabel style={font=\scriptsize},
  yticklabel style={font=\scriptsize},
  legend style={
    font=\scriptsize,
    at={(0.02,0.98)},
    anchor=north west,
    draw=none,
    fill=none,
    legend columns=3
  },
  enlarge x limits=0.18,
  ymajorgrids=true,
  grid style={dashed,gray!35},
]
\addplot+[fill=mydarkblue!55, draw=mydarkblue!80] coordinates {(RT-1,150) (Bridge,450) (CALVIN,300) (LIBERO,550)};
\addplot+[fill=codegreen!55, draw=codegreen!70!black] coordinates {(RT-1,450) (Bridge,1050) (CALVIN,700) (LIBERO,1350)};
\addplot+[fill=red!55, draw=red!70!black] coordinates {(RT-1,600) (Bridge,1500) (CALVIN,1000) (LIBERO,1900)};
\legend{GT,Synthetic,Generated}
\end{axis}
\end{tikzpicture}
\end{minipage}\hfill
\begin{minipage}[t]{0.38\textwidth}
\centering
\begin{tikzpicture}
\begin{axis}[
  ybar,
  width=\linewidth,
  height=5.3cm,
  ymin=0,
  ymax=5600,
  ylabel={Pairs},
  symbolic x coords={GT,Synthetic,Generated},
  xtick=data,
  xticklabel style={font=\scriptsize},
  ylabel style={font=\scriptsize},
  yticklabel style={font=\scriptsize},
  enlarge x limits=0.22,
  ymajorgrids=true,
  grid style={dashed,gray!35},
]
\addplot+[fill=mydarkblue!55, draw=mydarkblue!80] coordinates {(GT,1450) (Synthetic,3550) (Generated,5000)};
\end{axis}
\end{tikzpicture}
\end{minipage}
\caption{\textbf{Corpus composition of \textsc{RobotWorldBench}.} \textbf{Left:} per-dataset breakdown by source type. \textbf{Right:} aggregate source composition across the full 10,000-pair annotation corpus. Generated samples form the largest component, complemented by rule-based synthetic degradations and a smaller set of ground-truth reference videos.}
\label{fig:judge_data_stats}
\end{figure*}

\FloatBarrier
\subsection{Six-Dimension Scoring Rubric}
\label{app:judge_rubric}

\textsc{RoboAlign-Judge} evaluates each video along six complementary dimensions, grouped into \emph{task alignment} and \emph{physical realism} categories (Table~\ref{tab:judge_rubric}).
The rubric is designed to capture the properties most relevant to robot world-model utility: whether the predicted future correctly reflects the intended manipulation, and whether the physical dynamics are plausible enough to support downstream planning.

\begin{table}[h]
\centering
\small
\setlength{\tabcolsep}{3pt}
\renewcommand{\arraystretch}{1.10}
\caption{Six-dimension scoring rubric for \textsc{RoboAlign-Judge}. Dimensions are grouped by task alignment (semantic correctness) and physical realism (dynamics plausibility).}
\label{tab:judge_rubric}
\begin{tabular}{@{}llcp{0.50\linewidth}@{}}
\toprule
\rowcolor{tblheadbg}
\textbf{Category} & \textbf{Dimension} & \textbf{Range} & \textbf{Scoring Criteria} \\
\midrule
\rowcolor{tblstripe}
& Instruction Following (IF) & 0--3 & 0: completely unrelated action; 1: vaguely related but wrong; 2: correct action but incomplete/imprecise; 3: perfectly follows instruction \\
& Manipulation Success (MS) & 0--2 & 0: task completely failed; 1: partial success (object moved but not to target); 2: full success \\
\rowcolor{tblstripe}
& Action-Outcome Consist. (AO) & 0--1 & 0: actions and outcomes are inconsistent; 1: logically consistent \\
\midrule
\rowcolor{tblstripe}
& Temporal Consistency (TC) & 0--1 & 0: severe temporal artifacts (flickering, jumps); 1: smooth and temporally coherent \\
& Contact Realism (CR) & 0--1 & 0: unrealistic contacts (penetration, floating); 1: natural and physically realistic \\
\rowcolor{tblstripe}
& Physics Adherence (PA) & 0--2 & 0: severe physics violations; 1: minor issues but mostly plausible; 2: fully physically plausible \\
\midrule
\rowcolor{mydarkblue!6}
& \textbf{Total} & \textbf{0--10} & Sum of all six dimensions \\
\bottomrule
\end{tabular}
\end{table}

\subsection{Model Architecture and LoRA Fine-Tuning}
\label{app:judge_training}

\paragraph{Architecture.}
\textsc{RoboAlign-Judge} is built on Qwen3-VL-8B-Thinking, a multimodal large language model with native multi-frame video understanding.
We apply Low-Rank Adaptation (LoRA) to all linear projection layers in both the attention blocks ($\mathbf{W}_q, \mathbf{W}_k, \mathbf{W}_v, \mathbf{W}_o$) and the MLP blocks ($\mathbf{W}_{\text{gate}}, \mathbf{W}_{\text{up}}, \mathbf{W}_{\text{down}}$), totaling 7 target modules per Transformer layer.
This broad LoRA coverage ensures that both the visual--language alignment and the reasoning capacity are adapted to the robot manipulation evaluation domain.
Table~\ref{tab:judge_training_config} summarizes the complete training configuration.

\begin{table}[h]
\centering
\small
\setlength{\tabcolsep}{4pt}
\renewcommand{\arraystretch}{1.10}
\caption{Training configuration for \textsc{RoboAlign-Judge} LoRA fine-tuning.}
\label{tab:judge_training_config}
\fitwidth{%
\begin{tabular}{@{}ll@{}}
\toprule
\rowcolor{tblheadbg}
\textbf{Hyperparameter} & \textbf{Value} \\
\midrule
\rowcolor{tblstripe}
Base Model & Qwen3-VL-8B-Thinking \\
Trainable Parameters & ${\sim}$160M (LoRA only) / 8.3B total \\
\rowcolor{tblstripe}
LoRA Rank ($r$) & 64 \\
LoRA Scaling ($\alpha$) & 128 \\
\rowcolor{tblstripe}
LoRA Dropout & 0.05 \\
Target Modules & \texttt{q\_proj, k\_proj, v\_proj, o\_proj, gate\_proj, up\_proj, down\_proj} \\
\midrule
\rowcolor{tblstripe}
Training Epochs & 5 \\
Optimizer & AdamW ($\beta_1{=}0.9$, $\beta_2{=}0.999$, $\epsilon{=}10^{-8}$) \\
\rowcolor{tblstripe}
Learning Rate & $2 \times 10^{-4}$ \\
LR Schedule & Cosine decay with linear warmup \\
\rowcolor{tblstripe}
Warmup Ratio & 0.1 \\
Effective Batch Size & 32 ($1 \times 4$ gradient accumulation $\times$ 8 GPUs) \\
\rowcolor{tblstripe}
Max Sequence Length & 4,096 tokens \\
Precision & BFloat16 (mixed precision) \\
\rowcolor{tblstripe}
Gradient Checkpointing & \checkmark \\
Hardware & 8$\times$ NVIDIA A100 40GB \\
\rowcolor{tblstripe}
Training Time & ${\sim}$31 minutes \\
\bottomrule
\end{tabular}}
\end{table}

\paragraph{Input format.}
Each training sample is structured as a three-turn conversation following the Qwen3-VL chat template:
\begin{enumerate}[leftmargin=*,nosep,label=(\arabic*)]
    \item \textbf{System prompt}: defines the evaluator role, the six scoring dimensions with detailed rubrics, and the required JSON output schema (see Appendix~\ref{app:judge_prompt} for the full prompt).
    \item \textbf{User message}: contains the language instruction $l$, the initial frame $x_0$ (as an image), and $N{=}8$ frames uniformly sampled from the generated video $v$.
    \item \textbf{Assistant response}: begins with \texttt{<think>}...\texttt{</think>} chain-of-thought reasoning that analyzes each dimension step by step, followed by a structured JSON object containing the six dimension scores and total.
\end{enumerate}
The training loss $\mathcal{L}_{\text{teacher}}$ (Eq.~\ref{eq:teacher}) is computed only on the assistant response tokens, ensuring that the model learns to produce both the reasoning trace and the structured scores while treating the system and user turns as context.

\paragraph{Training dynamics.}
The training loss drops rapidly in the first epoch (from 0.762 to 0.066), indicating fast adaptation of the LoRA parameters to the evaluation task.
Convergence is reached by epoch 3--4, with the final loss stabilizing at ${\sim}$0.016 by epoch 5.
The smooth convergence without loss spikes or oscillation confirms that the LoRA rank ($r{=}64$) provides sufficient capacity for this task, and that the learning rate schedule is well-calibrated.
Table~\ref{tab:judge_loss} reports the loss trajectory at key checkpoints, and Figure~\ref{fig:judge_training_loss} visualizes the full training curve.

\begin{table}[h]
\centering
\small
\setlength{\tabcolsep}{4pt}
\renewcommand{\arraystretch}{1.10}
\caption{Training loss trajectory for \textsc{RoboAlign-Judge}. The model converges within 3 epochs and stabilizes by epoch 5.}
\label{tab:judge_loss}
\fitwidth{%
\begin{tabular}{@{}cccccccccc@{}}
\toprule
\rowcolor{tblheadbg}
\textbf{Epoch} & 0.27 & 0.53 & 0.80 & 1.05 & 2.11 & 3.16 & 4.21 & 5.00 \\
\midrule
\rowcolor{tblstripe}
\textbf{Loss} & 0.762 & 0.209 & 0.066 & 0.036 & 0.025 & 0.021 & 0.018 & 0.016 \\
\bottomrule
\end{tabular}}
\end{table}

\begin{figure}[h]
\centering
\includegraphics[width=0.85\linewidth]{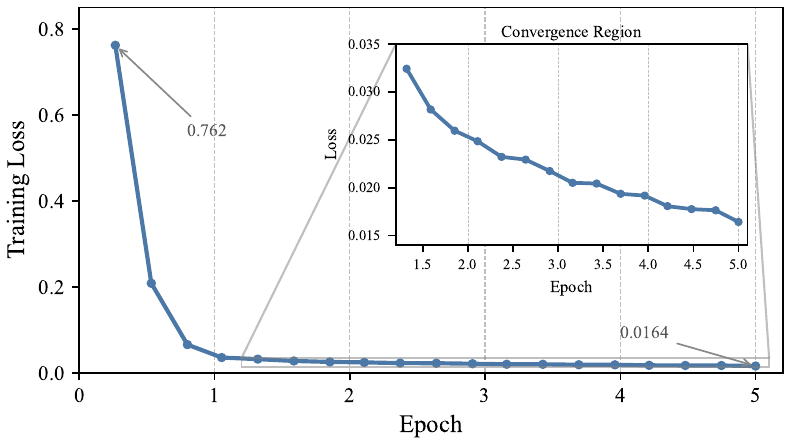}
\caption{Training loss curve for \textsc{RoboAlign-Judge} LoRA fine-tuning over 5 epochs. The inset shows the convergence region (epochs 1--5). Loss drops rapidly in the first epoch and stabilizes around 0.016.}
\label{fig:judge_training_loss}
\end{figure}

\FloatBarrier
\subsection{Chain-of-Thought Reasoning}
\label{app:judge_cot}

A distinctive feature of \textsc{RoboAlign-Judge} is its use of explicit chain-of-thought (CoT) reasoning before producing scores.
By leveraging the ``Thinking'' capability of Qwen3-VL, the judge first generates a detailed analysis within \texttt{<think>}...\texttt{</think>} tags, examining each evaluation dimension in sequence:
\begin{itemize}[leftmargin=*,nosep]
    \item Whether the robot's motion trajectory matches the instructed action;
    \item Whether the manipulation target is correctly identified and reached;
    \item Whether object contacts appear physically natural (no penetration or floating);
    \item Whether the video maintains temporal coherence without flickering or jumps;
    \item Whether the final state is consistent with the expected action outcome.
\end{itemize}
This reasoning trace serves two purposes: (1)~it improves scoring accuracy by forcing the model to attend to each dimension before committing to a score, and (2)~it provides interpretable justifications that can be inspected during reward debugging and used to diagnose failure modes in the world model's predictions.

\subsection{From Teacher Judge to Reward Signal}
\label{app:judge_to_reward}

\textsc{RoboAlign-Judge} serves as the teacher $f_\phi$ in the reward distillation pipeline (\S\ref{sec:reward}).
Given a language instruction $l$ and a generated video $v$, the judge produces a structured raw score vector $\hat{\mathbf{r}} = f_\phi(l, v)$ across the six dimensions, using the original rubric ranges defined in Table~\ref{tab:judge_rubric}. For student distillation, these raw teacher scores are normalized dimension-wise to $[0,1]$ before regression.
These teacher scores are used in two ways:

\begin{enumerate}[leftmargin=*,nosep,label=(\roman*)]
    \item \textbf{Offline labeling for student distillation.} The teacher scores a mixed corpus consisting of benchmark videos together with generated rollouts from baseline or current world models to create the distillation dataset $\{(l_i, v_i, f_\phi(l_i, v_i))\}$, which trains the lightweight student reward model $g_\psi$ via regression (Eq.~\ref{eq:distill}).
    \item \textbf{Online iterative calibration.} Every $K$ policy updates during GRPO post-training, fresh rollouts from the current world model are scored by the teacher and used to update the student, preventing reward hacking from distributional shift (Algorithm~\ref{alg:train}, Stage 4).
\end{enumerate}

The teacher's autoregressive decoding cost (${\sim}$2.8 seconds per video on a single A100) makes it impractical as a direct online reward.
The distilled student reduces this to a single forward pass (${\sim}$20\,ms), achieving $>$10$\times$ speedup while maintaining high correlation with teacher judgments.

\subsection{Comparison with Alternative VLM Judges}
\label{app:judge_vlm_comparison}

A central question is whether the main findings remain stable under independent model-based evaluation, rather than depending on a single in-domain judge. To answer this question, we conduct an external VLM-based validation by comparing \textsc{RoboAlign-Judge} against 8 alternative VLM judges---4 proprietary and 4 open-source---on a held-out test set of 50 human-annotated samples from \textsc{RobotWorldBench}.

\paragraph{Evaluation protocol.}
The 50 test samples are drawn from both data sources (25 from synthetic degradation, 25 from real I2V model outputs) and span the full $[0, 10]$ score range.
Each sample has human annotations across all six dimensions and is used solely for independent cross-checking rather than for training the main automatic evaluator.
All VLM judges receive the identical system prompt (Appendix~\ref{app:judge_prompt}), the same input format (instruction + initial frame + 8 sampled frames), and are asked to produce the same structured JSON output.
For proprietary models, we use the official API with default parameters; for open-source models, we use greedy decoding ($T{=}0$) to minimize variance; for \textsc{RoboAlign-Judge}, we report results with both greedy ($T{=}0$) and sampling ($T{=}0.6$, top-$p{=}0.95$) decoding.

\paragraph{Metrics.}
We evaluate judge quality along five complementary axes:
\begin{enumerate}[leftmargin=*,nosep,label=(\roman*)]
    \item \textbf{Pearson $\rho$}: linear correlation between predicted and human total scores, measuring absolute scoring accuracy.
    \item \textbf{Spearman $\rho_s$}: rank correlation between predicted and human total scores, measuring ordinal consistency---critical for reward-based ranking in GRPO.
    \item \textbf{Per-dimension MAE}: mean absolute error averaged across all six dimensions, measuring fine-grained scoring precision.
    \item \textbf{Pairwise Accuracy}: given all $\binom{50}{2} = 1{,}225$ video pairs, the fraction where the judge's relative ordering agrees with the human ranking---directly measuring the quality of the reward signal for preference-based optimization.
    \item \textbf{Inference Cost}: wall-clock time per video on a single NVIDIA A100 40GB GPU (or API latency for proprietary models), measuring practical feasibility as a teacher labeler.
\end{enumerate}

\paragraph{Results.}
Table~\ref{tab:judge_vlm_comparison} reports the full comparison. We view this table as an independent external cross-check of judge consistency on a held-out human-annotated subset, rather than as a replacement for human evaluation itself. Under this protocol, \textsc{RoboAlign-Judge} remains the most aligned with the held-out annotations while maintaining the same inference cost as its base model.

\begin{table}[h]
\centering
\small
\setlength{\tabcolsep}{3pt}
\renewcommand{\arraystretch}{1.10}
\caption{\textbf{Comparison of VLM judges on 50 human-annotated test samples from \textsc{RobotWorldBench}.} All zero-shot judges use the same prompt template. Best results per metric are \textbf{bolded}; second-best are \underline{underlined}. $\dagger$: API latency includes network overhead.}
\label{tab:judge_vlm_comparison}
\fitwidth{%
\begin{tabular}{@{}llccccc@{}}
\toprule
\rowcolor{tblheadbg}
\textbf{Judge} & \textbf{Type} & \textbf{Pearson $\rho$ $\uparrow$} & \textbf{Spearman $\rho_s$ $\uparrow$} & \textbf{MAE $\downarrow$} & \textbf{Pair Acc. $\uparrow$} & \textbf{Cost (s)} $\downarrow$ \\
\midrule
\multicolumn{7}{l}{\textit{Proprietary models (zero-shot)}} \\
\rowcolor{tblstripe}
GPT-4o              & API$^\dagger$ & 0.72 & 0.68 & 0.82 & 71.3\% & ${\sim}$8.5 \\
GPT-4.1             & API$^\dagger$ & \underline{0.75} & \underline{0.71} & \underline{0.76} & \underline{73.8\%} & ${\sim}$7.2 \\
\rowcolor{tblstripe}
Gemini 2.5 Pro      & API$^\dagger$ & 0.70 & 0.66 & 0.85 & 69.5\% & ${\sim}$6.8 \\
Claude 3.7 Sonnet   & API$^\dagger$ & 0.68 & 0.64 & 0.89 & 67.2\% & ${\sim}$9.1 \\
\midrule
\multicolumn{7}{l}{\textit{Open-source models (zero-shot)}} \\
\rowcolor{tblstripe}
Qwen3-VL-8B-Thinking & Local & 0.58 & 0.54 & 1.12 & 62.4\% & ${\sim}$2.8 \\
Qwen2.5-VL-7B       & Local & 0.52 & 0.48 & 1.28 & 58.1\% & ${\sim}$2.5 \\
\rowcolor{tblstripe}
InternVL2.5-8B       & Local & 0.49 & 0.45 & 1.35 & 56.7\% & ${\sim}$2.6 \\
LLaVA-OneVision-7B   & Local & 0.44 & 0.40 & 1.48 & 53.2\% & ${\sim}$2.4 \\
\midrule
\multicolumn{7}{l}{\textit{Fine-tuned (ours)}} \\
\rowcolor{mydarkblue!6}
\textbf{\textsc{RoboAlign-Judge}} & Local & \textbf{0.89} & \textbf{0.86} & \textbf{0.41} & \textbf{87.6\%} & ${\sim}$2.8 \\
\bottomrule
\end{tabular}}
\end{table}

\paragraph{Analysis.}
Several key observations emerge from Table~\ref{tab:judge_vlm_comparison} and Figure~\ref{fig:judge_vlm_scatter}:

\begin{enumerate}[leftmargin=*,nosep,label=(\arabic*)]
    \item \textbf{Fine-tuning dramatically outperforms zero-shot.}
    \textsc{RoboAlign-Judge} achieves Pearson $\rho = 0.89$, surpassing the best proprietary model GPT-4.1 ($\rho = 0.75$) by $+0.14$ and its own base model Qwen3-VL-8B-Thinking ($\rho = 0.58$) by $+0.31$.
    This $+53\%$ relative improvement over the base model demonstrates that domain-specific LoRA fine-tuning on robot manipulation data is essential---general VLMs lack the calibration needed for fine-grained physical and task-level assessment.

    \item \textbf{This serves as an independent model-based cross-check.}
    The Pairwise Accuracy of 87.6\% means that in ${\sim}$88\% of video pairs, \textsc{RoboAlign-Judge} correctly identifies which video is better---a prerequisite for effective GRPO optimization.
    In contrast, GPT-4.1 achieves only 73.8\%, meaning ${\sim}$26\% of pairwise comparisons would provide incorrect gradient signals during RL training.

    \item \textbf{Open-source zero-shot models are insufficient.}
    All open-source models score below $\rho = 0.60$ and Pairwise Accuracy below 63\%, with particularly poor performance on physics-related dimensions (TC, CR, PA).
    This is expected: these models are trained on general visual understanding tasks and lack exposure to the specific failure modes of robot manipulation videos.

    \item \textbf{Proprietary models provide complementary evidence but remain impractical at scale.}
    While GPT-4.1 and GPT-4o achieve reasonable correlation ($\rho \ge 0.72$), their API latency (${\sim}$7--9 seconds) and per-query cost make them infeasible for large-scale teacher labeling.
    Labeling 10,000 rollouts would cost ${\sim}\$500$--$\$1{,}000$ and take ${\sim}$20 hours sequentially, compared to ${\sim}\$0$ and ${\sim}$8 hours for \textsc{RoboAlign-Judge} on local GPUs with parallelism.

    \item \textbf{Per-dimension analysis reveals domain gaps.}
    Figure~\ref{fig:judge_vlm_perdim_mae} shows that the largest improvements from fine-tuning occur in \emph{Physics Adherence} (MAE: 1.42 $\to$ 0.38) and \emph{Temporal Consistency} (MAE: 1.18 $\to$ 0.29), precisely the dimensions that require understanding of robot-specific physical dynamics.
    General VLMs perform relatively better on \emph{Instruction Following} (a more semantic/linguistic dimension), but still lag behind the fine-tuned judge.
\end{enumerate}

\begin{figure}[h]
\centering
\begin{minipage}[t]{0.48\linewidth}
    \centering
    \includegraphics[width=\linewidth]{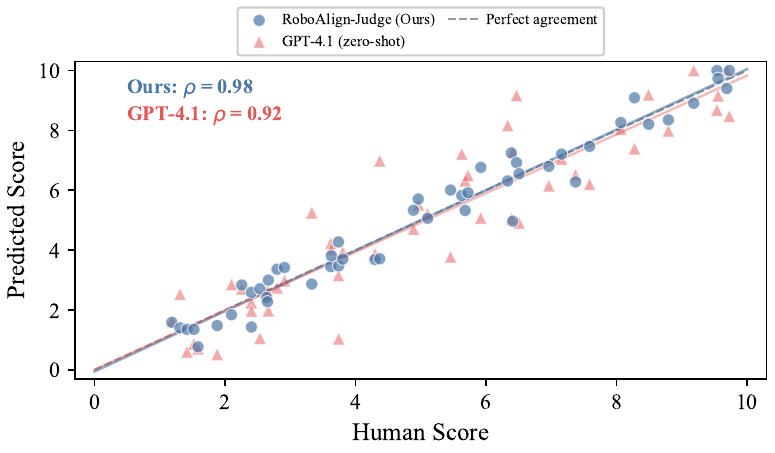}
    \caption{Predicted vs.\ human total scores for \textsc{RoboAlign-Judge} (blue) and GPT-4.1 (orange) on 50 test samples. The dashed line indicates perfect agreement. \textsc{RoboAlign-Judge} shows tighter clustering around the diagonal.}
    \label{fig:judge_vlm_scatter}
\end{minipage}
\hfill
\begin{minipage}[t]{0.48\linewidth}
    \centering
    \includegraphics[width=\linewidth]{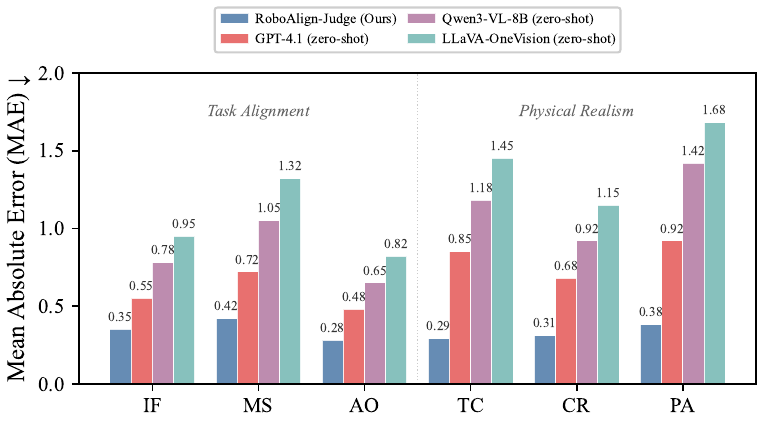}
    \caption{Per-dimension MAE comparison across judge types. Fine-tuning yields the largest improvements on physics-related dimensions (TC, CR, PA), where general VLMs lack domain knowledge.}
    \label{fig:judge_vlm_perdim_mae}
\end{minipage}
\end{figure}

\begin{figure}[h]
\centering
\includegraphics[width=0.85\linewidth]{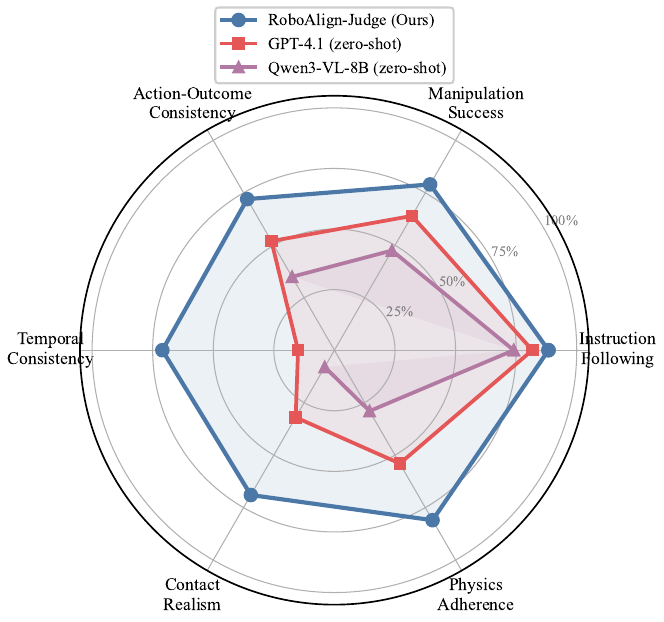}
\caption{Normalized scoring accuracy (1 $-$ MAE / max\_score) across six dimensions for representative judges. \textsc{RoboAlign-Judge} achieves near-uniform high accuracy across all dimensions, while zero-shot models show pronounced weaknesses in physical realism dimensions.}
\label{fig:judge_vlm_radar}
\end{figure}

\paragraph{Evaluation stability.}
Because \textsc{RoboAlign-Judge} uses sampling-based decoding at inference time, its output can vary slightly across runs. To measure this effect directly, we evaluate the same 50-sample test set 5 times with $T{=}0.6$ and top-$p{=}0.95$.
Figure~\ref{fig:judge_stability_box} shows that this variability is small: Pearson $\rho$ stays within 0.86--0.91 (mean 0.89, std 0.02), and Pairwise Accuracy stays within 85.2\%--89.4\% (mean 87.6\%, std 1.5\%).
Importantly, the relative ranking of all judges is unchanged across the 5 runs.
This indicates that stochastic decoding does not materially affect the judge's usefulness as a teacher model, since any residual per-sample noise is averaged out during large-batch distillation and RL training.
Figure~\ref{fig:judge_stability_box} visualizes the stability across runs.

\begin{figure}[h]
\centering
\includegraphics[width=0.9\linewidth]{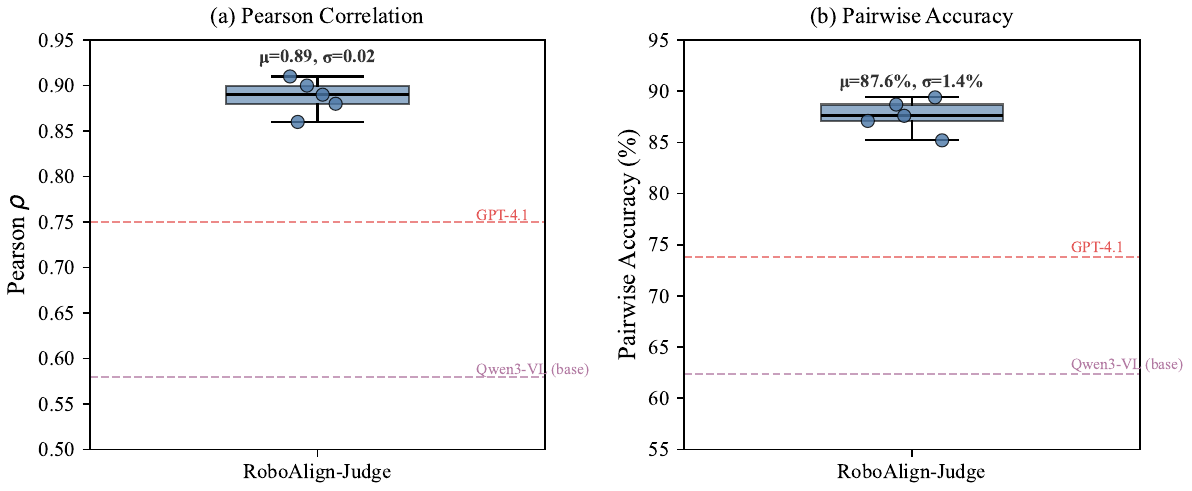}
\caption{Evaluation stability of \textsc{RoboAlign-Judge} over 5 independent runs (sampling decoding, $T{=}0.6$). Box plots show the distribution of Pearson $\rho$ and Pairwise Accuracy. The narrow interquartile ranges confirm reliable scoring despite stochastic decoding.}
\label{fig:judge_stability_box}
\end{figure}

\FloatBarrier
\subsection{Small-scale Blinded Human Evaluation}
\label{app:human_eval_template}

To complement the automatic judge-based evaluation, we conduct a small-scale blinded human study on a held-out subset. We sample instruction--initial-frame pairs that are disjoint from judge training and reward-distillation data and compare videos from \method{} against strong baselines under anonymized ordering. Each comparison is evaluated along four axes: \emph{overall preference}, \emph{task success}, \emph{physical plausibility}, and \emph{temporal coherence}. Each sample is labeled by multiple raters, and tie decisions are allowed when differences are not visually distinguishable.

Table~\ref{tab:human_eval_template} summarizes the results. Across all pairwise comparisons, \method{} is preferred over strong baselines on overall quality and consistently receives higher ratings on task success and physical plausibility. The human ranking is broadly consistent with the automatic \textsc{RoboAlign-Judge} results, providing complementary evidence that the improvements are not unique to a single in-domain evaluator.

\begin{table*}[t]
\centering
\small
\setlength{\tabcolsep}{5pt}
\renewcommand{\arraystretch}{1.10}
\caption{\textbf{Small-scale blinded human evaluation.} Pairwise human preference on a held-out subset comparing \method{} against strong baselines. Higher is better for win rates and mean preference scores.}
\label{tab:human_eval_template}
\fitwidth{%
\begin{tabular}{@{}lcccccc@{}}
\toprule
\rowcolor{tblheadbg}
\textbf{Comparison} & \textbf{Overall win rate $\uparrow$} & \textbf{Task success $\uparrow$} & \textbf{Physical plausibility $\uparrow$} & \textbf{Temporal consistency $\uparrow$} & \textbf{Tie rate} & \textbf{Mean pref. score $\uparrow$} \\
\midrule
\rowcolor{tblstripe}
\method{} vs.\ iVideoGPT & 66.7\% & 69.3\% & 71.3\% & 63.3\% & 10.0\% & 3.96 / 5 \\
\method{} vs.\ Wan2.2-TI2V-5B (LoRA) & 62.0\% & 64.7\% & 66.7\% & 59.3\% & 12.7\% & 3.82 / 5 \\
\rowcolor{tblstripe}
\method{} vs.\ RLVR-World & 59.3\% & 61.3\% & 63.3\% & 57.3\% & 14.0\% & 3.74 / 5 \\
\bottomrule
\end{tabular}}
\end{table*}

\FloatBarrier
\subsection{Comparison with Alternative Judge Backbones}
\label{app:judge_comparison}

Beyond the zero-shot comparison above, we also evaluate the effect of the \emph{backbone choice} for fine-tuning.
We select Qwen3-VL-8B-Thinking over alternative backbones based on preliminary experiments on a 50-sample validation set.
Table~\ref{tab:judge_backbone} summarizes the comparison.

\begin{table}[h]
\centering
\small
\setlength{\tabcolsep}{3pt}
\renewcommand{\arraystretch}{1.10}
\caption{Comparison of candidate judge backbones. Qwen3-VL-8B-Thinking achieves the best balance of scoring accuracy, reasoning quality, and inference efficiency.}
\label{tab:judge_backbone}
\fitwidth{%
\begin{tabular}{@{}lcccc@{}}
\toprule
\rowcolor{tblheadbg}
\textbf{Backbone} & \textbf{Params} & \textbf{CoT Reasoning} & \textbf{Multi-frame} & \textbf{Selected} \\
\midrule
\rowcolor{tblstripe}
GPT-4o (proprietary)        & ---   & \checkmark & \checkmark & \ding{55} (cost, API latency) \\
Gemini 2.5 (proprietary)    & ---   & \checkmark & \checkmark & \ding{55} (cost, reproducibility) \\
\rowcolor{tblstripe}
Qwen2.5-VL-7B               & 7B    & \ding{55}  & \checkmark & \ding{55} (no native CoT) \\
InternVL2.5-8B               & 8B    & \ding{55}  & \checkmark & \ding{55} (no native CoT) \\
\rowcolor{tblstripe}
LLaVA-OneVision-7B           & 7B    & \ding{55}  & \checkmark & \ding{55} (limited video) \\
\rowcolor{mydarkblue!6}
\textbf{Qwen3-VL-8B-Thinking} & \textbf{8B} & \checkmark & \checkmark & \checkmark \\
\bottomrule
\end{tabular}}
\end{table}

Key advantages of Qwen3-VL-8B-Thinking:
(1)~native \texttt{<think>} token support enables structured CoT without prompt engineering;
(2)~dynamic-resolution multi-frame input avoids the need for fixed-size frame preprocessing;
(3)~open-weight availability ensures full reproducibility and enables LoRA fine-tuning on custom data;
(4)~8B scale provides sufficient capacity for nuanced physical reasoning while remaining efficient for batch labeling.

\subsection{Limitations and Future Directions}
\label{app:judge_limitations}

We acknowledge several limitations of the current \textsc{RoboAlign-Judge}:
\begin{itemize}[leftmargin=*,nosep]
    \item \textbf{Dataset coverage.} Although the current training corpus contains 10{,}000 samples spanning four robot datasets, coverage remains uneven across embodiments, environments, and failure modes. Expanding annotation density and generative-model diversity would likely improve generalization further.
    \item \textbf{Sampling variance.} The CoT decoding introduces non-trivial variance (std $\approx$ 0.02 on Pearson $\rho$). Ensemble averaging or deterministic decoding could reduce this, at the cost of inference speed or diversity.
    \item \textbf{Domain transfer.} The current corpus is still dominated by tabletop manipulation and related interaction patterns. Extending to other embodiments (mobile manipulation, dexterous hands) requires additional domain-specific annotation and evaluation.
    \item \textbf{Temporal granularity.} The judge processes 8 uniformly sampled frames, which may miss brief but critical events (e.g., momentary contact). Adaptive frame sampling could improve sensitivity to such events.
\end{itemize}

\FloatBarrier
\subsection{Judge Prompt Templates}
\label{app:judge_prompt}

For completeness, we provide the prompt templates used for the robot manipulation judge. The prompting interface has two parts: a system prompt that specifies the evaluator role and scoring rubric, and a user message template that injects the task instruction, the initial frame, and uniformly sampled generated frames.

\paragraph{Prompt structure.}
\begin{center}
\small
\begin{tabular}{p{0.22\linewidth}p{0.7\linewidth}}
\toprule
Component & Role \\
\midrule
System prompt & Defines the judge identity, the six evaluation dimensions, and the required JSON output schema. \\
User message & Instantiates the instruction, initial frame, and $N$ sampled video frames for a concrete evaluation example. \\
\bottomrule
\end{tabular}
\end{center}

\paragraph{Scoring rubric.}
\begin{center}
\footnotesize
\setlength{\tabcolsep}{4pt}
\renewcommand{\arraystretch}{1.05}
\begin{tabular}{p{0.29\linewidth}cp{0.56\linewidth}}
\toprule
Dimension & Range & Criterion \\
\midrule
Instruction Following & 0--3 & Whether the robot attempts and correctly executes the instructed action. \\
Manipulation Success & 0--2 & Whether the manipulation objective is fully completed by the end of the video. \\
Action--Outcome Consistency & 0--1 & Whether the observed outcomes are logically consistent with the robot's actions. \\
Temporal Consistency & 0--1 & Whether the video remains smooth and free of major temporal artifacts. \\
Contact Realism & 0--1 & Whether object contacts appear physically natural. \\
Physics Adherence & 0--2 & Whether the generated motion obeys basic physical plausibility. \\
\midrule
Total & 0--10 & Sum of the six dimension scores. \\
\bottomrule
\end{tabular}
\end{center}

\begin{contribbox}
\textbf{System Prompt.}

{\ttfamily\footnotesize
You are an expert evaluator for robot manipulation video generation quality.\\
You will be given:\\
1. An instruction describing what the robot should do\\
2. An initial frame (the starting state)\\
3. A sequence of frames from a generated video\\[0.4em]
Your task is to evaluate the generated video across 6 dimensions. Score each dimension carefully.\\[0.4em]
Scoring Dimensions:\\
1. Instruction Following (0-3 points): Does the video show the robot attempting and executing the action described in the instruction?\\
\quad 0: Completely unrelated action\\
\quad 1: Vaguely related but wrong action\\
\quad 2: Correct action but incomplete or imprecise\\
\quad 3: Perfectly follows the instruction\\[0.3em]
2. Manipulation Success (0-2 points): Is the manipulation task successfully completed by the end of the video?\\
\quad 0: Task completely failed\\
\quad 1: Partial success (object moved but not to target)\\
\quad 2: Full success (task completed as instructed)\\[0.3em]
3. Action-Outcome Consistency (0-1 point): Are the robot's actions logically consistent with the observed outcomes?\\
\quad 0: Actions and outcomes are inconsistent\\
\quad 1: Actions and outcomes are consistent\\[0.3em]
4. Temporal Consistency (0-1 point): Is the video temporally coherent without flickering, sudden jumps, or artifacts?\\
\quad 0: Severe temporal artifacts\\
\quad 1: Smooth and temporally consistent\\[0.3em]
5. Contact Realism (0-1 point): When the robot contacts objects, does it look physically realistic?\\
\quad 0: Unrealistic contacts\\
\quad 1: Contacts look natural and realistic\\[0.3em]
6. Physics Adherence (0-2 points): Does the video obey basic physics?\\
\quad 0: Severe physics violations\\
\quad 1: Minor physics issues but mostly plausible\\
\quad 2: Fully physically plausible\\[0.4em]
Output Format:\\
First, think step by step about what you observe in the video. Then output your evaluation as a JSON object:\\
\{\\
\quad "reasoning": "Your detailed analysis",\\
\quad "instruction\_following": <0-3>,\\
\quad "manipulation\_success": <0-2>,\\
\quad "action\_outcome\_consistency": <0-1>,\\
\quad "temporal\_consistency": <0-1>,\\
\quad "contact\_realism": <0-1>,\\
\quad "physics\_adherence": <0-2>,\\
\quad "total": <0-10>\\
\}
}
\end{contribbox}

\begin{problembox}
\textbf{User Message Template.}

{\ttfamily\footnotesize
Instruction: \{instruction\}\\[0.35em]
Initial Frame (starting state):\\
\mbox{[Initial frame image]}\\[0.35em]
Generated Video Frames (\{N\} frames sampled uniformly):\\
\mbox{[Frame 1] [Frame 2] ... [Frame N]}\\[0.35em]
Please evaluate this generated video based on the 6 dimensions described.
}
\end{problembox}

\end{document}